\newif\ifmydraft
\newif\ifarXiv
\newif\ifanon
  \newcommand{\draftcolor}{purple}
  \newcommand{\draftcolorpage}{black!20}
  \newcommand{\jl}[1]{\color{blue}#1\color{black}}
  \newcommand{\later}[1]{\color{olive}#1\color{black}}
  \newcommand{\draftcolor}{black}
  \newcommand{\jl}[1]{}
  \newcommand{\later}[1]{}
   \tikzset{
     included node/.style={circle, draw=black!100, thick, on grid, minimum width=13mm, fill=black!20}, % on grid added
    hidden node/.style={circle, draw=white, thick, on grid, minimum width=0.2cm},
    included connection/.style={->, thick, draw=black!100},
    hidden connection/.style={->, thick, draw=black!30, draw opacity=0},
    fused connection/.style={->, thick, draw=black!100},
    fidden connection/.style={->, thick, draw=black!30, draw opacity=0},
    node title/.style={above=0.8cm of five-one, font=\bfseries},
    general/.style={node distance=1cm and 0.7cm}
}
\theoremstyle{definition}
\newtheorem{lemma}{Lemma}
\newtheorem{example}{Example}
\newtheorem{definition}{Definition}
\DeclareMathOperator*{\argmax}{arg\,max}
\DeclareMathOperator*{\argmin}{arg\,min}
\newcommand*{\deq}{\ensuremath{\mathrel{\rlap{%
\raisebox{0.3ex}{$\m@th\cdot$}}%
\raisebox{-0.3ex}{$\m@th\cdot$}}=}}
\newcommand{\sign}{\operatorname{sign}}
\newcommand{\signF}[1]{\sign[#1]}
\newcommand{\inv}{^{-1}}
\newcommand{\tp}{^\top}
\newcommand{\abs}[1]{\lvert#1\rvert}
\newcommand{\absBB}[1]{\Bigl\lvert#1\Bigr\rvert}
\newcommand{\norm}[1]{|\!|#1|\!|}
\newcommand{\normtwo}[1]{\norm{#1}_2}
\newcommand{\nbrlayers}{\textcolor{\draftcolor}{\ensuremath{l}}}
\newcommand{\nbrsamples}{\textcolor{\draftcolor}{\ensuremath{n}}}
\newcommand{\nbrinput}{\textcolor{\draftcolor}{\ensuremath{d}}}
\newcommand{\nbrwidth}{\textcolor{\draftcolor}{\ensuremath{w}}}
\newcommand{\datainE}{\textcolor{\draftcolor}{\ensuremath{x}}}
\newcommand{\datain}{\textcolor{\draftcolor}{\boldsymbol{\datainE}}}
\newcommand{\dataoutE}{\textcolor{\draftcolor}{\ensuremath{y}}}
\newcommand{\argument}{\ensuremath{\textcolor{\draftcolor}{z}}}
\newcommand{\argumentP}{\ensuremath{\textcolor{\draftcolor}{w}}}
\newcommand{\argumentPZ}{\ensuremath{\textcolor{\draftcolor}{\argumentP_0}}}
\newcommand{\function}{\textcolor{\draftcolor}{\ensuremath{\mathfrak f}}}
\newcommand{\functionG}{\textcolor{\draftcolor}{\ensuremath{\mathfrak g}}}
\newcommand{\functionF}[1]{\textcolor{\draftcolor}{\function[#1]}}
\newcommand{\functionGF}[1]{\textcolor{\draftcolor}{\functionG[#1]}}
\newcommand{\functionFkBBBB}[1]{\textcolor{\draftcolor}{\function^k\Biggl[#1\Biggr]}}
\newcommand{\functionFBBBB}[1]{\textcolor{\draftcolor}{\function\Biggl[#1\Biggr]}}
\newcommand{\functionGFBBBB}[1]{\textcolor{\draftcolor}{\functionG\hspace{-1mm}\left[#1\right]}}
\newcommand{\functionP}{\ensuremath{\textcolor{\draftcolor}{\mathfrak g}}}
\newcommand{\functionPF}[1]{\ensuremath{\textcolor{\draftcolor}{\functionP[#1]}}}
\newcommand{\functionPFBBB}[1]{\ensuremath{\textcolor{\draftcolor}{\functionP\biggl[#1\biggr]}}}
\newcommand{\functionD}{\textcolor{\draftcolor}{\dot\function}}
\newcommand{\functionDF}[1]{\textcolor{\draftcolor}{\functionD[#1]}}
\newcommand{\functionGD}{\textcolor{\draftcolor}{\dot\functionG}}
\newcommand{\functionGDF}[1]{\textcolor{\draftcolor}{\functionGD[#1]}}
\newcommand{\functionPD}{\textcolor{\draftcolor}{\dot\functionP}}
\newcommand{\functionPDF}[1]{\textcolor{\draftcolor}{\functionPD[#1]}}
\newcommand{\functionDD}{\ensuremath{\textcolor{\draftcolor}{\ddot\function}}}
\newcommand{\functionDDF}[1]{\ensuremath{\textcolor{\draftcolor}{\functionDD[#1]}}}
\newcommand{\functionDDD}{\ensuremath{\textcolor{\draftcolor}{\dddot\function}}}
\newcommand{\functionDDDD}{\ensuremath{\textcolor{\draftcolor}{\ddddot\function}}}
\newcommand{\R}{\textcolor{\draftcolor}{\mathbb{R}}}
\newcommand{\direction}{\textcolor{\draftcolor}{ \ensuremath{v}}}
\newcommand{\diffdirectional}{\textcolor{\draftcolor}{\ensuremath{d_{\direction}}}}
\newcommand{\diffdirectionalM}{\textcolor{\draftcolor}{\ensuremath{d_{-\direction}}}}
\newcommand{\directionlim}{\textcolor{\draftcolor}{\ensuremath{t}}}
\newcommand{\directionlimZ}{\textcolor{\draftcolor}{\ensuremath{\bar t}}}
\newcommand{\functionpar}{\textcolor{\draftcolor}{\ensuremath{a}}}
\newcommand{\functionparP}{\textcolor{\draftcolor}{\ensuremath{b}}}
\newcommand{\nameswish}{\textcolor{\draftcolor}{\texttt{swish}}}
\newcommand{\nameswishf}{\textcolor{\draftcolor}{\operatorname{swish}}}
\newcommand{\namesoftplus}{\textcolor{\draftcolor}{\texttt{softplus}}}
\newcommand{\namesoftplusf}{\textcolor{\draftcolor}{\operatorname{soft+}}}
\newcommand{\namelog}{\textcolor{\draftcolor}{\texttt{logistic}}}
\newcommand{\nametanh}{\textcolor{\draftcolor}{\texttt{tanh}}}
\newcommand{\namerelu}{\textcolor{\draftcolor}{\texttt{relu}}}
\newcommand{\nameselu}{\textcolor{\draftcolor}{\texttt{selu}}}
\newcommand{\nameelu}{\textcolor{\draftcolor}{\texttt{elu}}}
\newcommand{\namelrelu}{\textcolor{\draftcolor}{\texttt{leakyrelu}}}
\newcommand{\nameprelu}{\textcolor{\draftcolor}{\texttt{prelu}}}
\newcommand{\namepelu}{\textcolor{\draftcolor}{\texttt{pelu}}}
\newcommand{\namemaxout}{\textcolor{\draftcolor}{\texttt{maxout}}}
\newcommand{\nameelliott}{\textcolor{\draftcolor}{\texttt{elliottsig}}}
\newcommand{\namesoft}{\textcolor{\draftcolor}{\texttt{softsign}}}
\newcommand{\namesoftf}{\textcolor{\draftcolor}{\operatorname{soft}}}
\newcommand{\namearctan}{\textcolor{\draftcolor}{\texttt{arctan}}}
\newcommand{\namerrelu}{\textcolor{\draftcolor}{\texttt{rrelu}}}
\newcommand{\namelinear}{\textcolor{\draftcolor}{\texttt{linear}}}
\newcommand{\namesil}{\textcolor{\draftcolor}{\texttt{sil}}}
\newcommand{\namemish}{\textcolor{\draftcolor}{\texttt{mish}}}
\newcommand{\nameelish}{\textcolor{\draftcolor}{\texttt{elish}}}
\newcommand{\functionbinary}{\textcolor{\draftcolor}{\ensuremath{\function_{\operatorname{binary}}}}}
\newcommand{\functionlog}{\textcolor{\draftcolor}{\ensuremath{\function_{\operatorname{log}}}}}
\newcommand{\functionlogF}[1]{\textcolor{\draftcolor}{\functionlog[#1]}}
\newcommand{\functionlogD}{\textcolor{\draftcolor}{\ensuremath{\functionD_{\operatorname{log}}}}}
\newcommand{\functionlogDF}[1]{\textcolor{\draftcolor}{\functionlogD[#1]}}
\newcommand{\functionlogDD}{\textcolor{\draftcolor}{\ensuremath{\functionDD_{\operatorname{log}}}}}
\newcommand{\functionlogDDF}[1]{\textcolor{\draftcolor}{\functionlogDD[#1]}}
\newcommand{\functionlogDDD}{\textcolor{\draftcolor}{\ensuremath{\functionDDD_{\operatorname{log}}}}}
\newcommand{\functionlogDDDF}[1]{\textcolor{\draftcolor}{\functionlogDDD[#1]}}
\newcommand{\functionarctan}{\textcolor{\draftcolor}{\ensuremath{\function_{\operatorname{arctan}}}}}
\newcommand{\functionarctanF}[1]{\textcolor{\draftcolor}{\functionarctan[#1]}}
\newcommand{\functionarctanD}{\textcolor{\draftcolor}{\ensuremath{\functionD_{\operatorname{arctan}}}}}
\newcommand{\functionarctanDF}[1]{\textcolor{\draftcolor}{\functionarctanD[#1]}}
\newcommand{\functionarctanDD}{\textcolor{\draftcolor}{\ensuremath{\functionDD_{\operatorname{arctan}}}}}
\newcommand{\functionarctanDDF}[1]{\textcolor{\draftcolor}{\functionarctanDD[#1]}}
\newcommand{\functionarctanDDD}{\textcolor{\draftcolor}{\ensuremath{\functionDDD_{\operatorname{arctan}}}}}
\newcommand{\functionarctanDDDF}[1]{\textcolor{\draftcolor}{\functionarctanDDD[#1]}}
\newcommand{\functionarctanDDDD}{\textcolor{\draftcolor}{\ensuremath{\functionDDDD_{\operatorname{arctan}}}}}
\newcommand{\functionarctanDDDDF}[1]{\textcolor{\draftcolor}{\functionarctanDDDD[#1]}}
\newcommand{\functionlrelu}{\textcolor{\draftcolor}{\ensuremath{\function_{\operatorname{lrelu,\functionpar}}}}}
\newcommand{\functionlreluF}[1]{\textcolor{\draftcolor}{\functionlrelu[#1]}}
\newcommand{\functionlreluD}{\textcolor{\draftcolor}{\ensuremath{\functionD_{\operatorname{lrelu,\functionpar}}}}}
\newcommand{\functionlreluDF}[1]{\textcolor{\draftcolor}{\functionlreluD[#1]}}
\newcommand{\functionlreluDD}{\textcolor{\draftcolor}{\ensuremath{\functionDD_{\operatorname{lrelu,\functionpar}}}}}
\newcommand{\functionlreluDDF}[1]{\textcolor{\draftcolor}{\functionlreluDD[#1]}}
\newcommand{\functiontanh}{\textcolor{\draftcolor}{\ensuremath{\function_{\operatorname{tanh}}}}}
\newcommand{\functiontanhF}[1]{\textcolor{\draftcolor}{\functiontanh[#1]}}
\newcommand{\functiontanhFB}[1]{\textcolor{\draftcolor}{\functiontanh\bigl[#1\bigr]}}
\newcommand{\functiontanhFBB}[1]{\textcolor{\draftcolor}{\functiontanh\Bigl[#1\Bigr]}}
\newcommand{\functiontanhD}{\textcolor{\draftcolor}{\ensuremath{\functionD_{\operatorname{tanh}}}}}
\newcommand{\functiontanhDF}[1]{\textcolor{\draftcolor}{\functiontanhD[#1]}}
\newcommand{\functiontanhDD}{\textcolor{\draftcolor}{\ensuremath{\functionDD_{\operatorname{tanh}}}}}
\newcommand{\functiontanhDDF}[1]{\textcolor{\draftcolor}{\functiontanhDD[#1]}}
\newcommand{\functiontanhDDD}{\textcolor{\draftcolor}{\ensuremath{\functionDDD_{\operatorname{tanh}}}}}
\newcommand{\functiontanhDDDF}[1]{\textcolor{\draftcolor}{\functiontanhDDD[#1]}}
\newcommand{\functiontanhDDDD}{\textcolor{\draftcolor}{\ensuremath{\functionDDDD_{\operatorname{tanh}}}}}
\newcommand{\functiontanhDDDDF}[1]{\textcolor{\draftcolor}{\functiontanhDDDD[#1]}}
\newcommand{\functionselu}{\textcolor{\draftcolor}{\ensuremath{\function_{\operatorname{selu}}}}}
\newcommand{\functionseluF}[1]{\textcolor{\draftcolor}{\functionselu[#1]}}
\newcommand{\functionsoft}{\textcolor{\draftcolor}{\ensuremath{\function_{\namesoftf}}}}
\newcommand{\functionsoftF}[1]{\textcolor{\draftcolor}{\functionsoft[#1]}}
\newcommand{\functionsoftD}{\textcolor{\draftcolor}{\ensuremath{\functionD_{\namesoftf}}}}
\newcommand{\functionsoftDF}[1]{\textcolor{\draftcolor}{\functionsoftD[#1]}}
\newcommand{\functionsoftDD}{\textcolor{\draftcolor}{\ensuremath{\functionDD_{\namesoftf}}}}
\newcommand{\functionsoftDDF}[1]{\textcolor{\draftcolor}{\functionsoftDD[#1]}}
\newcommand{\functionrelu}{\textcolor{\draftcolor}{\ensuremath{\function_{\operatorname{relu}}}}}
\newcommand{\functionreluF}[1]{\textcolor{\draftcolor}{\functionrelu[#1]}}
\newcommand{\functionreluFB}[1]{\textcolor{\draftcolor}{\functionrelu\bigl[#1\bigr]}}
\newcommand{\functionreluD}{\textcolor{\draftcolor}{\ensuremath{\functionD_{\operatorname{relu}}}}}
\newcommand{\functionreluDF}[1]{\textcolor{\draftcolor}{\functionreluD[#1]}}
\newcommand{\functionreluDD}{\textcolor{\draftcolor}{\ensuremath{\functionDD_{\operatorname{relu}}}}}
\newcommand{\functionreluDDF}[1]{\textcolor{\draftcolor}{\functionreluDD[#1]}}
\newcommand{\functionlinear}{\textcolor{\draftcolor}{\ensuremath{\function_{\operatorname{linear}}}}}
\newcommand{\functionlinearD}{\textcolor{\draftcolor}{\ensuremath{\functionD_{\operatorname{linear}}}}}
\newcommand{\functionlinearDF}[1]{\textcolor{\draftcolor}{\functionlinearD[#1]}}
\newcommand{\functionlinearDD}{\textcolor{\draftcolor}{\ensuremath{\functionDD_{\operatorname{linear}}}}}
\newcommand{\functionlinearDDF}[1]{\textcolor{\draftcolor}{\functionlinearDD[#1]}}
\newcommand{\functionsoftplus}{\textcolor{\draftcolor}{\ensuremath{\function_{\namesoftplusf}}}}
\newcommand{\functionsoftplusF}[1]{\textcolor{\draftcolor}{\functionsoftplus[#1]}}
\newcommand{\functionsoftplusD}{\textcolor{\draftcolor}{\ensuremath{\functionD_{\namesoftplusf}}}}
\newcommand{\functionsoftplusDF}[1]{\textcolor{\draftcolor}{\functionsoftplusD[#1]}}
\newcommand{\functionsoftplusDD}{\textcolor{\draftcolor}{\ensuremath{\functionDD_{\namesoftplusf}}}}
\newcommand{\functionsoftplusDDF}[1]{\textcolor{\draftcolor}{\functionsoftplusDD[#1]}}
\newcommand{\functionswish}{\textcolor{\draftcolor}{\ensuremath{\function_{\nameswishf,\functionpar}}}}
\newcommand{\functionswishF}[1]{\textcolor{\draftcolor}{\functionswish[#1]}}
\newcommand{\functionswishZ}{\textcolor{\draftcolor}{\ensuremath{\function_{\nameswishf,0}}}}
\newcommand{\functionswishZF}[1]{\textcolor{\draftcolor}{\functionswishZ[#1]}}
\newcommand{\functionswishZD}{\textcolor{\draftcolor}{\ensuremath{\functionD_{\nameswishf,0}}}}
\newcommand{\functionswishZDF}[1]{\textcolor{\draftcolor}{\functionswishZD[#1]}}
\newcommand{\functionswishO}{\textcolor{\draftcolor}{\ensuremath{\function_{\nameswishf,1}}}}
\newcommand{\functionswishOF}[1]{\textcolor{\draftcolor}{\functionswishO[#1]}}
\newcommand{\functionswishOD}{\textcolor{\draftcolor}{\ensuremath{\functionD_{\nameswish,1}}}}
\newcommand{\functionswishODF}[1]{\textcolor{\draftcolor}{\functionswishOD[#1]}}
\newcommand{\functionswishD}{\textcolor{\draftcolor}{\ensuremath{\functionD_{\nameswishf,\functionpar}}}}
\newcommand{\functionswishDF}[1]{\textcolor{\draftcolor}{\functionswishD[#1]}}
\newcommand{\functionswishDD}{\textcolor{\draftcolor}{\ensuremath{\functionDD_{\nameswishf,\functionpar}}}}
\newcommand{\functionswishDDF}[1]{\textcolor{\draftcolor}{\functionswishDD[#1]}}
\newcommand{\functionelu}{\textcolor{\draftcolor}{\ensuremath{\function_{\operatorname{elu},\functionpar}}}}
\newcommand{\functioneluz}{\textcolor{\draftcolor}{\ensuremath{\function_{\operatorname{elu},0}}}}
\newcommand{\functioneluF}[1]{\textcolor{\draftcolor}{\functionelu[#1]}}
\newcommand{\functioneluD}{\textcolor{\draftcolor}{\ensuremath{\functionD_{\operatorname{elu},\functionpar}}}}
\newcommand{\functioneluDF}[1]{\textcolor{\draftcolor}{\functioneluD[#1]}}
\newcommand{\functioneluOD}{\textcolor{\draftcolor}{\ensuremath{\functionD_{\operatorname{elu},1}}}}
\newcommand{\functioneluODF}[1]{\textcolor{\draftcolor}{\functioneluOD[#1]}}
\newcommand{\functioneluDD}{\textcolor{\draftcolor}{\ensuremath{\functionDD_{\operatorname{elu},\functionpar}}}}
\newcommand{\functioneluDDF}[1]{\textcolor{\draftcolor}{\functioneluDD[#1]}}
\newcommand{\swishbound}{\textcolor{\draftcolor}{\ensuremath{c}}}
\newcommand{\swishboundA}{\textcolor{\draftcolor}{\ensuremath{\swishbound_{\functionpar}}}}
\newcommand{\swishboundP}{\textcolor{\draftcolor}{\ensuremath{c'}}}
\newcommand{\swishboundAPPLower}{\textcolor{\draftcolor}{\underline{c}''_{\functionpar}}}
\newcommand{\swishboundAPPUpper}{\textcolor{\draftcolor}{\overline{c}''_{\functionpar}}}
\newcommand{\neuron}{\textcolor{\draftcolor}{\ensuremath{\mathfrak{n}}}}
\newcommand{\neuronparB}{\textcolor{\draftcolor}{\ensuremath{\beta}}}
\newcommand{\neuronparBB}{\textcolor{\draftcolor}{\ensuremath{\boldsymbol{\neuronparB}}}}
\newcommand{\neuronparBP}{\textcolor{\draftcolor}{\ensuremath{\zeta}}}
\newcommand{\neuronparE}{\textcolor{\draftcolor}{\ensuremath{\theta}}}
\newcommand{\neuronpar}{\textcolor{\draftcolor}{\ensuremath{\boldsymbol{\neuronparE}}}}
\newcommand{\neuronparPE}{\textcolor{\draftcolor}{\ensuremath{\gamma}}}
\newcommand{\neuronparP}{\textcolor{\draftcolor}{\ensuremath{\boldsymbol{\neuronparPE}}}}
\newcommand{\neuronA}{\textcolor{\draftcolor}{\ensuremath{\neuron_{\neuronparB,\neuronpar,\function}}}}
\newcommand{\neuronAmax}{\textcolor{\draftcolor}{\ensuremath{\tilde\neuron_{\neuronparBB,\neuronpar,k}}}}
\newcommand{\neuronAmaxO}{\textcolor{\draftcolor}{\ensuremath{\tilde\neuron_{\neuronparB,\neuronpar,1}}}}
\newcommand{\neuronAO}{\textcolor{\draftcolor}{\ensuremath{\neuron_{\neuronparB^1,\neuronpar^1,\function^1}}}}
\newcommand{\neuronAOF}[1]{\textcolor{\draftcolor}{\ensuremath{\neuronAO[#1]}}}
\newcommand{\neuronAT}{\textcolor{\draftcolor}{\ensuremath{\neuron_{\neuronparB^2,\neuronpar^2,\function^2}}}}
\newcommand{\neuronAK}{\textcolor{\draftcolor}{\ensuremath{\neuron_{\neuronparB^{\nbrwidth},\neuronpar^{\nbrwidth},\function^{\nbrwidth}}}}}
\newcommand{\neuronAKF}[1]{\textcolor{\draftcolor}{\ensuremath{\neuronAK[#1]}}}
\newcommand{\neuronALK}{\textcolor{\draftcolor}{\ensuremath{\neuron_{\neuronparB^{\nbrwidth},\neuronpar^{\nbrwidth},\functionlinear}}}}
\newcommand{\neuronALKF}[1]{\textcolor{\draftcolor}{\ensuremath{\neuronALK[#1]}}}
\newcommand{\neuronALO}{\textcolor{\draftcolor}{\ensuremath{\neuron_{\neuronparB^1,\neuronpar^{1},\functionlinear}}}}
\newcommand{\neuronALOF}[1]{\textcolor{\draftcolor}{\ensuremath{\neuronALO[#1]}}}
\newcommand{\neuronAP}{\textcolor{\draftcolor}{\ensuremath{\neuron_{\neuronparBP,\neuronparP,\functionG}}}}
\newcommand{\neuronAPL}{\textcolor{\draftcolor}{\ensuremath{\neuron_{\neuronparBP,\neuronparP,\functionlinear}}}}
\newcommand{\stepsize}{\textcolor{\draftcolor}{\ensuremath{s}}}
\newcommand{\marginterm}[1]{\todo[color=white, bordercolor=white, noline]{\color{black}{\vspace{-0.9mm}\fontsize{9}{8}\par\selectfont#1\par}}}
\newcommand{\datainEnew}{\textcolor{\draftcolor}{\ensuremath{x_{\operatorname{new}}}}}
\newcommand{\dataoutEnew}{\textcolor{\draftcolor}{\ensuremath{y_{\operatorname{new}}}}}
\newcommand{\argumentmin}{\textcolor{\draftcolor}{\argument_{\functionpar}}}
\newcommand{\argumentminP}{\textcolor{\draftcolor}{\argument_{\functionpar}'}}
\newcommand{\argumentminO}{\textcolor{\draftcolor}{\argument_{1}}}
\newcommand{\argumentminOP}{\textcolor{\draftcolor}{\argument_{1}'}}
\newcommand{\tj}[1]{\tag*{\footnotesize #1}}
\newcommand{\detail}[1]{{\scriptsize Detail: #1}}
\def\input@path{{.}{./Plots/}{../Plots/}}
\begin{document}

\title{Activation Functions in Artificial Neural Networks:\\
A Systematic Overview}

\ifanon
  \author{\name Somebody \email some@email.com\\
              \addr Some department\\
              Some school\\
              Some country}
\ShortHeadings{Some title}{Somebody}
\else
  \author{\textbf{Johannes Lederer}\\
              Department of Mathematics\\
              Ruhr-University Bochum, Germany\\
johannes.lederer@rub.de}

\maketitle

\begin{abstract}%   <- trailing '%' for backward compatibility of .sty file
Activation functions shape the outputs of artificial neurons and, therefore, are integral parts of neural networks in general and  deep learning in particular.
Some activation functions,
such as logistic and relu,
have been used for many decades.
But with deep learning becoming a mainstream research topic,
new activation functions have mushroomed,
leading to confusion in both theory and practice.
This paper provides an analytic yet up-to-date overview of popular activation functions and their properties,
which makes it a timely resource for anyone who studies or applies neural networks.
\end{abstract}

\textbf{Keywords:} neural network; deep learning; activation function; transfer function.

\section{Introduction}
Artificial neural networks were introduced  as mathematical models for biological neural networks~\citep{McCulloch1943,Rosenblatt1958}.
Modern artificial neural networks still reflect this biological motivation,
even though they are often applied in completely different contexts.
Given parameters~$\neuronparB\in\R$ and $\neuronpar\in\R^{\nbrinput}$ and a real function  $\function\,:\,\R\to\R$,
the \emph{(artificial) neuron} $\neuronA$ (we omit the word ``artificial'' when the context is clear) is  the function\marginterm{neuron}
\begin{equation}\label{neuron}
  \begin{split}
      \neuronA\ : \ \R^{\nbrinput}\,&\to\,\R\,;\\ 
    \datain\,&\mapsto\,\functionF{\neuronparB+\neuronpar\tp\datain}=\functionFBBBB{\neuronparB+\sum_{j=1}^{\nbrinput}\neuronparE_j\datainE_j}\,.
  \end{split}
\end{equation}
See the left panel of Figure~\ref{networks} for an illustration.
The weights $\neuronparE_1,\dots,\neuronparE_{\nbrinput}$ can be interpreted as the neuron's sensitivities to the different inputs and 
the bias $\neuronparB$ as the neuron's overall sensitivity;
the function~$\function$ can be interpreted as the neuron's activation pattern and, therefore, is called the \emph{activation function}\footnote{An alternative name that relates to the terminology in electrical engineering is \emph{transfer function}.} \citep{Bishop1995}.\marginterm{activation function}
Hence, 
artificial neurons resemble biological neurons in the way they translate multiple input signals  into a single output signal~\citep{Nicholls2001}.

Just as biological neurons are the basic units of biological neural networks,
artificial neurons are the basic units of artificial neural networks.
The key observation is that artificial neurons can be added and concatenated into new functions  $\R^{\nbrinput}\to\R$:
given parameters $\neuronparBP\in\R,\neuronparP\in\R^{\nbrwidth}$ and $\neuronparB^1,\dots,\neuronparB^{\nbrwidth}\in\R,\neuronpar^1,\dots,\neuronpar^{\nbrwidth}\in\R^{\nbrinput}$ and activation functions $\functionG,\function^1,\dots,\function^{\nbrwidth}\,:\,\R\to\R$,
the function\label{concatenated}
\begin{align*}
   \R^{\nbrinput}\,&\to\,\R\\ 
    \datain\,&\mapsto\,
               \neuronAP\hspace{-1mm} \begin{bmatrix}
                \neuronAOF{\datain}\\
\vdots\\
\neuronAKF{\datain}
               \end{bmatrix}=\functionGFBBBB{\neuronparBP+\sum_{k=1}^{\nbrwidth}\neuronparPE_k\functionFkBBBB{\neuronparB^k+\sum_{j=1}^{\nbrinput}(\neuronpar^k)_j\datainE_j}}
\end{align*}
is  well-defined;
see the right panel of Figure~\ref{networks} for an illustration.
Such functions can then be added and concatenated further with one another,
eventually leading to  very complex functions $\R^{\nbrinput}\to\R$.
We call these functions, and functions that  are derived from or motivated by them, \emph{(artificial) neural networks}.\marginterm{neural network}
We call the use of  neural networks that have many layers of concatenated neurons \marginterm{deep learning}\emph{deep learning} \citep{Goodfellow2016,LeCun15, Schmidhuber15}.

Since a neural network is made of neurons,
its characteristics are governed by the neurons' parameters and activation functions.
The parameters are usually fitted to training data;
in contrast,
the activation functions are usually chosen before looking at any data and remain fixed.
The literature contains an entire zoo of different activation functions,
and arguments for and against specific activation functions are often based on heuristics and anecdotal evidence.
We instead attempt a systematic and objective overview of common activation functions.
In particular,
we examine the mathematical particularities of each function and discuss their practical effects,
making our paper a useful resource for theorists and practitioners alike.

\paragraph{Overview}
The activation functions and their properties are discussed in Section~\ref{sec:activationfunctions},
and practical implications are discussed in Section~\ref{sec:discussion}.
Detailed proofs of the mathematical statements are established in the Appendix.
The \texttt{R}~code for the plots and further visualizations are given on \texttt{\href{https://github.com/LedererLab/ActivationFunctions}{github.com/LedererLab/ActivationFunctions}}.

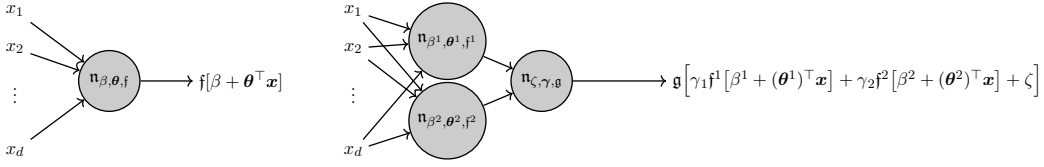
\begin{figure}[t]
\newcommand{\overallscale}{0.62}
  \centering
\scalebox{\overallscale}{\begin{tikzpicture}
\node[included node] (neuron) at (0, 0) {\neuronA};
\node (dataone) at (-2, 1.5) {$\datainE_1$};
\node (datatwo) at (-2, 0.7) {$\datainE_2$};

\node at (-2, -0.2) {$\vdots$};

\node (dataend) at (-2, -1.5) {$\datainE_{\nbrinput}$};

\node (dataout) at (2.8, 0) {$\functionF{\neuronparB+\neuronpar\tp\datain}$};

\draw [included connection] (dataone)--(neuron);
\draw [included connection] (datatwo)--(neuron);
\draw [included connection] (dataend)--(neuron);
\draw [included connection] (neuron)--(dataout);
\end{tikzpicture}}~~~~
\scalebox{\overallscale}{\begin{tikzpicture}
\node[included node] (neuronThree) at (2, 0) {\neuronAP};
\node[included node] (neuronOne) at (0, 0.85) {\neuronAO};
\node[included node] (neuronTwo) at (0, -0.85) {\neuronAT};

\node (dataone) at (-2, 1.5) {$\datainE_1$};
\node (datatwo) at (-2, 0.7) {$\datainE_2$};

\node at (-2, -0.2) {$\vdots$};

\node (dataend) at (-2, -1.5) {$\datainE_{\nbrinput}$};

\node (dataout) at (8.7, 0) {$\functionG\Bigl[\neuronparPE_1
    \function^1\bigl[\neuronparB^1+(\neuronpar^1)\tp\datain\bigr]+\neuronparPE_2
    \function^2\bigl[\neuronparB^2+(\neuronpar^2)\tp\datain\bigr]+\neuronparBP\Bigr]$};

\draw [included connection] (dataone)--(neuronOne);
\draw [included connection] (datatwo)--(neuronOne);
\draw [included connection] (dataend)--(neuronOne);
\draw [included connection] (dataone)--(neuronTwo);
\draw [included connection] (datatwo)--(neuronTwo);
\draw [included connection] (dataend)--(neuronTwo);
\draw [included connection] (neuronOne)--(neuronThree);
\draw [included connection] (neuronTwo)--(neuronThree);
\draw [included connection] (neuronThree)--(dataout);
\end{tikzpicture}}  
  \caption{Left panel: neurons translate multiple input signals into a single output signal.
Right panel: 
neural networks consist of multiple interacting neurons (typically many more than three).}
  \label{networks}
\end{figure}

\section{Common Activation Functions and Their Properties}
\label{sec:activationfunctions}
%Activation functions are real functions that shape the outputs of neurons.
We discuss a wide range of activation functions,
with \namelog, \nametanh, and \namerelu\ as popular examples.
We study the functions' derivatives as well as the functions themselves.
The activation functions themselves influence the network's expressivity, that is, the network's ability to approximate target functions.
For example, we will show that \namelinear-activation networks are always linear and, therefore, can only approximate linear functions.
The activation functions together with their first derivatives are key factors in the network's computational complexity, that is, the computational costs of parameter optimization.
The reason is that the optimization steps of popular algorithms for parameter optimization, such as  stochastic-gradient descent \citep[Chapter~6]{Bubeck2017}, are based on the gradients or generalized gradients of the network with respect to the parameters,
which means---recall the chain rule---that each optimization step requires many evaluations of the activation functions and their first derivatives.
The second derivatives of the activation functions are finally important for certain mathematical theories about neural networks.

The first and second derivatives of an activation function~\function\ are denoted by~$\functionD$ and~\functionDD, respectively,
and the first directional derivative of~\function\ in direction~\direction\ by \diffdirectional\function.
The natural logarithm is denoted by $\log$.
Proofs, details on directional derivatives, and other mathematical background are provided in the Appendix.

\subsection{Sigmoid Functions}
\label{sec:sigmoid}
A \marginterm{sigmoid function}\emph{sigmoid function} is a bounded and differentiable function that is nondecreasing and has exactly one inflection point.
Roughly speaking, a sigmoid function is a smooth, ``S-shaped'' curve.
Because sigmoid functions ``squash'' the real values into a bounded interval,
they are sometimes called \emph{squashing functions.}
Sigmoid activation has a long-standing tradition in the theory and practice of neural networks.
One motivation has been the sigmoid-shaped activation patterns observed in neuroscience---see, for example, \citet[Figure~3]{Lipetz1969}.
Another motivation has been the interpretation of sigmoid functions as mathematically tractable approximations of the step functions---see below.

In the following,
we discuss four sigmoid functions:
\namelog,
\namearctan,
\nametanh,
and \namesoft.
An overview is provided in Figure~\ref{fig:sigmoid} on the next page.
\begin{figure}[t]
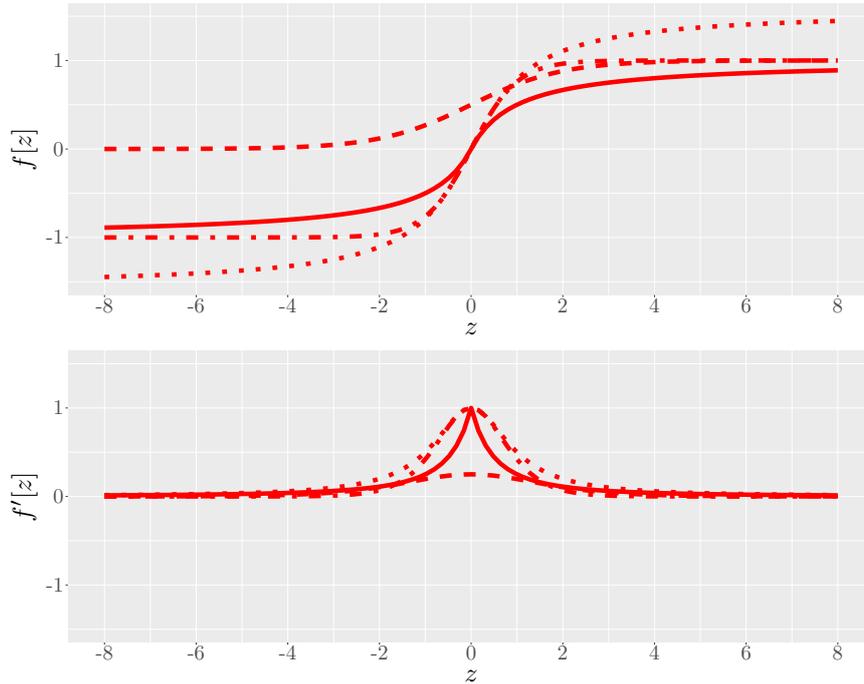

  \centering
 \scalebox{0.3}{\input{PlotSigmoid.tex}}
 \scalebox{0.3}{\input{PlotSigmoidD.tex}}
  \caption{\namelog\ (dashed), \namearctan\ (dotted), \nametanh\ (dotdashed), and \namesoft\ (solid) and their derivatives.
The functions mainly differ in their output range.
(The $y$-axis is scaled up to highlight the differences.)}
  \label{fig:sigmoid}
\end{figure}

\subsubsection{\namelog}
\label{logistic}

The function 
\begin{align*}
  \functionlog\ :\ \R\,&\to\,(0,1)\\
\argument\,&\mapsto\frac{1}{1+e^{-\argument}}
\end{align*}
is called \emph{logistic (sigmoid) function}  (\marginterm{\namelog}\emph{\namelog}).

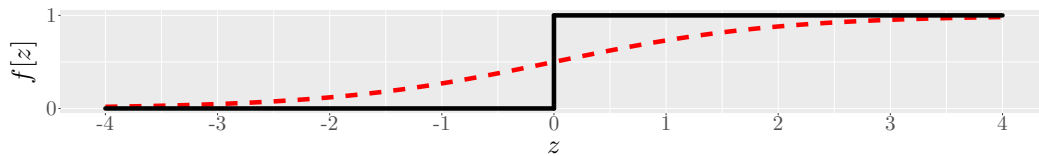
\begin{figure}[b]
  \centering
 \scalebox{0.3}{% Created by tikzDevice version 0.12.3.1 on 2021-01-24 11:22:33
% !TEX encoding = UTF-8 Unicode
\begin{tikzpicture}[x=1pt,y=1pt]
\definecolor{fillColor}{RGB}{255,255,255}
\path[use as bounding box,fill=fillColor,fill opacity=0.00] (0,0) rectangle (1300.86,195.13);
\begin{scope}
\path[clip] (  0.00,  0.00) rectangle (1300.86,195.13);
\definecolor{drawColor}{RGB}{255,255,255}
\definecolor{fillColor}{RGB}{255,255,255}

\path[draw=drawColor,line width= 0.6pt,line join=round,line cap=round,fill=fillColor] (  0.00,  0.00) rectangle (1300.86,195.13);
\end{scope}
\begin{scope}
\path[clip] ( 63.94, 60.89) rectangle (1295.36,189.63);
\definecolor{fillColor}{gray}{0.92}

\path[fill=fillColor] ( 63.94, 60.89) rectangle (1295.36,189.63);
\definecolor{drawColor}{RGB}{255,255,255}

\path[draw=drawColor,line width= 0.3pt,line join=round] ( 63.94,125.26) --
	(1295.36,125.26);

\path[draw=drawColor,line width= 0.3pt,line join=round] (189.88, 60.89) --
	(189.88,189.63);

\path[draw=drawColor,line width= 0.3pt,line join=round] (329.82, 60.89) --
	(329.82,189.63);

\path[draw=drawColor,line width= 0.3pt,line join=round] (469.75, 60.89) --
	(469.75,189.63);

\path[draw=drawColor,line width= 0.3pt,line join=round] (609.68, 60.89) --
	(609.68,189.63);

\path[draw=drawColor,line width= 0.3pt,line join=round] (749.62, 60.89) --
	(749.62,189.63);

\path[draw=drawColor,line width= 0.3pt,line join=round] (889.55, 60.89) --
	(889.55,189.63);

\path[draw=drawColor,line width= 0.3pt,line join=round] (1029.49, 60.89) --
	(1029.49,189.63);

\path[draw=drawColor,line width= 0.3pt,line join=round] (1169.42, 60.89) --
	(1169.42,189.63);

\path[draw=drawColor,line width= 0.6pt,line join=round] ( 63.94, 66.74) --
	(1295.36, 66.74);

\path[draw=drawColor,line width= 0.6pt,line join=round] ( 63.94,183.78) --
	(1295.36,183.78);

\path[draw=drawColor,line width= 0.6pt,line join=round] (119.91, 60.89) --
	(119.91,189.63);

\path[draw=drawColor,line width= 0.6pt,line join=round] (259.85, 60.89) --
	(259.85,189.63);

\path[draw=drawColor,line width= 0.6pt,line join=round] (399.78, 60.89) --
	(399.78,189.63);

\path[draw=drawColor,line width= 0.6pt,line join=round] (539.72, 60.89) --
	(539.72,189.63);

\path[draw=drawColor,line width= 0.6pt,line join=round] (679.65, 60.89) --
	(679.65,189.63);

\path[draw=drawColor,line width= 0.6pt,line join=round] (819.58, 60.89) --
	(819.58,189.63);

\path[draw=drawColor,line width= 0.6pt,line join=round] (959.52, 60.89) --
	(959.52,189.63);

\path[draw=drawColor,line width= 0.6pt,line join=round] (1099.45, 60.89) --
	(1099.45,189.63);

\path[draw=drawColor,line width= 0.6pt,line join=round] (1239.39, 60.89) --
	(1239.39,189.63);
\definecolor{drawColor}{RGB}{255,0,0}

\path[draw=drawColor,line width= 5.7pt,dash pattern=on 15pt off 15pt ,line join=round] (119.91, 68.85) --
	(131.11, 69.02) --
	(142.30, 69.20) --
	(153.50, 69.40) --
	(164.69, 69.62) --
	(175.89, 69.85) --
	(187.08, 70.10) --
	(198.28, 70.38) --
	(209.47, 70.67) --
	(220.67, 70.98) --
	(231.86, 71.32) --
	(243.06, 71.69) --
	(254.25, 72.08) --
	(265.45, 72.51) --
	(276.64, 72.96) --
	(287.84, 73.45) --
	(299.03, 73.97) --
	(310.22, 74.54) --
	(321.42, 75.14) --
	(332.61, 75.78) --
	(343.81, 76.47) --
	(355.00, 77.21) --
	(366.20, 78.00) --
	(377.39, 78.84) --
	(388.59, 79.74) --
	(399.78, 80.69) --
	(410.98, 81.70) --
	(422.17, 82.78) --
	(433.37, 83.92) --
	(444.56, 85.13) --
	(455.76, 86.40) --
	(466.95, 87.74) --
	(478.15, 89.16) --
	(489.34, 90.64) --
	(500.53, 92.20) --
	(511.73, 93.83) --
	(522.92, 95.53) --
	(534.12, 97.30) --
	(545.31, 99.15) --
	(556.51,101.05) --
	(567.70,103.02) --
	(578.90,105.06) --
	(590.09,107.15) --
	(601.29,109.29) --
	(612.48,111.48) --
	(623.68,113.71) --
	(634.87,115.97) --
	(646.07,118.27) --
	(657.26,120.59) --
	(668.46,122.92) --
	(679.65,125.26) --
	(690.85,127.60) --
	(702.04,129.93) --
	(713.23,132.25) --
	(724.43,134.54) --
	(735.62,136.81) --
	(746.82,139.04) --
	(758.01,141.23) --
	(769.21,143.37) --
	(780.40,145.46) --
	(791.60,147.49) --
	(802.79,149.46) --
	(813.99,151.37) --
	(825.18,153.21) --
	(836.38,154.98) --
	(847.57,156.69) --
	(858.77,158.32) --
	(869.96,159.87) --
	(881.16,161.36) --
	(892.35,162.77) --
	(903.54,164.12) --
	(914.74,165.39) --
	(925.93,166.60) --
	(937.13,167.74) --
	(948.32,168.81) --
	(959.52,169.83) --
	(970.71,170.78) --
	(981.91,171.68) --
	(993.10,172.52) --
	(1004.30,173.30) --
	(1015.49,174.04) --
	(1026.69,174.73) --
	(1037.88,175.38) --
	(1049.08,175.98) --
	(1060.27,176.54) --
	(1071.47,177.07) --
	(1082.66,177.56) --
	(1093.86,178.01) --
	(1105.05,178.43) --
	(1116.24,178.83) --
	(1127.44,179.19) --
	(1138.63,179.53) --
	(1149.83,179.85) --
	(1161.02,180.14) --
	(1172.22,180.41) --
	(1183.41,180.66) --
	(1194.61,180.90) --
	(1205.80,181.11) --
	(1217.00,181.31) --
	(1228.19,181.50) --
	(1239.39,181.67);
\definecolor{drawColor}{RGB}{0,0,0}

\path[draw=drawColor,line width= 5.7pt,line join=round,line cap=round] (119.91, 66.74) -- (679.65, 66.74);

\path[draw=drawColor,line width= 5.7pt,line join=round,line cap=round] (679.65, 66.74) -- (679.65,183.78);

\path[draw=drawColor,line width= 5.7pt,line join=round,line cap=round] (679.65,183.78) -- (1239.39,183.78);
\end{scope}
\begin{scope}
\path[clip] (  0.00,  0.00) rectangle (1300.86,195.13);
\definecolor{drawColor}{gray}{0.30}

\node[text=drawColor,anchor=base east,inner sep=0pt, outer sep=0pt, scale=  2.40] at ( 58.99, 58.48) {0};

\node[text=drawColor,anchor=base east,inner sep=0pt, outer sep=0pt, scale=  2.40] at ( 58.99,175.51) {1};
\end{scope}
\begin{scope}
\path[clip] (  0.00,  0.00) rectangle (1300.86,195.13);
\definecolor{drawColor}{gray}{0.20}

\path[draw=drawColor,line width= 0.6pt,line join=round] ( 61.19, 66.74) --
	( 63.94, 66.74);

\path[draw=drawColor,line width= 0.6pt,line join=round] ( 61.19,183.78) --
	( 63.94,183.78);
\end{scope}
\begin{scope}
\path[clip] (  0.00,  0.00) rectangle (1300.86,195.13);
\definecolor{drawColor}{gray}{0.20}

\path[draw=drawColor,line width= 0.6pt,line join=round] (119.91, 58.14) --
	(119.91, 60.89);

\path[draw=drawColor,line width= 0.6pt,line join=round] (259.85, 58.14) --
	(259.85, 60.89);

\path[draw=drawColor,line width= 0.6pt,line join=round] (399.78, 58.14) --
	(399.78, 60.89);

\path[draw=drawColor,line width= 0.6pt,line join=round] (539.72, 58.14) --
	(539.72, 60.89);

\path[draw=drawColor,line width= 0.6pt,line join=round] (679.65, 58.14) --
	(679.65, 60.89);

\path[draw=drawColor,line width= 0.6pt,line join=round] (819.58, 58.14) --
	(819.58, 60.89);

\path[draw=drawColor,line width= 0.6pt,line join=round] (959.52, 58.14) --
	(959.52, 60.89);

\path[draw=drawColor,line width= 0.6pt,line join=round] (1099.45, 58.14) --
	(1099.45, 60.89);

\path[draw=drawColor,line width= 0.6pt,line join=round] (1239.39, 58.14) --
	(1239.39, 60.89);
\end{scope}
\begin{scope}
\path[clip] (  0.00,  0.00) rectangle (1300.86,195.13);
\definecolor{drawColor}{gray}{0.30}

\node[text=drawColor,anchor=base,inner sep=0pt, outer sep=0pt, scale=  2.40] at (119.91, 39.41) {-4};

\node[text=drawColor,anchor=base,inner sep=0pt, outer sep=0pt, scale=  2.40] at (259.85, 39.41) {-3};

\node[text=drawColor,anchor=base,inner sep=0pt, outer sep=0pt, scale=  2.40] at (399.78, 39.41) {-2};

\node[text=drawColor,anchor=base,inner sep=0pt, outer sep=0pt, scale=  2.40] at (539.72, 39.41) {-1};

\node[text=drawColor,anchor=base,inner sep=0pt, outer sep=0pt, scale=  2.40] at (679.65, 39.41) {0};

\node[text=drawColor,anchor=base,inner sep=0pt, outer sep=0pt, scale=  2.40] at (819.58, 39.41) {1};

\node[text=drawColor,anchor=base,inner sep=0pt, outer sep=0pt, scale=  2.40] at (959.52, 39.41) {2};

\node[text=drawColor,anchor=base,inner sep=0pt, outer sep=0pt, scale=  2.40] at (1099.45, 39.41) {3};

\node[text=drawColor,anchor=base,inner sep=0pt, outer sep=0pt, scale=  2.40] at (1239.39, 39.41) {4};
\end{scope}
\begin{scope}
\path[clip] (  0.00,  0.00) rectangle (1300.86,195.13);
\definecolor{drawColor}{RGB}{0,0,0}

\node[text=drawColor,anchor=base,inner sep=0pt, outer sep=0pt, scale=  3.00] at (679.65, 11.33) {$z$};
\end{scope}
\begin{scope}
\path[clip] (  0.00,  0.00) rectangle (1300.86,195.13);
\definecolor{drawColor}{RGB}{0,0,0}

\node[text=drawColor,rotate= 90.00,anchor=base,inner sep=0pt, outer sep=0pt, scale=  3.00] at ( 26.16,125.26) {$f[z]$};
\end{scope}
\end{tikzpicture}}
  \caption{\namelog\ (red, dashed) is a smooth version of the binary function (black, solid)}
  \label{fig:step}
\end{figure}

\namelog\ activation can be thought of as a smooth version of \emph{binary activation},
which is the basis of the \emph{perceptron} \citep{Rosenblatt1958},
an  early example of a neural network.
The \marginterm{binary/step\\function}\emph{binary function} (or \emph{step function}) is
\begin{align*}
  \functionbinary\ :\ \R\,&\to\,\{0,1\}\\
\argument\,&\mapsto\,
  \begin{cases}
    \,1~~~~&\text{if}~\argument\geq 0\,;\\
    \,0~~~~&\text{otherwise}\,.\\
  \end{cases}
\end{align*}
\namelog\  approximates the binary function for large function values.
However,
while the binary function is not differentiable at~$0$,
\namelog\ is infinitely many times differentiable on its entire domain with  first and second derivatives (see Lemma~\ref{difflogistic}) 
\begin{multline*}
  \functionlogDF{\argument}=\functionlogF{\argument}\bigl(1-\functionlogF{\argument}\bigr)\in(0,1/4]\\
\text{and}~~~\functionlogDDF{\argument}=\functionlogF{\argument}\bigl(1-\functionlogF{\argument}\bigr)\bigl(1-2\functionlogF{\argument}\bigr)\in(-\swishbound,\swishbound)
\end{multline*}
for all $\argument\in\R$ and $\swishbound\approx 0.0962$.
Hence, \namelog\ is a smooth approximation of the binary function---see Figure~\ref{fig:step}.

\subsubsection{\namearctan}
\label{arctangent}
The function
\begin{align*}
  \functionarctan\ :\ \R\,&\to\,\Bigl(-\frac{\pi}{2},\frac{\pi}{2}\Bigr)\\
\argument\,&\mapsto\,{\arctan}[\argument]
\end{align*}
is called \marginterm{\namearctan}\emph{arcus tangens (\namearctan)}.
Recall the tangent function 
\begin{align*}
  {\tan}\ :\ \Bigl(-\frac{\pi}{2},\frac{\pi}{2}\Bigr)\,&\to\,\R\\
\argument\,&\mapsto\,\frac{{\sin}[\argument]}{{\cos}[\argument]}
\end{align*}
from basic trigonometry.
The value ${\arctan}[\argument]$ is defined as the inverse function of the tangent function, that is, ${\tan}[{\arctan}[\argument]]=\argument$ for all $\argument\in\R$.
This definition specifies the value of \namearctan\ uniquely,
because the tangent function is strictly increasing. 

An alternative notation for ${\arctan}[\argument]$ is ${\tan}\inv[\argument]$,
but this can cause confusion with the reciprocal (the multiplicative inverse):
$ {\tan}[\argument]\cdot({\tan}[\argument])\inv= 1$ for all $\argument\in(-\pi/2,\pi/2)$ but ${\tan}[\argument]\cdot{\tan}\inv[\argument]={\tan}[\argument]\cdot {\arctan}[\argument]=1$ only for two values $\argument\in(-\pi/2,\pi/2)$.

The output range $(0,1)$ of \namelog\ can be motivated biologically (``degree of neuron activation'').
The output  range $(-\pi/2,\pi/2)$ of \namearctan\ (and similarly the output ranges of \nametanh\ and \namesoft\ below) deviates from this motivation,
but the additional symmetry $\functionarctanF{\argument}=-\functionarctanF{-\argument}$ with respect to the origin is convenient from a mathematical and algorithmic perspective.

\namearctan\ is infinitely many times differentiable on its entire domain.
The first and second derivatives are  (see Lemma~\ref{diffarctan})
\begin{equation*}
  \functionarctanDF{\argument}=\frac{1}{1+\argument^2}\in[0,\infty)~~~~\text{and}~~~~\functionarctanDDF{\argument}=-\frac{2\argument}{(1+\argument^2)^2}\in\R%-\frac{2\argument}{(1+\argument^2)^2}=
\end{equation*}
for all $\argument\in\R$.
The derivatives cannot be expressed by the original function,
but they are still comparably easy to compute.

\subsubsection{\nametanh}
\label{hyperbolictangent}

The function
\begin{align*}
  \functiontanh\ :\ \R\,&\to\,(-1,1)\\
\argument\,&\mapsto\,\operatorname{tanh}[\argument]\deq \frac{e^{\argument}-e^{-\argument}}{e^{\argument}+e^{-\argument}}
\end{align*}
is called  \marginterm{\nametanh}\emph{hyperbolic tangent function (\nametanh)}.

\nametanh\ is infinitely many times differentiable with first and second derivatives (see Lemma~\ref{difftanh})
\begin{equation*}
  \functiontanhDF{\argument}=1-\bigl(\functiontanhF{\argument}\bigr)^2\in(0,1)~~~\text{and}~~~\functiontanhDDF{\argument}=-2\functiontanhF{\argument}\Bigl(1-\bigl(\functiontanhF{\argument}\bigr)^2\Bigr)\in(-c,c)
\end{equation*}
  for all $\argument\in\R$ and $c\approx 0.770$.

Moreover (see Lemma~\ref{proptanh}),
\begin{equation*}
  \functiontanhF{\argument}=2\functionlogF{2\argument}-1~~~~\text{for all}~\argument\in\R\,,
\end{equation*}
and
the Taylor series of \nametanh\ and \namearctan\ agree up to the fourth order.
Thus,
we can think of \nametanh\ as a shifted and scaled version of \namelog\ or as an approximation of \namearctan\ (see Figure~\ref{fig:arctantanh}).
In particular,
\nametanh\ combines two popular features of \namelog\ and \namearctan:
the derivative of \nametanh\ is a simple expression of the original function (cf.~\namelog), 
and it is centered around zero (cf.~\namearctan).

\begin{figure}[t]
  \centering
 \scalebox{0.3}{% Created by tikzDevice version 0.12.3.1 on 2021-01-24 11:22:33
% !TEX encoding = UTF-8 Unicode
\begin{tikzpicture}[x=1pt,y=1pt]
\definecolor{fillColor}{RGB}{255,255,255}
\path[use as bounding box,fill=fillColor,fill opacity=0.00] (0,0) rectangle (505.89,433.62);
\begin{scope}
\path[clip] (  0.00,  0.00) rectangle (505.89,433.62);
\definecolor{drawColor}{RGB}{255,255,255}
\definecolor{fillColor}{RGB}{255,255,255}

\path[draw=drawColor,line width= 0.6pt,line join=round,line cap=round,fill=fillColor] (  0.00,  0.00) rectangle (505.89,433.62);
\end{scope}
\begin{scope}
\path[clip] ( 90.60, 60.89) rectangle (500.39,428.12);
\definecolor{fillColor}{gray}{0.92}

\path[fill=fillColor] ( 90.60, 60.89) rectangle (500.39,428.12);
\definecolor{drawColor}{RGB}{255,255,255}

\path[draw=drawColor,line width= 0.3pt,line join=round] ( 90.60,119.31) --
	(500.39,119.31);

\path[draw=drawColor,line width= 0.3pt,line join=round] ( 90.60,202.77) --
	(500.39,202.77);

\path[draw=drawColor,line width= 0.3pt,line join=round] ( 90.60,286.24) --
	(500.39,286.24);

\path[draw=drawColor,line width= 0.3pt,line join=round] ( 90.60,369.70) --
	(500.39,369.70);

\path[draw=drawColor,line width= 0.3pt,line join=round] (155.79, 60.89) --
	(155.79,428.12);

\path[draw=drawColor,line width= 0.3pt,line join=round] (248.93, 60.89) --
	(248.93,428.12);

\path[draw=drawColor,line width= 0.3pt,line join=round] (342.06, 60.89) --
	(342.06,428.12);

\path[draw=drawColor,line width= 0.3pt,line join=round] (435.20, 60.89) --
	(435.20,428.12);

\path[draw=drawColor,line width= 0.6pt,line join=round] ( 90.60, 77.58) --
	(500.39, 77.58);

\path[draw=drawColor,line width= 0.6pt,line join=round] ( 90.60,161.04) --
	(500.39,161.04);

\path[draw=drawColor,line width= 0.6pt,line join=round] ( 90.60,244.50) --
	(500.39,244.50);

\path[draw=drawColor,line width= 0.6pt,line join=round] ( 90.60,327.97) --
	(500.39,327.97);

\path[draw=drawColor,line width= 0.6pt,line join=round] ( 90.60,411.43) --
	(500.39,411.43);

\path[draw=drawColor,line width= 0.6pt,line join=round] (109.23, 60.89) --
	(109.23,428.12);

\path[draw=drawColor,line width= 0.6pt,line join=round] (202.36, 60.89) --
	(202.36,428.12);

\path[draw=drawColor,line width= 0.6pt,line join=round] (295.50, 60.89) --
	(295.50,428.12);

\path[draw=drawColor,line width= 0.6pt,line join=round] (388.63, 60.89) --
	(388.63,428.12);

\path[draw=drawColor,line width= 0.6pt,line join=round] (481.76, 60.89) --
	(481.76,428.12);
\definecolor{drawColor}{RGB}{0,0,0}

\path[draw=drawColor,line width= 5.7pt,line join=round] (109.23,113.40) --
	(112.95,115.09) --
	(116.68,116.81) --
	(120.40,118.56) --
	(124.13,120.35) --
	(127.85,122.18) --
	(131.58,124.04) --
	(135.30,125.94) --
	(139.03,127.88) --
	(142.76,129.86) --
	(146.48,131.87) --
	(150.21,133.93) --
	(153.93,136.03) --
	(157.66,138.16) --
	(161.38,140.34) --
	(165.11,142.56) --
	(168.83,144.82) --
	(172.56,147.13) --
	(176.28,149.47) --
	(180.01,151.86) --
	(183.73,154.30) --
	(187.46,156.77) --
	(191.19,159.29) --
	(194.91,161.85) --
	(198.64,164.46) --
	(202.36,167.11) --
	(206.09,169.80) --
	(209.81,172.54) --
	(213.54,175.31) --
	(217.26,178.13) --
	(220.99,180.99) --
	(224.71,183.89) --
	(228.44,186.82) --
	(232.16,189.80) --
	(235.89,192.81) --
	(239.61,195.85) --
	(243.34,198.93) --
	(247.07,202.04) --
	(250.79,205.19) --
	(254.52,208.36) --
	(258.24,211.55) --
	(261.97,214.78) --
	(265.69,218.02) --
	(269.42,221.29) --
	(273.14,224.57) --
	(276.87,227.87) --
	(280.59,231.18) --
	(284.32,234.50) --
	(288.04,237.83) --
	(291.77,241.17) --
	(295.50,244.50) --
	(299.22,247.84) --
	(302.95,251.18) --
	(306.67,254.51) --
	(310.40,257.83) --
	(314.12,261.14) --
	(317.85,264.44) --
	(321.57,267.72) --
	(325.30,270.99) --
	(329.02,274.23) --
	(332.75,277.45) --
	(336.47,280.65) --
	(340.20,283.82) --
	(343.92,286.96) --
	(347.65,290.08) --
	(351.38,293.16) --
	(355.10,296.20) --
	(358.83,299.21) --
	(362.55,302.19) --
	(366.28,305.12) --
	(370.00,308.02) --
	(373.73,310.88) --
	(377.45,313.70) --
	(381.18,316.47) --
	(384.90,319.21) --
	(388.63,321.90) --
	(392.35,324.55) --
	(396.08,327.15) --
	(399.81,329.72) --
	(403.53,332.24) --
	(407.26,334.71) --
	(410.98,337.15) --
	(414.71,339.54) --
	(418.43,341.88) --
	(422.16,344.19) --
	(425.88,346.45) --
	(429.61,348.67) --
	(433.33,350.85) --
	(437.06,352.98) --
	(440.78,355.08) --
	(444.51,357.13) --
	(448.23,359.15) --
	(451.96,361.13) --
	(455.69,363.07) --
	(459.41,364.97) --
	(463.14,366.83) --
	(466.86,368.65) --
	(470.59,370.44) --
	(474.31,372.20) --
	(478.04,373.92) --
	(481.76,375.61);
\definecolor{drawColor}{RGB}{255,0,0}

\path[draw=drawColor,line width= 5.7pt,dash pattern=on 15pt off 15pt on 6pt off 15pt ,line join=round] (109.23,117.38) --
	(112.95,118.80) --
	(116.68,120.27) --
	(120.40,121.78) --
	(124.13,123.33) --
	(127.85,124.94) --
	(131.58,126.59) --
	(135.30,128.28) --
	(139.03,130.03) --
	(142.76,131.82) --
	(146.48,133.66) --
	(150.21,135.55) --
	(153.93,137.49) --
	(157.66,139.49) --
	(161.38,141.53) --
	(165.11,143.62) --
	(168.83,145.77) --
	(172.56,147.96) --
	(176.28,150.21) --
	(180.01,152.51) --
	(183.73,154.86) --
	(187.46,157.26) --
	(191.19,159.71) --
	(194.91,162.21) --
	(198.64,164.76) --
	(202.36,167.37) --
	(206.09,170.02) --
	(209.81,172.71) --
	(213.54,175.46) --
	(217.26,178.25) --
	(220.99,181.08) --
	(224.71,183.96) --
	(228.44,186.88) --
	(232.16,189.84) --
	(235.89,192.84) --
	(239.61,195.88) --
	(243.34,198.95) --
	(247.07,202.06) --
	(250.79,205.19) --
	(254.52,208.36) --
	(258.24,211.56) --
	(261.97,214.78) --
	(265.69,218.02) --
	(269.42,221.29) --
	(273.14,224.57) --
	(276.87,227.87) --
	(280.59,231.18) --
	(284.32,234.50) --
	(288.04,237.83) --
	(291.77,241.17) --
	(295.50,244.50) --
	(299.22,247.84) --
	(302.95,251.18) --
	(306.67,254.51) --
	(310.40,257.83) --
	(314.12,261.14) --
	(317.85,264.44) --
	(321.57,267.72) --
	(325.30,270.99) --
	(329.02,274.23) --
	(332.75,277.45) --
	(336.47,280.65) --
	(340.20,283.81) --
	(343.92,286.95) --
	(347.65,290.06) --
	(351.38,293.13) --
	(355.10,296.17) --
	(358.83,299.17) --
	(362.55,302.13) --
	(366.28,305.05) --
	(370.00,307.93) --
	(373.73,310.76) --
	(377.45,313.55) --
	(381.18,316.30) --
	(384.90,318.99) --
	(388.63,321.64) --
	(392.35,324.24) --
	(396.08,326.80) --
	(399.81,329.30) --
	(403.53,331.75) --
	(407.26,334.15) --
	(410.98,336.50) --
	(414.71,338.80) --
	(418.43,341.05) --
	(422.16,343.24) --
	(425.88,345.39) --
	(429.61,347.48) --
	(433.33,349.52) --
	(437.06,351.51) --
	(440.78,353.46) --
	(444.51,355.35) --
	(448.23,357.19) --
	(451.96,358.98) --
	(455.69,360.73) --
	(459.41,362.42) --
	(463.14,364.07) --
	(466.86,365.67) --
	(470.59,367.23) --
	(474.31,368.74) --
	(478.04,370.21) --
	(481.76,371.63);
\end{scope}
\begin{scope}
\path[clip] (  0.00,  0.00) rectangle (505.89,433.62);
\definecolor{drawColor}{gray}{0.30}

\node[text=drawColor,anchor=base east,inner sep=0pt, outer sep=0pt, scale=  2.40] at ( 85.65, 69.32) {-1.0};

\node[text=drawColor,anchor=base east,inner sep=0pt, outer sep=0pt, scale=  2.40] at ( 85.65,152.78) {-0.5};

\node[text=drawColor,anchor=base east,inner sep=0pt, outer sep=0pt, scale=  2.40] at ( 85.65,236.24) {0.0};

\node[text=drawColor,anchor=base east,inner sep=0pt, outer sep=0pt, scale=  2.40] at ( 85.65,319.70) {0.5};

\node[text=drawColor,anchor=base east,inner sep=0pt, outer sep=0pt, scale=  2.40] at ( 85.65,403.16) {1.0};
\end{scope}
\begin{scope}
\path[clip] (  0.00,  0.00) rectangle (505.89,433.62);
\definecolor{drawColor}{gray}{0.20}

\path[draw=drawColor,line width= 0.6pt,line join=round] ( 87.85, 77.58) --
	( 90.60, 77.58);

\path[draw=drawColor,line width= 0.6pt,line join=round] ( 87.85,161.04) --
	( 90.60,161.04);

\path[draw=drawColor,line width= 0.6pt,line join=round] ( 87.85,244.50) --
	( 90.60,244.50);

\path[draw=drawColor,line width= 0.6pt,line join=round] ( 87.85,327.97) --
	( 90.60,327.97);

\path[draw=drawColor,line width= 0.6pt,line join=round] ( 87.85,411.43) --
	( 90.60,411.43);
\end{scope}
\begin{scope}
\path[clip] (  0.00,  0.00) rectangle (505.89,433.62);
\definecolor{drawColor}{gray}{0.20}

\path[draw=drawColor,line width= 0.6pt,line join=round] (109.23, 58.14) --
	(109.23, 60.89);

\path[draw=drawColor,line width= 0.6pt,line join=round] (202.36, 58.14) --
	(202.36, 60.89);

\path[draw=drawColor,line width= 0.6pt,line join=round] (295.50, 58.14) --
	(295.50, 60.89);

\path[draw=drawColor,line width= 0.6pt,line join=round] (388.63, 58.14) --
	(388.63, 60.89);

\path[draw=drawColor,line width= 0.6pt,line join=round] (481.76, 58.14) --
	(481.76, 60.89);
\end{scope}
\begin{scope}
\path[clip] (  0.00,  0.00) rectangle (505.89,433.62);
\definecolor{drawColor}{gray}{0.30}

\node[text=drawColor,anchor=base,inner sep=0pt, outer sep=0pt, scale=  2.40] at (109.23, 39.41) {-1.0};

\node[text=drawColor,anchor=base,inner sep=0pt, outer sep=0pt, scale=  2.40] at (202.36, 39.41) {-0.5};

\node[text=drawColor,anchor=base,inner sep=0pt, outer sep=0pt, scale=  2.40] at (295.50, 39.41) {0.0};

\node[text=drawColor,anchor=base,inner sep=0pt, outer sep=0pt, scale=  2.40] at (388.63, 39.41) {0.5};

\node[text=drawColor,anchor=base,inner sep=0pt, outer sep=0pt, scale=  2.40] at (481.76, 39.41) {1.0};
\end{scope}
\begin{scope}
\path[clip] (  0.00,  0.00) rectangle (505.89,433.62);
\definecolor{drawColor}{RGB}{0,0,0}

\node[text=drawColor,anchor=base,inner sep=0pt, outer sep=0pt, scale=  3.00] at (295.50, 11.33) {$z$};
\end{scope}
\begin{scope}
\path[clip] (  0.00,  0.00) rectangle (505.89,433.62);
\definecolor{drawColor}{RGB}{0,0,0}

\node[text=drawColor,rotate= 90.00,anchor=base,inner sep=0pt, outer sep=0pt, scale=  3.00] at ( 26.16,244.50) {$f[z]$};
\end{scope}
\end{tikzpicture}}
  \caption{\nametanh\ (red, dotdashed) and \namearctan\ (black, solid) almost coincide around~0}
  \label{fig:arctantanh}
\end{figure}

\subsubsection{\namesoft}
\label{sec:softsign}
The function 
\begin{align*}
  \functionsoft\ :\ \R\,&\to\,(-1,1)\\
\argument\,&\mapsto\frac{\argument}{1+\abs{\argument}}
\end{align*}
is called \emph{\nameelliott} or \marginterm{\namesoft}\emph{\namesoft}.
\namesoft\ activation was introduced in  \citet{Elliott1993}.

\namesoft\ is one time differentiable on its entire domain  with derivative  (see Lemma~\ref{diffsoft})
\begin{equation*}
  \functionsoftDF{\argument}=\bigl(1-\abs{\functionsoftF{\argument}}\bigr)^2\in (0,1)~~~~\text{for all}~\argument\in\R\,.
\end{equation*}
%for all $\argument\in\R$;
\namesoft\ is infinitely many times differentiable at all points except for $\argument=0$ with  second derivative 
\begin{equation*}
\functionsoftDDF{\argument}=-2 \signF{\argument}  \bigl(1-\abs{\functionsoftF{\argument}}\bigr)^3\in(-2,2)~~~~\text{for all}~\argument\in\R\setminus\{0\}\,.
\end{equation*}
%for all $\argument\in\R\setminus\{0\}$.

Like \nametanh,
\namesoft\ is centered around zero and has  derivatives that are simple expressions of the original function.
A computational advantage of \namesoft\ over \nametanh\ is its simplicity, especially   the absence of exponential functions,
which makes the evaluation of~\functionsoft\ and~\functionsoftD\ particularly cheap.
Hence,
\namesoft\ is a particularly interesting candidate in the class of sigmoid functions.

\subsection{Piecewise-Linear Functions}
\label{piecewise}
A \marginterm{piecewise-linear\\function}\emph{piecewise-linear function} is composed of line segments.
Similarly as sigmoid activation, 
piecewise-linear activation has been used for many decades.
It has initially been motivated by neurobiological observations;
for example, the inhibiting effect of the activity of a visual-receptor unit on the activity of the neighboring units can be modeled by two line segments~\citep[Figure~2]{Hartline1957}.
The current popularity of piecewise-linear activation functions, however, originates in their computational properties:
First,
the functions and their derivatives (or directional derivatives---see~Section~\ref{sec:directionalderivatives}) are computationally inexpensive;
in particular, in contrast to most of the sigmoid functions discussed in the previous section, no exponential or trigonometric functions are involved.
Second, the derivatives of the functions discussed below do not converge to zero in the limit of infinitely large arguments (but they can be zero in the limit of infinitely small arguments),
which might alleviate the vanishing-gradient problem of sigmoid activation.
\later{make a reference here later to our paper}

Another interesting feature of the functions discussed below is that they are nonnegative homogenous: $\functionF{a\argument}=a\functionF{\argument}$ for all $a\in[0,\infty)$ and $\argument\in\R$.
This feature has turned out useful in deep-learning theory \citep{Neyshabur2015,Taheri20} and methodology \citep{Hebiri20}.

In the following, we discuss three piecewise-linear activation functions:
\namelinear, \namerelu, and \namelrelu.

\subsubsection{\namelinear}
In the context of neural-network activation,
\marginterm{\namelinear}\emph{\namelinear} is the name for the identity function
\begin{align*}
  \functionlinear\ :\ \R\,&\to\,(-\infty,\infty)\,;\\
  \argument\,&\mapsto\,\argument\,.
\end{align*}
Hence, \namelinear\ activation leaves the inputs unchanged.

Of course, 
$\functionlinear$ is differentiable on its entire domain with derivatives
\begin{equation*}
  \functionlinearDF{\argument}=1~~~\text{and}~~~\functionlinearDDF{\argument}=0~~~~~~~~\text{for all}~\argument\in\R\,.
\end{equation*}
It is often claimed that the constant derivatives  lead to uninformative updates in the optimization,
but this  neglects that objective functions in deep learning comprise not only the networks but also a loss function (such as cross-entropy, least-squares, or robust versions of it~\citep{Lederer2020b}).
An actual drawback of \namelinear\ activation is its low expressivity;
in particular,
networks where all activations are \namelinear\ are always linear functions---see Example~\ref{ex:linear}.
Hence, \namelinear\ activation is typically only applied   to certain layers, such as output layers in regression settings.

However, \namelinear-activation networks can also be used as toy models for analyzing and testing optimization algorithms;
indeed, such networks are simple from a modeling perspective but highly intricate from an optimization perspective~\citep{Arora2018}.
Consequently, one might hope that networks with \namelinear\ activation convey general principles of deep-learning optimization.

\subsubsection{\namerelu}
\label{sec:relu}
The function
\begin{align*}
  \functionrelu\ :\ \R\,&\to\,[0,\infty)\\
  \argument\,&\mapsto\,\max\{0,\argument\}
\end{align*}
is called the \emph{positive-part function} or \emph{ramp function}.
The positive-part function is the identity function (linear with slope~1) for positive arguments  and the constant function with value zero otherwise.

In electrical engineering,
a \emph{rectifier} is a device that converts alternating current to direct current.
Similarly, the positive-part function  lets positive inputs pass unaltered but cuts negative inputs, that is, it transforms negative and nonnegative inputs into nonnegative outputs.
Therefore, a neuron equipped with a positive-part function as the activation function is often called a \emph{rectifier linear unit (\namerelu)},
and the positive-part function itself is often called \emph{\namerelu}\marginterm{\namerelu} in the context of neural networks.

A clear benefit of \namerelu\ is that both the function itself and its derivatives are easy to implement and computationally inexpensive.
\namerelu\ is infinitely many times differentiable at $\argument\in\R\setminus\{0\}$
with first and second derivatives (see Lemma~\ref{piecewiseder})
\begin{equation*}
  \functionreluDF{\argument}=
  \begin{cases}
    1~~~\text{for all}~\argument\in(0,\infty)\\
0~~~\text{for all}~\argument\in(-\infty,0)
  \end{cases}~~~\text{and}~~~~~~\functionreluDDF{\argument}=0~~\text{for all}~\argument\in\R\setminus\{0\}\,.
\end{equation*}
\namerelu\ is not differentiable at~$\argument=0$,
but it is ``almost'' differentiable;
for example, 
the directional derivatives exist for all points $\argument\in\R$ and directions~$\direction\in\R$  and equal (see Lemma~\ref{piecewiseder})
\begin{equation*}
  \diffdirectional\functionreluF{\argument}=
  \begin{cases}
    \direction~~~\text{for all}~\direction\in\R~\text{and}~\argument\in(0,\infty)~\text{and for all}~\direction\in[0,\infty)~\text{and}~\argument=0\,;\\
0~~~\text{otherwise}\,.
  \end{cases}
\end{equation*}
This result motivates using gradient-descent-type approaches in practice with the derivatives at zero (which do not exist) replaced by a fixed value between~$0$ and~$1$.

\namerelu\ activation can be subject to the dying-relu phenomenon, which is a version of the vanishing-gradient problem.
The dying-\namerelu\ phenomenon indicates a situation where  many \namerelu\ nodes are inactive during much of the training process---see Example~\ref{dyingrelu}---which can  prevent the algorithms from learning complex models.
Potential remedies are to choose lower learning rates or to replace \namerelu\ by \namelrelu---see the next section.
However, the relevance of the dying-relu phenomenon in practice, as well as the effectiveness of the mentioned remedies, remain unclear.

\jl{maybe say that sparstiy kinda unclear, much better to do regularization}

\subsubsection{\namelrelu}
Given a parameter $\functionpar\in[0,\infty)$, 
we call \marginterm{\namelrelu}\emph{\namelrelu} the function
\begin{align*}
  \functionlrelu\ :\ \R\,&\to\,[0,\infty)\,;\\
  \argument\,&\mapsto\,\max\{0,\argument\}+\min\{0,\functionpar\argument\}\,.
\end{align*}
\namelrelu\  activation was introduced in \citet{Maas2013}.

The idea behind \namelrelu\ is to mimic \namerelu\ but to avoid the dying-relu phenomenon.
\namelrelu\ equals \namerelu\ in the case $\functionpar=0$;
for positive parameters~\functionpar, however, the functions differ for negative inputs,
most notably in their derivatives.
\namelrelu\ is infinitely many times differentiable at $\argument\in\R\setminus\{0\}$
with first and second derivatives (see Lemma~\ref{piecewiseder})
\begin{equation*}
  \functionlreluDF{\argument}=
  \begin{cases}
    1~~~\text{for all}~\argument\in(0,\infty)\\
\functionpar~~~\text{for all}~\argument\in(-\infty,0)
  \end{cases}~~~\text{and}~~~~~~\functionlreluDDF{\argument}=0~~\text{for all}~\argument\in\R\setminus\{0\}\,.
\end{equation*}
\namelrelu\ is not differentiable at~$\argument=0$,
but
the directional derivatives exist for all points $\argument\in\R$ and directions~$\direction\in\R$  and equal (see Lemma~\ref{piecewiseder})
\begin{equation*}
  \diffdirectional\functionlreluF{\argument}=
  \begin{cases}
    \direction~~~&\text{for all}~\direction\in\R~\text{and}~\argument\in(0,\infty)~\text{and for all}~\direction\in[0,\infty)~\text{and}~\argument=0\,;\\
\functionpar\direction~~~&\text{otherwise}\,.
  \end{cases}
\end{equation*}
The key difference to \namerelu\ is that $\functionlreluDF{\argument}>0$ for all $\argument\in\R\setminus\{0\}$ and $\functionpar>0$.
This property can be seen as an approach to avoid the problem of vanishing gradients.

A practical challenge inflicted by \namelrelu\ is choosing the parameter~\functionpar.
As  we have just seen, $\functionpar$ is the slope of  \namelrelu\ for negative inputs.  
It is usually chosen between 0 (where \namelrelu\ equals \namerelu) and 1 (where \namelrelu\ equals \namelinear),
but there is no further consensus:
the original paper sets the parameter to $\functionpar=0.01$;
\citep{Xu2015} introduces \emph{randomized leaky rectifier linear unit} (\namerrelu),
which replaces the fixed parameter with a stochastic one,
and \citep{He2015} suggests to train the parameter (see Section~\ref{learning}).
The benefits of these choices as compared to each other and as compared to vanilla \namerelu\ still need to be explored.

%which makes \namelrelu\ compared to vanilla \namerelu.

\subsection{Other Functions}
\label{sec:other}
Since each of the discussed activations has certain shortcomings,
proposals of new activations have mushroomed.
Most of these functions lack empirical or mathematical support,
but there are notable exceptions.
In the following,
we discuss recently proposed activations that are popular or include novel ideas.
This includes \namesoftplus,
\nameelu\ and \nameselu,
\nameswish,
and activations with parameters that are learned during training.

\later{Other functions (all downloaded): SReLU \citep{Jin2015} BinaryConnect: \citep{Courbariaux2015}.}

\subsubsection{\namesoftplus}
\label{softplus}
\namesoftplus\ is defined as\marginterm{\namesoftplus} 
\begin{align*}
  \functionsoftplus\ :\ \R\,&\to\,[0,\infty)\,;\\
  \argument\,&\mapsto\,\log[1+e^{\argument}]\,.
\end{align*}
\namesoftplus\ activation was introduced  in~\citet{Dugas2001}.

\namesoftplus\ is infinitely many times differentiable on its entire domain,
and its first and second derivatives are (see Lemma~\ref{dersoftplus})
\begin{equation*}
  \functionsoftplusDF{\argument}=\functionlogF{\argument}\in(0,1)~~~~\text{and}~~~~\functionsoftplusDDF{\argument}= \functionlogDF{\argument}\in(0,1/4]
\end{equation*}
for all $\argument\in\R$. 
Hence, \namesoftplus\ is a primitive of \namelog.

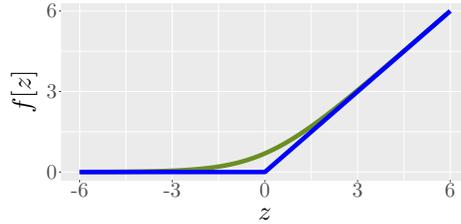
\begin{figure}[t]
  \centering
 \scalebox{0.3}{% Created by tikzDevice version 0.12.3.1 on 2021-01-24 11:22:34
% !TEX encoding = UTF-8 Unicode
\begin{tikzpicture}[x=1pt,y=1pt]
\definecolor{fillColor}{RGB}{255,255,255}
\path[use as bounding box,fill=fillColor,fill opacity=0.00] (0,0) rectangle (578.16,289.08);
\begin{scope}
\path[clip] (  0.00,  0.00) rectangle (578.16,289.08);
\definecolor{drawColor}{RGB}{255,255,255}
\definecolor{fillColor}{RGB}{255,255,255}

\path[draw=drawColor,line width= 0.6pt,line join=round,line cap=round,fill=fillColor] (  0.00,  0.00) rectangle (578.16,289.08);
\end{scope}
\begin{scope}
\path[clip] ( 63.94, 60.89) rectangle (572.66,283.58);
\definecolor{fillColor}{gray}{0.92}

\path[fill=fillColor] ( 63.94, 60.89) rectangle (572.66,283.58);
\definecolor{drawColor}{RGB}{255,255,255}

\path[draw=drawColor,line width= 0.3pt,line join=round] ( 63.94,121.62) --
	(572.66,121.62);

\path[draw=drawColor,line width= 0.3pt,line join=round] ( 63.94,222.85) --
	(572.66,222.85);

\path[draw=drawColor,line width= 0.3pt,line join=round] (144.87, 60.89) --
	(144.87,283.58);

\path[draw=drawColor,line width= 0.3pt,line join=round] (260.49, 60.89) --
	(260.49,283.58);

\path[draw=drawColor,line width= 0.3pt,line join=round] (376.11, 60.89) --
	(376.11,283.58);

\path[draw=drawColor,line width= 0.3pt,line join=round] (491.73, 60.89) --
	(491.73,283.58);

\path[draw=drawColor,line width= 0.6pt,line join=round] ( 63.94, 71.01) --
	(572.66, 71.01);

\path[draw=drawColor,line width= 0.6pt,line join=round] ( 63.94,172.23) --
	(572.66,172.23);

\path[draw=drawColor,line width= 0.6pt,line join=round] ( 63.94,273.46) --
	(572.66,273.46);

\path[draw=drawColor,line width= 0.6pt,line join=round] ( 87.06, 60.89) --
	( 87.06,283.58);

\path[draw=drawColor,line width= 0.6pt,line join=round] (202.68, 60.89) --
	(202.68,283.58);

\path[draw=drawColor,line width= 0.6pt,line join=round] (318.30, 60.89) --
	(318.30,283.58);

\path[draw=drawColor,line width= 0.6pt,line join=round] (433.92, 60.89) --
	(433.92,283.58);

\path[draw=drawColor,line width= 0.6pt,line join=round] (549.54, 60.89) --
	(549.54,283.58);
\definecolor{drawColor}{RGB}{107,142,35}

\path[draw=drawColor,line width= 5.7pt,line join=round] ( 87.06, 71.09) --
	( 91.69, 71.10) --
	( 96.31, 71.12) --
	(100.94, 71.13) --
	(105.56, 71.15) --
	(110.19, 71.16) --
	(114.81, 71.18) --
	(119.44, 71.20) --
	(124.06, 71.23) --
	(128.69, 71.26) --
	(133.31, 71.29) --
	(137.94, 71.32) --
	(142.56, 71.36) --
	(147.19, 71.41) --
	(151.81, 71.46) --
	(156.44, 71.51) --
	(161.06, 71.58) --
	(165.68, 71.65) --
	(170.31, 71.73) --
	(174.93, 71.82) --
	(179.56, 71.92) --
	(184.18, 72.03) --
	(188.81, 72.16) --
	(193.43, 72.31) --
	(198.06, 72.47) --
	(202.68, 72.65) --
	(207.31, 72.85) --
	(211.93, 73.08) --
	(216.56, 73.34) --
	(221.18, 73.62) --
	(225.81, 73.94) --
	(230.43, 74.30) --
	(235.06, 74.69) --
	(239.68, 75.14) --
	(244.30, 75.63) --
	(248.93, 76.17) --
	(253.55, 76.78) --
	(258.18, 77.45) --
	(262.80, 78.19) --
	(267.43, 79.00) --
	(272.05, 79.89) --
	(276.68, 80.88) --
	(281.30, 81.95) --
	(285.93, 83.12) --
	(290.55, 84.39) --
	(295.18, 85.77) --
	(299.80, 87.26) --
	(304.43, 88.87) --
	(309.05, 90.59) --
	(313.68, 92.43) --
	(318.30, 94.40) --
	(322.93, 96.48) --
	(327.55, 98.69) --
	(332.17,101.02) --
	(336.80,103.46) --
	(341.42,106.02) --
	(346.05,108.69) --
	(350.67,111.46) --
	(355.30,114.34) --
	(359.92,117.32) --
	(364.55,120.38) --
	(369.17,123.54) --
	(373.80,126.77) --
	(378.42,130.08) --
	(383.05,133.46) --
	(387.67,136.91) --
	(392.30,140.41) --
	(396.92,143.97) --
	(401.55,147.57) --
	(406.17,151.23) --
	(410.79,154.92) --
	(415.42,158.65) --
	(420.04,162.41) --
	(424.67,166.21) --
	(429.29,170.03) --
	(433.92,173.87) --
	(438.54,177.74) --
	(443.17,181.63) --
	(447.79,185.53) --
	(452.42,189.45) --
	(457.04,193.39) --
	(461.67,197.34) --
	(466.29,201.29) --
	(470.92,205.26) --
	(475.54,209.24) --
	(480.17,213.23) --
	(484.79,217.22) --
	(489.42,221.22) --
	(494.04,225.22) --
	(498.66,229.23) --
	(503.29,233.24) --
	(507.91,237.26) --
	(512.54,241.28) --
	(517.16,245.31) --
	(521.79,249.34) --
	(526.41,253.37) --
	(531.04,257.40) --
	(535.66,261.43) --
	(540.29,265.47) --
	(544.91,269.50);
\definecolor{drawColor}{RGB}{0,0,255}

\path[draw=drawColor,line width= 5.7pt,line join=round] ( 87.06, 71.01) --
	( 91.69, 71.01) --
	( 96.31, 71.01) --
	(100.94, 71.01) --
	(105.56, 71.01) --
	(110.19, 71.01) --
	(114.81, 71.01) --
	(119.44, 71.01) --
	(124.06, 71.01) --
	(128.69, 71.01) --
	(133.31, 71.01) --
	(137.94, 71.01) --
	(142.56, 71.01) --
	(147.19, 71.01) --
	(151.81, 71.01) --
	(156.44, 71.01) --
	(161.06, 71.01) --
	(165.68, 71.01) --
	(170.31, 71.01) --
	(174.93, 71.01) --
	(179.56, 71.01) --
	(184.18, 71.01) --
	(188.81, 71.01) --
	(193.43, 71.01) --
	(198.06, 71.01) --
	(202.68, 71.01) --
	(207.31, 71.01) --
	(211.93, 71.01) --
	(216.56, 71.01) --
	(221.18, 71.01) --
	(225.81, 71.01) --
	(230.43, 71.01) --
	(235.06, 71.01) --
	(239.68, 71.01) --
	(244.30, 71.01) --
	(248.93, 71.01) --
	(253.55, 71.01) --
	(258.18, 71.01) --
	(262.80, 71.01) --
	(267.43, 71.01) --
	(272.05, 71.01) --
	(276.68, 71.01) --
	(281.30, 71.01) --
	(285.93, 71.01) --
	(290.55, 71.01) --
	(295.18, 71.01) --
	(299.80, 71.01) --
	(304.43, 71.01) --
	(309.05, 71.01) --
	(313.68, 71.01) --
	(318.30, 71.01) --
	(322.93, 75.06) --
	(327.55, 79.11) --
	(332.17, 83.16) --
	(336.80, 87.21) --
	(341.42, 91.26) --
	(346.05, 95.30) --
	(350.67, 99.35) --
	(355.30,103.40) --
	(359.92,107.45) --
	(364.55,111.50) --
	(369.17,115.55) --
	(373.80,119.60) --
	(378.42,123.65) --
	(383.05,127.70) --
	(387.67,131.74) --
	(392.30,135.79) --
	(396.92,139.84) --
	(401.55,143.89) --
	(406.17,147.94) --
	(410.79,151.99) --
	(415.42,156.04) --
	(420.04,160.09) --
	(424.67,164.14) --
	(429.29,168.19) --
	(433.92,172.23) --
	(438.54,176.28) --
	(443.17,180.33) --
	(447.79,184.38) --
	(452.42,188.43) --
	(457.04,192.48) --
	(461.67,196.53) --
	(466.29,200.58) --
	(470.92,204.63) --
	(475.54,208.67) --
	(480.17,212.72) --
	(484.79,216.77) --
	(489.42,220.82) --
	(494.04,224.87) --
	(498.66,228.92) --
	(503.29,232.97) --
	(507.91,237.02) --
	(512.54,241.07) --
	(517.16,245.12) --
	(521.79,249.16) --
	(526.41,253.21) --
	(531.04,257.26) --
	(535.66,261.31) --
	(540.29,265.36) --
	(544.91,269.41) --
	(549.54,273.46);
\end{scope}
\begin{scope}
\path[clip] (  0.00,  0.00) rectangle (578.16,289.08);
\definecolor{drawColor}{gray}{0.30}

\node[text=drawColor,anchor=base east,inner sep=0pt, outer sep=0pt, scale=  2.40] at ( 58.99, 62.75) {0};

\node[text=drawColor,anchor=base east,inner sep=0pt, outer sep=0pt, scale=  2.40] at ( 58.99,163.97) {3};

\node[text=drawColor,anchor=base east,inner sep=0pt, outer sep=0pt, scale=  2.40] at ( 58.99,265.19) {6};
\end{scope}
\begin{scope}
\path[clip] (  0.00,  0.00) rectangle (578.16,289.08);
\definecolor{drawColor}{gray}{0.20}

\path[draw=drawColor,line width= 0.6pt,line join=round] ( 61.19, 71.01) --
	( 63.94, 71.01);

\path[draw=drawColor,line width= 0.6pt,line join=round] ( 61.19,172.23) --
	( 63.94,172.23);

\path[draw=drawColor,line width= 0.6pt,line join=round] ( 61.19,273.46) --
	( 63.94,273.46);
\end{scope}
\begin{scope}
\path[clip] (  0.00,  0.00) rectangle (578.16,289.08);
\definecolor{drawColor}{gray}{0.20}

\path[draw=drawColor,line width= 0.6pt,line join=round] ( 87.06, 58.14) --
	( 87.06, 60.89);

\path[draw=drawColor,line width= 0.6pt,line join=round] (202.68, 58.14) --
	(202.68, 60.89);

\path[draw=drawColor,line width= 0.6pt,line join=round] (318.30, 58.14) --
	(318.30, 60.89);

\path[draw=drawColor,line width= 0.6pt,line join=round] (433.92, 58.14) --
	(433.92, 60.89);

\path[draw=drawColor,line width= 0.6pt,line join=round] (549.54, 58.14) --
	(549.54, 60.89);
\end{scope}
\begin{scope}
\path[clip] (  0.00,  0.00) rectangle (578.16,289.08);
\definecolor{drawColor}{gray}{0.30}

\node[text=drawColor,anchor=base,inner sep=0pt, outer sep=0pt, scale=  2.40] at ( 87.06, 39.41) {-6};

\node[text=drawColor,anchor=base,inner sep=0pt, outer sep=0pt, scale=  2.40] at (202.68, 39.41) {-3};

\node[text=drawColor,anchor=base,inner sep=0pt, outer sep=0pt, scale=  2.40] at (318.30, 39.41) {0};

\node[text=drawColor,anchor=base,inner sep=0pt, outer sep=0pt, scale=  2.40] at (433.92, 39.41) {3};

\node[text=drawColor,anchor=base,inner sep=0pt, outer sep=0pt, scale=  2.40] at (549.54, 39.41) {6};
\end{scope}
\begin{scope}
\path[clip] (  0.00,  0.00) rectangle (578.16,289.08);
\definecolor{drawColor}{RGB}{0,0,0}

\node[text=drawColor,anchor=base,inner sep=0pt, outer sep=0pt, scale=  3.00] at (318.30, 11.33) {$z$};
\end{scope}
\begin{scope}
\path[clip] (  0.00,  0.00) rectangle (578.16,289.08);
\definecolor{drawColor}{RGB}{0,0,0}

\node[text=drawColor,rotate= 90.00,anchor=base,inner sep=0pt, outer sep=0pt, scale=  3.00] at ( 26.16,172.23) {$f[z]$};
\end{scope}
\end{tikzpicture}}
  \caption{\namesoftplus\ (green) is a smooth version of \namerelu\ (blue). 
(Best seen in color.)}
  \label{fig:softplusrelu}
\end{figure}

However, $\functionsoftplusF{\argument}\approx\log[1]=0=\functionreluF{\argument}$ for all $\argument\ll 0$ and  $\functionsoftplusF{\argument}\approx \log[e^{\argument}]=\argument=\functionreluF{\argument}$ for all $\argument\gg 0$.
So rather than comparing \namesoftplus\ to \namelog,
we should consider it a smooth version of \namerelu---see Figure~\ref{fig:softplusrelu}.
The differentiability at $\argument=0$ is a mathematical convenience of \namesoftplus\ as compared to \namerelu,
but the practical relevance of this feature is unclear:
practical implementations seem to work with the derivative of \namerelu\ in $\argument=0$ just set to zero (cf.~the discussion of the directional derivatives of \namerelu\ in Section~\ref{sec:relu}).
The strict positivity of the first derivative of \namesoftplus\ 
%, that is, $\functionsoftplusDF{\argument}>0$ for all $\argument\in\R$,
can be seen as a measure against the dying-\namerelu\ phenomenon,
but the practical effect of this measure is again unclear:
the derivatives are still small for small arguments, that is,
$\functionsoftplusDF{\argument}\approx 0$ for all $\argument\ll 0$.
A disadvantage of \namesoftplus\ is that the function and its derivative are both computationally more costly than their \namerelu\ counterparts.
Thus,
in lack of clear evidence in favor of~\namesoftplus,
and given \namerelu's computational simplicity,
\namerelu\ is currently favored over \namesoftplus.

%Combinations of softplus- and \nametanh -type functions have been proposed in \citet{Biswas2020},
%but there is slim evidence for the choice of their four aprameters.

\subsubsection{\nameelu\ and \nameselu}\label{elu}

Given a parameter $\functionpar\in[0,\infty)$, 
the function
\begin{align*}
  \functionelu\ :\ \R\,&\to\,(-\functionpar,\infty)\\
  \argument\,&\mapsto\,
               \begin{cases}
                 \argument~~~~&\text{for all}~\argument\in[0,\infty)\\
                 \functionpar(e^{\argument}-1)~~~~&\text{for all}~\argument\in(-\infty,0)
               \end{cases}
\end{align*}
is called---in analogy with \namerelu---\emph{exponential linear unit (\nameelu)}\marginterm{\nameelu}.
(See Lemma~\ref{eluproplem} for a calculation of the output ranges.)
In fact, \namerelu\ is a special case of \nameelu: $\functionrelu=\functioneluz$.
\nameelu\ activation was introduced in \citet{Clevert2015}.

\nameelu\ is infinitely many times differentiable on~$\R\setminus\{0\}$ with first and second derivatives
\begin{equation*}
  \functioneluDF{\argument}=
  \begin{cases}
    1&\text{for all}~\argument\in(0,\infty)\\
\functioneluF{\argument}+\functionpar&\text{for all}~\argument\in(-\infty,0)
  \end{cases}
\end{equation*}
and
\begin{equation*}
\functioneluDDF{\argument}=
  \begin{cases}
    0&\text{for all}~\argument\in(0,\infty)\,;\\
\functioneluF{\argument}+\functionpar&\text{for all}~\argument\in(-\infty,0)\,.
  \end{cases}
\end{equation*}
The first directional derivatives exist on the entire real line and are equal to
\begin{equation*}
  \diffdirectional\functioneluF{\argument}=
  \begin{cases}
    \direction&\text{for all}~\direction\in\R~\text{and}~\argument\in(0,\infty)~\text{or}~\direction\in[0,\infty)~\text{and}~\argument=0\,;\\
\direction\bigl(\functioneluF{\argument}+\functionpar\bigr)&\text{otherwise}\,.
  \end{cases}
\end{equation*}
Similarly as discussed for \namesoftplus\ in the previous section,
a key difference of \nameelu\ to \namerelu\ is the fact that the derivatives of \nameelu\ with $\functionpar\neq 0$ are strictly positive (where they exist).
But again,
the practical effect of this property is unclear.
A difference to both \namesoftplus\ and \namerelu\  is that \nameelu\ is somewhat ``centered'' around zero---see Figure~\ref{fig:elu}.

The mathematically most convenient parameter is $\functionpar=1$,
because this is the only choice that makes \nameelu\ one time differentiable on the entire real line (but not twice differentiable).
But except for this observation,
there is little insight into how to choose~\functionpar\ in practice.

Observe also that the first derivatives of~\nameelu\ can be computed easily from the original functions,
but the original functions  involve an exponential function and, therefore, are more costly to compute than  \namerelu.
Hence, in view of the unclear practical benefits and the computational disadvantages of \nameelu,
\namerelu\ is currently preferred.

\begin{figure}[t]
  \centering
 \scalebox{0.3}{\input{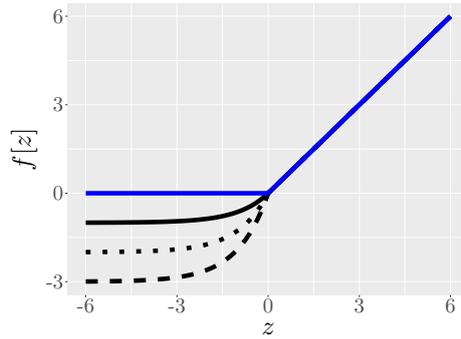}}
  \caption{\namerelu, that is, \nameelu\ with $\functionpar=0$ (blue, dashed) and \nameelu\ with $\functionpar=1$ (black, solid), $\functionpar=2$ (black, dotted), and $\functionpar=3$ (black, dashed) coincide for positive values but differ for negative values.
(Best seen in color.)}
  \label{fig:elu}
\end{figure}

A variant of \nameelu\ is \emph{scaled exponential linear unit (\nameselu)}\marginterm{\nameselu} introduced in~\citet{Klambauer2017}.
\nameselu\ is (see Section~\ref{seludetails} for further details)
\begin{align*}
  \functionselu\ :\ \R\,&\to\,[-\functionparP_0,\infty)\\
  \argument\,&\mapsto\,
               \begin{cases}
                 \functionpar_0\argument~~~~&\text{for all}~\argument\in[0,\infty)\\
                 \functionparP_0(e^{\argument}-1)~~~~&\text{for all}~\argument\in(-\infty,0)
               \end{cases}
\end{align*}
for fixed parameters $\functionpar_0\approx 1.05$ and $\functionparP_0\approx1.76$.
\nameselu\ is very similar to \nameelu\ in general,
but the specific choice of the parameters leads to an additional self-scaling property.
Consider the map
\begin{align*}
  \mathfrak{c}\ : \ \R^2\,&\to\,\R^2\,;\\
(\mu,\nu)\,&\mapsto\,\Bigl( E\bigl[\functionseluF{r_{\mu,\nu}}\bigr],E\bigl[(\functionseluF{r_{\mu,\nu}})^2\bigr]\Bigr)\,,
\end{align*}
where $r_{\mu,\nu}\sim\mathcal N[\mu,\sqrt{\nu}]$ is a univariate Gaussian random variable with mean~$\mu$ and variance~$\nu$,
and $E$ is the corresponding expectation.
In other words, the function $\mathfrak{c}$ captures the mean and variance of a normal random variable after \nameselu\ activation.
It is easy to show that $\mathfrak{c}$ is a contraction with fixed point $(\overline{\mu},\overline{\nu})=(0,1)$,
which can be interpreted as a normalization property:
broadly speaking, a Gaussian input is transformed into an output with mean closer to~$0$ and variance closer to~$1$.
One can argue that this normalization property might alleviate vanishing and exploding gradients especially in networks that are wide (where the central-limit theorem might indeed ensure Gaussian-like inputs) and deep (where many gradients are multiplied together).
However, the practical benefits, especially in view of batch normalization and other normalization schemes that are often used in deep-learning pipelines, are currently unclear.

\subsubsection{\nameswish}
\label{sec:swish}

\begin{figure}[b!]
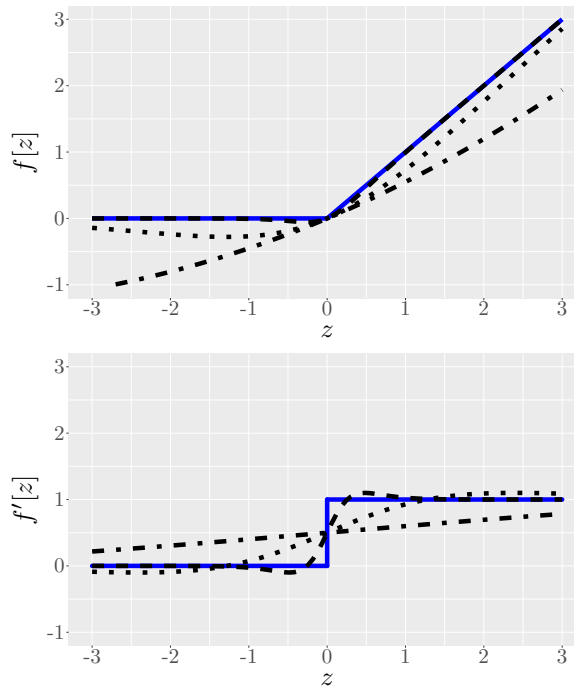

  \centering
 \scalebox{0.3}{\input{PlotSwishRelu.tex}}

\scalebox{0.3}{\input{PlotSwishReluD.tex}}
  \caption{\nameswish\  with $\functionpar=0.2$ (black, dotdashed), $\functionpar=1$ (black, dotted), and $\functionpar=5$ (blue, dashed) as well as  \namerelu\ (blue, solid) and their derivatives.
The larger the function parameter~\functionpar,
the more \nameswish\ resembles \namerelu;
the smaller the function parameter,
the more \nameswish\ resembles (a scaled version of) \namelinear.
}
  \label{fig:swishrelu}
\end{figure}

Given a parameter $\functionpar\in[0,\infty)$, 
the function\marginterm{\nameswish}
\begin{align*}
  \functionswish\ :\ \R\,&\to\,[\swishboundA,\infty)\\
  \argument\,&\mapsto\,\argument\cdot\functionlogF{\functionpar\argument}=\frac{\argument}{1+e^{-\functionpar\argument}}
\end{align*}
is called \emph{\nameswish}.
The minimum of the function is $\swishboundA\approx -0.278/\functionpar$ for $\functionpar\neq 0$ and $\swishboundA=-\infty$ for $\functionpar= 0$ (see Lemma~\ref{swishbound}).
\nameswish\ activation with $\functionpar=1$ was as introduced in \citet{Elfwing2018} under the name \marginterm{\namesil}\emph{\namesil};
the parametrized version with \functionpar\ potentially a learnable parameter (see the following section) and the name ``\nameswish'' were introduced later in \citet{Ramachandran2017}.

\nameswish\ can be seen as an interpolation between \namerelu\ and (a scaled version of) \namelinear:
$\functionswish\approx\functionrelu$ for $\functionpar\gg 1$ and $\functionswish\approx\functionlinear/2$ for $\functionpar\ll 1$---see  Figure~\ref{fig:swishrelu}.
\later{derivatives of Sil like sigmoid \cite{Elfwing2018}.}

\nameswish\ is  infinitely many times differentiable on its entire domain with first and second derivatives (see Lemmas~\ref{swishdiff}, \ref{swishboundD}, and~\ref{res:swishsym})
\begin{equation*}
  \functionswishDF{\argument}
  =\functionpar\functionswishF{\argument}+\functionlogF{\functionpar\argument}\bigl(1-\functionpar\functionswishF{\argument}\bigr)\in(1-\swishboundP,\swishboundP)
\end{equation*}
and (see Lemma~\ref{swishdiff})
\begin{equation*}
  \functionswishDDF{\argument}=\functionpar\Bigl(\functionpar\functionswishF{\argument}+2\functionlogF{\functionpar\argument}\bigl(1-\functionpar\functionswishF{\argument}\bigr)\Bigr)\bigl(1-\functionlogF{\functionpar\argument}\bigr)\in(\swishboundAPPLower,\swishboundAPPUpper)
\end{equation*}
for all $\functionpar\in[0,\infty)$ and $\argument\in\R$,
and for $\swishboundP\approx 1.098$, 
$\swishboundAPPLower\approx -0.0369 \functionpar$, and $\swishboundAPPUpper = 0.5 \functionpar$. 
We observe that 
the first and second derivatives of \nameswish, as well as \nameswish\ itself, are simple combinations of $\functionlogF{\functionpar\argument}$ and $\argument \functionlogF{\functionpar\argument}$. 
Hence,  \nameswish\ activation is comparable to \namelog\ activation in terms of basic computational complexity.

Several variants of \nameswish\ have been suggested recently,\marginterm{\nameelish/\namemish}
such as \nameelish\ \citep{Basirat2018} and \namemish\ \citep{Misra2019}.
A difference of \nameswish\ and its variants to all other activation functions discussed so far is that \nameswish\ is not monotone.
But
the practical benefits of this property,
as well as the features of these activations more generally,
are currently unclear.
An interesting aspect of \nameswish\ is, however, that it is the first activation function that originate from an automated search.

\subsubsection{Learning Activation Functions}
\label{learning}

Activations usually remain fixed during the entire process of training and application of a neural network.
But activation functions can also be  fitted during training by selecting within predefined sets of activation functions~\citep{Liu1996,White1993} or by fitting the parameters of preselected activation functions~\citep{Augusteijn2004,Duch2001}.
Recent implementations of this idea include fitting polynomial activations with the Taylor coefficients of standard activation functions as initial values for the parameters \citep{Chung2016},
\emph{parametric rectified linear unit (\nameprelu)} \citep{He2015},\marginterm{\nameprelu/\namerelu} 
which fits the parameter of \namelrelu,
\emph{parametric exponential linear unit (\namepelu)} \citep{Trottier2017},
which fits the parameters of a version of~\nameelu,
and the aforementioned \nameswish\ with its parameter fitted to the training data \citep{Ramachandran2017}. 

A related concept is \marginterm{\namemaxout}\namemaxout\ introduced in~\citet{Goodfellow2013}.
 \namemaxout\  replaces the neuron in~\eqref{neuron} by 
\begin{align*}
      \neuronAmax\ : \ \R^{\nbrinput}\,&\to\,\R\,;\\ 
    \datain\,&\mapsto\,
\max\Biggl\{\neuronparB^1+\sum_{j=1}^{\nbrinput}\neuronparE_j\datainE_j,\neuronparB^2+\sum_{j=1}^{\nbrinput}\neuronparE_{j+\nbrinput}\datainE_j,\dots,\neuronparB^{k}\sum_{j=1}^{\nbrinput}\neuronparE_{j+(k-1)\nbrinput}\datainE_{j}\Biggr\}\,,
\end{align*}
where $\neuronparBB\in\R^{k}$ and $\neuronpar\in\R^{k\nbrinput}$ are vector-valued parameters and $k\in\{1,2,\dots\}$ a fixed number.
\namemaxout\ was designed to improve the performance of dropout (a technique that tries to alleviate overfitting by randomly excluding nodes when updating weights with stochastic-gradient descent \citep{Srivastava2014}),
and it can be seen as a max pooling over neurons with linear activations
(max pooling is a data-aggregation scheme that is popular especially for  convolutional neural networks \citep{Scherer2010}).
\namemaxout\ is not an activation function,
but it can be interpreted in the spirit of activation functions in two ways:
First,
as a generalization of~\namerelu:
in the special case $k=1$,
\namemaxout\ simply yields a \namerelu\ neuron, that is, $\neuronAmaxO=\neuron_{\neuronparB,\neuronpar,\functionrelu}$;
this interpretation connects \namemaxout\ with our Section~\ref{sec:relu}.
Second,
 as a method for fitting piecewise-linear activation functions;
this interpretation connects \namemaxout\ with the above-stated learning approaches. 

\namemaxout\ hinges on~$k$,
which can be difficult to choose in practice,
and it augments the parameters space from $\nbrinput+1$ to $k(\nbrinput+1)$ dimensions,
 which bears computational challenges and the risk of overfitting;
in particular, when comparing \namemaxout\ networks to other networks, 
one should account for the increased parameter space~\citep{Castaneda2019}.
More generally, theoretical and empirical support for \namemaxout\ as well as of the other learning schemes stated above is currently very limited.

\later{\section{Empirical Comparisons}
See the following papers (all downloaded):
\citep{Pedamonti2018} \citep{Eger2019}.
}

\section{Practical Implications}
\label{sec:discussion}
Figure~\ref{fig:overview} on the next page displays  the main activation functions discussed in Section~\ref{sec:activationfunctions} in a single graph. 
Our findings in Section~\ref{sec:activationfunctions} have three practical implications:

First,
our findings support \namesoft\ activation. 
Activation functions that have similar graphs presumably give similar empirical results when the network parameters are ``sufficiently optimized'' (see, for example, \citet[Section~5]{Basirat2018}).
This suggests that choices among similar activation functions should be based 
 on how much computational effort it takes to  optimize the network parameters, that is,
 on how simple the activation functions are from a computational perspective.
A practical implication of our comparison of sigmoid activations in Section~\ref{sec:sigmoid} is, therefore, that 
\namesoft,
a particularly simple sigmoid function,
 should receive much more attention (see~Section~\ref{sec:softsign}).

Second, our findings support \namerelu\ activation in a similar way:
there is limited empirical and theoretical evidence for its competitors,
and \namerelu\ stands out in view of its computational simplicity (see~Section~\ref{sec:relu}).
A practical implication of Section~\ref{sec:activationfunctions} is, therefore, that \namesoft\ and \namerelu\ should---at least for now---be the standard activations in practice.

Third, our findings highlight three promising approaches for improving on these choices:
theoretical considerations (such as in the context of \nameselu, see Section~\ref{elu}),
automated searches (such as in the context of \nameswish, see Section~\ref{sec:swish}),
and data-adaptive selection schemes (see Section~\ref{learning}).

\ifarXiv\paragraph{Acknowledgments}
I thank Shih-Ting Huang for the careful reading of a draft version of this manuscript. 
\fi

\begin{figure}[t]
  \centering
 \scalebox{0.3}{\input{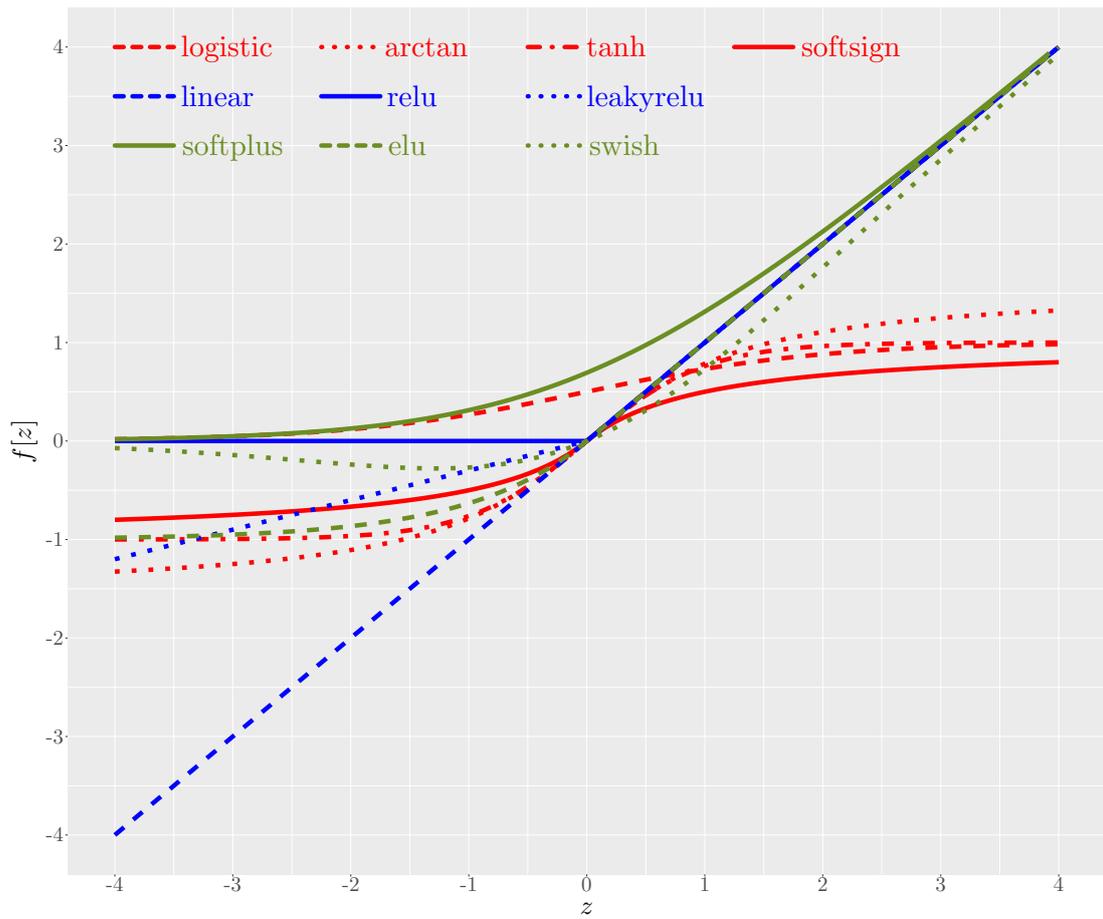}}
  \caption{Overview of sigmoid functions (red, see Section~\ref{sec:sigmoid}), piecewise-linear functions (blue, see Section~\ref{piecewise}), and other functions (green, see Section~\ref{sec:other}). 
(Best seen in color.)}
  \label{fig:overview}
\end{figure}

\clearpage

% Manual newpage inserted to improve layout of sample file - not
% needed in general before appendices/bibliography.

%\newpage

\appendix
% \section*{Appendix A.}
% \label{app:theorem}

% Note: in this sample, the section number is hard-coded in. Following
% proper LaTeX conventions, it should properly be coded as a reference:

\bibliography{Bibliography}

\appendix
\newpage
\setcounter{lemma}{0}
\renewcommand{\thelemma}{\Alph{section}.\arabic{lemma}}
\setcounter{example}{0}
\renewcommand{\theexample}{\Alph{section}.\arabic{example}}
\setcounter{definition}{0}
\renewcommand{\thedefinition}{\Alph{section}.\arabic{definition}}

\section{Mathematical Details}
In the following,
we prove our mathematical claims.
Our derivations require minimal prerequisites,
and we recall mathematical background where needed.

\subsection{Derivatives and Directional Derivatives}
\label{sec:directionalderivatives}
We first recall the notions of derivatives and directional derivatives.
The basis of derivatives are (two-sided) limits.
For all  $\directionlimZ\in\R\cup\{\pm\infty\}$ and $a\in(0,\infty)$,
we define ``balls'' around $\directionlimZ$ as follows:
\begin{itemize}
\item if $\directionlimZ\in\R$, define $\mathcal B_a[\directionlimZ]\deq\{c\in\R\,:\,\abs{\directionlimZ-c}<a \}$;
\item if $\directionlimZ=\infty$, define $\mathcal B_a[\directionlimZ]\deq\{c\in\R\,:\,c> 1/a \}$;
\item if $\directionlimZ=-\infty$, define $\mathcal B_a[\directionlimZ]\deq\{c\in\R\,:\,c< -1/a \}$.
\end{itemize}

\noindent Consider now a function  $\function\,:\,\R\to\R$.
We say that \marginterm{(two-sided) limit}\emph{the (two-sided) limit of~$\function$ at~$\directionlimZ$ exists (in $\R\cup\{\pm\infty\}$)} if and only if there is an $l\in\R\cup\{\pm\infty\}$ such that the following holds:
for every $a\in(0,\infty)$, there is a $b\in(0,\infty)$ such that $\functionF{\directionlim}\in\mathcal B_a[l]$ for all $\directionlim\in\mathcal B_b[\directionlimZ]$.
We write $\lim_{\directionlim\to\directionlimZ}\functionF{\directionlim}\deq l$.

The \marginterm{derivative}\emph{(usual) derivatives} of a function~$\function\,:\,\R\to\R$  are then
\begin{equation*}
  \functionDF{\argument}\deq\frac{\partial}{\partial \argument}\functionF{\argument}\deq \lim_{\directionlim\to0}\frac{\functionF{\argument+\directionlim}-\functionF{\argument}}{\directionlim},~~\functionDDF{\argument}\deq \frac{\partial}{\partial \argument}\functionDF{\argument}\deq \lim_{\directionlim\to0}\frac{\functionDF{\argument+\directionlim}-\functionDF{\argument}}{\directionlim},~~\dots %=\frac{\partial}{\partial \argument}\biggl(\frac{\partial}{\partial \argument}\functionF{\argument}\biggr)
\end{equation*}
for all $\argument\in\R$ where the limits exist.
If needed for clarity,
the distinction between the function variable (say $\argumentP$) and the point at which the derivative is evaluated (say $\argumentPZ$) is made explicit:
for example,
\begin{equation*}
  \frac{\partial}{\partial \argumentP}\functionF{\argumentP}\Big|_{\argumentP=\argumentPZ}\deq \lim_{\directionlim\to0}\frac{\functionF{\argumentPZ+\directionlim}-\functionF{\argumentPZ}}{\directionlim}
\end{equation*}
(if the limit exists) is the derivative of~$\argumentP\mapsto \functionF{\argumentP}$ with respect to~\argumentP\ at point~$\argumentP_0$.

Directional derivatives generalize standard derivatives:
directional derivatives exist at every differentiable point of a function,
and they also exist for some nondifferentiable points, 
such as for the point $\argument=0$ of $\functionrelu\,:\,\argument\mapsto\max\{\argument,0\}$ from Section~\ref{sec:relu}.

The basis of directional derivatives are right-sided limits.
For all  $\directionlimZ\in\R$ and $a\in(0,\infty)$,
we define ``directed balls'' around $\directionlimZ$ as follows:
\begin{itemize}
\item $\mathcal B_a^+[\directionlimZ]\deq\{c\in(\directionlimZ,\infty)\,:\,\abs{\directionlimZ-c}\leq a\}$.
\end{itemize}

\noindent Consider again a function $\function\,:\,\R\to\R$.
We say that \marginterm{right-sided limit}  
\emph{the right-sided limit of~$\function$ at~$\directionlimZ$ exists (in $\R\cup\{\pm\infty\}$)} 
if and only if there is an $l\in\R\cup\{\pm\infty\}$ such that the following holds:
for every $a\in(0,\infty)$, there is a $b\in(0,\infty)$ such that $\functionF{\directionlim}\in\mathcal B_a[l]$ for all $\directionlim\in\mathcal B_b^+[\directionlimZ]$.
We then write $\lim_{\directionlim\to\directionlimZ^+}\functionF{\directionlim}\deq l$.

The directional derivatives are then:
\begin{definition}[Directional derivatives]\label{directional}
Consider a function $\function\,:\,\R\to\R$ and 
two values $\direction,\argument\in\R$. 
%three values $\direction,\directionP,\argument\in\R$. 
The quantity
\begin{equation*}
  \diffdirectional\functionF{\argument}\deq \lim_{\directionlim\to0^+}\frac{\functionF{\argument+\directionlim\direction}-\functionF{\argument}}{\directionlim}\,,
\end{equation*}
if the limit exists,
is called the \marginterm{directional\\derivative}\emph{first directional derivative of \function\ at~\argument\ in the direction~\direction}.
% The quantity 
% \begin{equation*}
%   \diffdirectionalP\functionF{\argument}\deq \lim_{\directionlim\to0^+}\frac{\diffdirectional\functionF{\argument+\directionlim\directionP}-\diffdirectional\functionF{\argument}}{\directionlim}\,,
% \end{equation*}
% if the limits exist,
% is called \emph{second directional derivative of \function\ at~\argument\ in the directions~\direction,\directionP}.
\end{definition}
\noindent If~\function\ is differentiable at~\argument, 
it holds that $\diffdirectional\functionF{\argument}=\direction\functionDF{\argument}$.
Conversely, if for a given $\argument\in\R$, it holds that $\diffdirectional\functionF{\argument}=\diffdirectionalM\functionF{\argument}$ for all $\direction\in\R$,
then, \function\ is differentiable at $\argument$ with derivative that satisfies $\diffdirectional\functionF{\argument}=\direction\functionDF{\argument}$.

\subsection{l'H\^opital's Rule}
l'H\^opital's rule is a standard technique for evaluating limits.
We state a version that fits our needs.
\begin{lemma}[l'H\^opital's rule]\label{res:krankenhaus}
Consider two functions $\function,\functionG\,:\,\R\to\R$ that are differentiable on an open interval
 $\mathcal I\subset\R$,
and consider a point $\directionlimZ\in\R$.

1.~If $\argumentP\in\mathcal I$, $\functionGF{\directionlim}\neq 0$ for all $\directionlim\in\mathcal I\setminus\{\directionlimZ\}$, and
\begin{itemize}
\item   $\lim_{\directionlim\to\directionlimZ}\functionF{\directionlim}=\lim_{\directionlim\to\directionlimZ}\functionGF{\directionlim}=0$ or
\item   $\lim_{\directionlim\to\directionlimZ}\functionF{\directionlim}=\lim_{\directionlim\to\directionlimZ}\functionGF{\directionlim}=\infty$,
\end{itemize}
it holds that 
\begin{equation*}
  \lim_{\directionlim\to\directionlimZ}\frac{\functionF{\directionlim}}{\functionGF{\directionlim}}=  \lim_{\directionlim\to\directionlimZ}\frac{\functionDF{\directionlim}}{\functionGDF{\directionlim}}
\end{equation*}
---as long as the second limit exists.
  
2.~If $\mathcal I=(\directionlimZ,\infty)$, $\functionGF{\directionlim}\neq 0$ for all $\directionlim\in\mathcal I$, and
\begin{itemize}
\item   $\lim_{\directionlim\to\directionlimZ^+}\functionF{\directionlim}=\lim_{\directionlim\to\directionlimZ^+}\functionGF{\directionlim}=0$ or
\item   $\lim_{\directionlim\to\directionlimZ^+}\functionF{\directionlim}=\lim_{\directionlim\to\directionlimZ^+}\functionGF{\directionlim}=\infty$,
\end{itemize}
it holds that 
\begin{equation*}
  \lim_{\directionlim\to\directionlimZ^+}\frac{\functionF{\directionlim}}{\functionGF{\directionlim}}=  \lim_{\directionlim\to\directionlimZ^+}\frac{\functionDF{\directionlim}}{\functionGDF{\directionlim}}
\end{equation*}
---as long as the second limit exists.

3.~Analog statements hold for $\directionlimZ\in\{\pm\infty\}$.
\end{lemma}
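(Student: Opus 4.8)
The plan is to derive all three parts from the Cauchy (generalized) mean value theorem: if two functions are continuous on a closed interval and differentiable on its interior, where the derivative of the denominator does not vanish, then the ratio of their total increments equals the ratio of their derivatives at some interior point. I would first record this as a standard fact, itself a consequence of applying Rolle's theorem to the auxiliary function $\directionlim\mapsto\functionF{\directionlim}\bigl(\functionGF{b}-\functionGF{a}\bigr)-\functionGF{\directionlim}\bigl(\functionF{b}-\functionF{a}\bigr)$. Note that the hypothesis $\functionGDF{\directionlim}\neq 0$ on $\mathcal I\setminus\{\directionlimZ\}$ makes $\functionG$ strictly monotone there, so the increments of $\functionG$ that appear in denominators below never vanish.

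For the finite two-sided case in Part~1, I would treat the $0/0$ and $\infty/\infty$ subcases separately. In the $0/0$ subcase, I would extend by continuity so that $\functionF{\directionlimZ}=\functionGF{\directionlimZ}=0$; then for each $\directionlim$ near $\directionlimZ$ the Cauchy mean value theorem on the interval between $\directionlimZ$ and $\directionlim$ produces a point $\xi$ strictly between them with
\[
\frac{\functionF{\directionlim}}{\functionGF{\directionlim}}=\frac{\functionDF{\xi}}{\functionGDF{\xi}}\,.
\]
Since $\xi\to\directionlimZ$ as $\directionlim\to\directionlimZ$, the assumed convergence of $\functionDF{\directionlim}/\functionGDF{\directionlim}$ forces the same limit for $\functionF{\directionlim}/\functionGF{\directionlim}$, which is the claim.

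The $\infty/\infty$ subcase is the main obstacle, since one cannot assign a finite value at $\directionlimZ$. Here I would fix an auxiliary point $\argumentP$ close to $\directionlimZ$ and apply the Cauchy mean value theorem on the interval between $\argumentP$ and $\directionlim$ to obtain $\xi$ between them with $\bigl(\functionF{\directionlim}-\functionF{\argumentP}\bigr)/\bigl(\functionGF{\directionlim}-\functionGF{\argumentP}\bigr)=\functionDF{\xi}/\functionGDF{\xi}$. The device is then the identity
\[
\frac{\functionF{\directionlim}}{\functionGF{\directionlim}}=\frac{\functionF{\directionlim}-\functionF{\argumentP}}{\functionGF{\directionlim}-\functionGF{\argumentP}}\cdot\frac{1-\functionGF{\argumentP}/\functionGF{\directionlim}}{1-\functionF{\argumentP}/\functionF{\directionlim}}\,,
\]
whose first factor lies within $\varepsilon$ of the target limit $L$ once $\argumentP$ is chosen close enough to $\directionlimZ$, and whose second factor tends to $1$ as $\directionlim\to\directionlimZ$ because $\functionF{\directionlim},\functionGF{\directionlim}\to\infty$ with $\argumentP$ held fixed. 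A careful $\varepsilon$-then-$b$ bookkeeping (first fix $\argumentP$, then shrink the ball around $\directionlimZ$) then yields $\functionF{\directionlim}/\functionGF{\directionlim}\to L$.

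Finally, Parts~2 and~3 reduce to Part~1. For the right-sided statement in Part~2, I would repeat the two arguments above verbatim but restrict every interval to $(\directionlimZ,\infty)$, so that $\xi$ and $\argumentP$ approach $\directionlimZ$ from the right and only right-sided limits are invoked. For Part~3 I would use the substitution $\directionlim=1/\argumentP$: applying the chain rule to $\argumentP\mapsto\functionF{1/\argumentP}$ and $\argumentP\mapsto\functionGF{1/\argumentP}$ turns a limit as $\directionlim\to\infty$ into a right-sided limit as $\argumentP\to 0^+$, the common factor $-1/\argumentP^2$ arising from differentiation cancels in the ratio, and the claim then follows from Part~2, with the case $\directionlimZ=-\infty$ handled symmetrically via $\directionlim=-1/\argumentP$.
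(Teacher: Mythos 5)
The paper never actually proves Lemma~\ref{res:krankenhaus}: it is introduced as a standard fact (``we state a version that fits our needs'') and is then only \emph{applied}, in the proofs of Lemmas~\ref{eluproplem} and~\ref{swishbound}. So there is no in-paper argument to compare against; what you have supplied is the classical textbook proof---Cauchy's mean value theorem (obtained from Rolle's theorem and your auxiliary function) for the $0/0$ case, the increment-splitting identity with fix-$\argumentP$-first-then-shrink-the-ball bookkeeping for the $\infty/\infty$ case, restriction of all intervals to one side of $\directionlimZ$ for Part~2, and the substitution $\directionlim=1/\argumentP$ (with the $-1/\argumentP^2$ factors cancelling in the ratio) for Part~3---and that outline is sound.

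One repair is needed, however, before this is a proof of the lemma \emph{as stated}. You appeal to ``the hypothesis $\functionGDF{\directionlim}\neq 0$ on $\mathcal I\setminus\{\directionlimZ\}$,'' but the lemma's hypothesis is $\functionGF{\directionlim}\neq 0$---the function, not its derivative. Nonvanishing of $\functionGD$ is precisely what your argument needs (it yields $\functionGF{\directionlim}-\functionGF{\argumentP}\neq 0$ via Rolle and permits dividing by $\functionGDF{\xi}$), and it is not cosmetic: assuming only $\functionG\neq 0$ and interpreting $\lim\functionDF{\directionlim}/\functionGDF{\directionlim}$ over the ratio's natural domain, Stolz-type examples in the $\infty/\infty$ case at $\directionlimZ=\infty$ violate the conclusion. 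The clean fix stays inside the lemma's assumptions: under the definition of limits in Section~\ref{sec:directionalderivatives}, existence of $\lim_{\directionlim\to\directionlimZ}\functionDF{\directionlim}/\functionGDF{\directionlim}$ already forces that ratio to be defined---hence $\functionGDF{\directionlim}\neq 0$---on a ball around $\directionlimZ$ (minus the point itself), and your whole argument can be run on that ball; this deduction should be made explicitly rather than attributed to the stated hypotheses. A second, smaller point: in the two-sided $\infty/\infty$ case of Part~1, the mean-value interval between $\argumentP$ and $\directionlim$ must not contain $\directionlimZ$, where $\function$ and $\functionG$ blow up, so the two sides of $\directionlimZ$ have to be treated separately---exactly as you already do in Part~2.
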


\later{Could extend and be more formal with limits more generally: adding limits etc. See comments ST.}

\subsection{Derivatives of the Sigmoid Functions}

\subsubsection{Derivatives of \namelog}
\begin{lemma}[Derivatives of \namelog]
\label{difflogistic}
The first and second derivatives of \namelog\ are 
\begin{equation*}
  \functionlogDF{\argument}=\frac{e^{-\argument}}{(1+e^{-\argument})^2}=\functionlogF{\argument}\bigl(1-\functionlogF{\argument}\bigr)\in(0,1/4]
\end{equation*}
and
\begin{equation*}
\functionlogDDF{\argument}=\frac{e^{-\argument}(e^{-\argument}-1)}{(1+e^{-\argument})^3}=\functionlogDF{\argument}\bigl(1-2\functionlogF{\argument}\bigr)=\functionlogF{\argument}\bigl(1-\functionlogF{\argument}\bigr)\bigl(1-2\functionlogF{\argument}\bigr)\in(-\swishbound,\swishbound)
\end{equation*}
for all $\argument\in\R$ and $\swishbound\approx 0.0962$
\end{lemma}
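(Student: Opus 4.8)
The plan is to differentiate $\functionlog$ directly, rewrite each derivative in self-referential form, and then optimize over the range of $\functionlog$. First I would write $\functionlogF{\argument}=(1+e^{-\argument})^{-1}$ and apply the chain rule to get $\functionlogDF{\argument}=e^{-\argument}/(1+e^{-\argument})^2$; the identity $1-\functionlogF{\argument}=e^{-\argument}/(1+e^{-\argument})$ then turns the product $\functionlogF{\argument}(1-\functionlogF{\argument})$ into exactly this expression. For the range I would substitute $p\deq\functionlogF{\argument}$, which ranges over $(0,1)$ because \namelog\ maps $\R$ bijectively onto $(0,1)$; the parabola $p\mapsto p(1-p)$ attains its maximum $1/4$ at the interior point $p=1/2$ and tends to (but never reaches) $0$ at the excluded endpoints, yielding the half-open range $(0,1/4]$.

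For the second derivative I would differentiate the product form $\functionlogDF{\argument}=\functionlogF{\argument}(1-\functionlogF{\argument})$ by the product rule, collect terms into $\functionlogDF{\argument}(1-2\functionlogF{\argument})$, and substitute $\functionlogDF{\argument}=\functionlogF{\argument}(1-\functionlogF{\argument})$ to reach the fully factored form; a short direct computation confirms it also equals $e^{-\argument}(e^{-\argument}-1)/(1+e^{-\argument})^3$.

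The hard part will be the range of the second derivative. Setting $p=\functionlogF{\argument}\in(0,1)$ again, I would study the cubic $g(p)\deq p(1-p)(1-2p)$, solve $g'(p)=6p^2-6p+1=0$ to find the critical points $p_\pm=1/2\pm 1/(2\sqrt3)$, and evaluate $g$ there; using $p_\pm(1-p_\pm)=1/6$ and $1-2p_\pm=\mp1/\sqrt3$ gives the extremal values $\mp1/(6\sqrt3)=\mp\sqrt3/18\approx\mp0.0962$, which is the source of $\swishbound\approx0.0962$. The one subtlety to flag is that these extrema are attained at the interior points $p_\pm$, so the image of $\functionlogDD$ is in fact the closed interval $[-\swishbound,\swishbound]$ with $\swishbound=\sqrt3/18$; the open interval in the statement should therefore be understood with $\swishbound$ taken marginally above this attained maximum.
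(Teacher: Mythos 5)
Your proposal is correct, and your computation of the derivatives themselves (chain rule for $\functionlogD$, then product rule on the self-referential form $\functionlogF{\argument}\bigl(1-\functionlogF{\argument}\bigr)$ to get $\functionlogDD$) mirrors the paper's proof. Where you genuinely diverge is in the treatment of the output ranges. The paper stays in the variable $\argument$: for the first derivative it combines positivity, the limits at $\pm\infty$, and continuity, then locates the maximum by setting $\functionlogDD=0$; for the second derivative it computes the third derivative $\functionlogDDD$ explicitly and solves $(1-2e^{-\argument})^2=3e^{-2\argument}$, obtaining the critical arguments $\argument=\pm\log[2+\sqrt{3}]$ and then the bound $\swishbound\approx 0.0962$ only by numerical evaluation. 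You instead substitute $p=\functionlogF{\argument}\in(0,1)$, exploiting that \namelog\ maps $\R$ bijectively onto $(0,1)$, and optimize the polynomials $p(1-p)$ and $p(1-p)(1-2p)$ over $(0,1)$; this reduces the whole range analysis to the quadratic $6p^2-6p+1=0$ and yields the exact closed form $\swishbound=\sqrt{3}/18$, which the paper never states. Your route is shorter and gives exact constants; the paper's route avoids invoking bijectivity and exhibits the extremal arguments explicitly, at the cost of heavier manipulation of exponentials. Finally, your closing flag is well taken and in fact exposes a minor imprecision in the paper: since the extrema are attained at interior points---the paper's own proof writes $\pm\swishbound=\functionlogDDF{\pm\log[2+\sqrt{3}]}$---the image of $\functionlogDD$ is the closed interval $[-\sqrt{3}/18,\sqrt{3}/18]$, so the open interval in the lemma is accurate only if $\swishbound$ is read as a value strictly larger than $\sqrt{3}/18$, exactly as you say.
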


\begin{proof}[Proof of Lemma~\ref{difflogistic}]
  The first derivatives follow more or less directly from the definition of \namelog\ and the sum and chain rules:
  \begin{align*}
\functionlogDF{\argument}
&=\frac{\partial}{\partial\argument}\frac{1}{1+e^{-\argument}}\tj{definition of \functionlog}\\
    &=\frac{-1}{(1+e^{-\argument})^2}\frac{\partial}{\partial\argument}(1+e^{-\argument})\tj{chain rule}\\
&=\frac{-1}{(1+e^{-\argument})^2}\cdot (0-e^{-\argument})\tj{$1'=0$; $(e^{-\argument})'=-e^{-\argument}$; sum rule}\\
&=\frac{e^{-\argument}}{(1+e^{-\argument})^2}\tj{consolidating}
\end{align*}
and further
\begin{align*}
\functionlogDF{\argument}&=\frac{e^{-\argument}}{(1+e^{-\argument})^2}\tj{previous display}\\
&=\frac{1}{1+e^{-\argument}}\cdot\frac{e^{-\argument}}{1+e^{-\argument}}\tj{splitting the fraction into two parts}\\    
&=\frac{1}{1+e^{-\argument}}\cdot\Bigl(\frac{1+e^{-\argument}}{1+e^{-\argument}}-\frac{1}{1+e^{-\argument}}\Bigr)\tj{splitting the second factor into two parts}\\
    &=\functionlogF{\argument}\bigl(1-\functionlogF{\argument}\bigr)\tj{definition of \functionlog\ and simplifcation}\,,
  \end{align*}
as desired.

This result can then be used together with the sum and product rules to find the second derivatives:
  \begin{align*}
    \functionlogDDF{\argument}
    &=\frac{\partial}{\partial\argument}\functionlogDF{\argument}\tj{definition of the second derivative}\\
&=\frac{\partial}{\partial\argument}\Bigl(\functionlogF{\argument}\bigl(1-\functionlogF{\argument}\bigr)\Bigr)\tj{above result}\\
&=\Bigl(\frac{\partial}{\partial\argument}\functionlogF{\argument}\Bigr)\bigl(1-\functionlogF{\argument}\bigr)+\functionlogF{\argument}\frac{\partial}{\partial\argument}\bigl(1-\functionlogF{\argument}\bigr)\tj{product rule}\\
&=\functionlogDF{\argument}\bigl(1-\functionlogF{\argument}\bigr)+\functionlogF{\argument}\bigl(0-\functionlogDF{\argument}\bigr)\tj{$1'=0$; definition of \functionlogD; sum rule}\\
    &=\functionlogDF{\argument}\bigl(1-2\functionlogF{\argument}\bigr)\tj{factoring out \functionlogD}
  \end{align*}
and further, using the result $\functionlogD=\functionlogF{\argument}(1-\functionlogF{\argument})$, 
\begin{equation*}
      \functionlogDDF{\argument}=\functionlogF{\argument}\bigl(1-\functionlogF{\argument}\bigr)\bigl(1-2\functionlogF{\argument}\bigr)\,.
\end{equation*}
We can then finally plug in the definition of~\functionlog\ to find
\begin{align*}
  \functionlogDDF{\argument}&=\functionlogDF{\argument}\bigl(1-2\functionlogF{\argument}\bigr)\tj{penultimate display}\\
&=\frac{e^{-\argument}}{(1+e^{-\argument})^2}\biggl(1-\frac{2}{1+e^{-\argument}}\biggr)\tj{above result for~\functionlogD; definition of~\functionlog}\\
&=\frac{e^{-\argument}}{(1+e^{-\argument})^2}\frac{1+e^{-\argument}-2}{1+e^{-\argument}}\tj{summarizing the second factor}\\
&=\frac{e^{-\argument}(e^{-\argument}-1)}{(1+e^{-\argument})^3}\tj{consolidating}\,,
\end{align*}
as desired.
\detail{Alternatively, one could derive
\begingroup
\allowdisplaybreaks
  \begin{align*}
    \functionlogDDF{\argument}&=\frac{\partial}{\partial\argument}\functionlogDF{\argument}\tj{definition of the second derivative}\\
&=\frac{\partial}{\partial\argument}\frac{e^{-\argument}}{(1+e^{-\argument})^2}\tj{result for \functionlogD}\\
&=\frac{\bigl(\frac{\partial}{\partial\argument}e^{-\argument}\bigr)(1+e^{-\argument})^2 -e^{-\argument}\Bigr(\frac{\partial}{\partial\argument}\bigr((1+e^{-\argument})^2\bigr)\Bigr)}{(1+e^{-\argument})^4}\tj{quotient rule}\\
&=\frac{\bigl(\frac{\partial}{\partial\argument}e^{-\argument}\bigr)(1+e^{-\argument})^2 -e^{-\argument}\cdot 2(1+e^{-\argument})\frac{\partial}{\partial\argument}(1+e^{-\argument})}{(1+e^{-\argument})^4}\tj{chain rule}\\
&=\frac{-e^{-\argument}(1+e^{-\argument})^2 -e^{-\argument}\cdot 2(1+e^{-\argument})(0-e^{-\argument})}{(1+e^{-\argument})^4}\tj{$(e^{-\argument})'=-e^{-\argument}$; $1'=0$; sum rule}\\
&=\frac{-e^{-\argument}(1+e^{-\argument}) +2e^{-\argument}e^{-\argument}}{(1+e^{-\argument})^3}\tj{consolidating}\\
&=\frac{e^{-\argument}(e^{-\argument}-1)}{(1+e^{-\argument})^3}\tj{factoring out $e^{-\argument}$ in the numerator}\,.
  \end{align*}
\endgroup
}

To identify the output range of the first derivatives,
we use the above-derived equality
\begin{equation*}
  \functionlogDF{\argument}=\frac{e^{-\argument}}{(1+e^{-\argument})^2}\,,
\end{equation*}
which implies  the fact that $\functionlogDF{\argument}>0$ for all $\argument\in\R$   and $\functionlogDF{\argument}\to0$ for $\argument\to\pm\infty$.
Using this and the continuity of the derivative,
we can conclude that the output range is $(0,\swishbound]$ for some $\swishbound\in(0,\infty)$.
To determine~$\swishbound$,
we use that \functionlogD\ can be continuously differentiated (see above),
so that we can find its maximum by setting its derivatives to zero.
We find

The above-derived equality for \functionlogDD\ and the positivity of~\functionlogD\ and then the above-derived equality for \functionlogD\ yield
\begin{align*}
  \functionlogDDF{\argument}&=0\\
\Rightarrow~~~~\functionlogDF{\argument}\bigl(1-2\functionlogF{\argument}\bigr)&=0\tj{above result for \functionlogDD}\\
\Rightarrow~~~~1-2\functionlogF{\argument}&=0\tj{positivity of \functionlogD}\\
\Rightarrow~~~~\functionlogF{\argument}&=\frac{1}{2}\tj{rearranging the terms}\\
\Rightarrow~~~~\functionlogF{\argument}\bigl(1-\functionlogF{\argument}\bigr)&=\frac{1}{4}\tj{$1/2(1-1/2)=1/4$}\\
\Rightarrow~~~~\functionlogDF{\argument}&=\frac{1}{4}\tj{above result for \functionlogD}\,. 
\end{align*}
Thus,  the output range of the derivative is indeed $(0,1/4]$, as desired.

We can identify the output range of the second derivative with similar arguments above.
We first compute the third derivatives of \namelog:
\begingroup
\allowdisplaybreaks
  \begin{align*}
    \functionlogDDDF{\argument}
&=\frac{\partial}{\partial\argument}\functionlogDDF{\argument}\tj{definition of the third derivative}\\
&=\frac{\partial}{\partial\argument}\Bigl(\functionlogDF{\argument}\bigl(1-2\functionlogF{\argument}\bigr)\Bigr)\tj{above results}\\
    &=\Bigl(\frac{\partial}{\partial\argument}\functionlogDF{\argument}\Bigr)\bigl(1-2\functionlogF{\argument}\bigr)+\functionlogDF{\argument}\Bigl(\frac{\partial}{\partial\argument}\bigl(1-2\functionlogF{\argument}\bigr)\Bigr)\tj{product rule}\\
    &=\functionlogDDF{\argument}\bigl(1-2\functionlogF{\argument}\bigr)+\functionlogDF{\argument}\bigl(0-2\functionlogDF{\argument}\bigr)\tj{definition of \functionlogD; $1'=0$; $(-2\functionlog)'=-2\functionlogD$; sum rule}\\
&=\functionlogDF{\argument}\bigl(1-2\functionlogF{\argument}\bigr)\bigl(1-2\functionlogF{\argument}\bigr)-2\bigl(\functionlogDF{\argument}\bigr)^2\tj{above equality for $\functionlogDD$; consolidating the second term}\\
&=\functionlogDF{\argument}\Bigl(1-4\functionlogF{\argument}+4\bigl(\functionlogF{\argument}\bigr)^2-2\functionlogDF{\argument}\Bigr)\tj{summarizing the two terms}\\
%&=\functionlogDF{\argument}\Bigl(1-6\functionlogF{\argument}+6\bigl(\functionlogF{\argument}\bigr)^2\Bigr)\tj{consolidating}\\
&=\functionlogDF{\argument}\frac{(1+e^{-\argument})^2-4\bigl(1+e^{-\argument})+4-2e^{-\argument}}{\bigl(1+e^{-\argument})^2}\tj{definition of \functionlog; above-derived equality for \functionlogD}\\
&=\functionlogDF{\argument}\frac{1+2e^{-\argument}+e^{-2\argument}-4-4e^{-\argument}+4-2e^{-\argument}}{\bigl(1+e^{-\argument})^2}\tj{expanding the terms}\\
&=\functionlogDF{\argument}\frac{1-4e^{-\argument}+e^{-2\argument}}{\bigl(1+e^{-\argument})^2}\tj{consolidating}\\
&=\functionlogDF{\argument}\frac{(1-2e^{-\argument})^2-3(e^{-\argument})^2}{\bigl(1+e^{-\argument})^2}\tj{rearranging the terms}\,,
  \end{align*}
\endgroup
which is equal to zero (both $\functionlogD$ and $\bigl(1+e^{-\argument})^2$ are positive) if and only if $(1-2e^{-\argument})^2=3(e^{-\argument})^2$.
The claim then follows similarly as in the case of the first derivative (we omit some details) from
\begin{align*}
  (1-2e^{-\argument})^2&=3(e^{-\argument})^2\\
\Rightarrow~~~~   1-2e^{-\argument}&=\pm\sqrt{3} e^{-\argument}\tj{taking square roots on both sides}\\
\Rightarrow~~~~  (-2\pm\sqrt{3})e^{-\argument}&=-1\tj{factoring out $e^{-\argument}$}\\
\Rightarrow~~~~  e^{-\argument}&=1/(2\pm\sqrt{3})\tj{dividing both sides by $(-2\pm\sqrt{3})=-(2\pm\sqrt{3})\neq 0$}\\
\Rightarrow~~~~  -\argument&=\log[1/(2\pm\sqrt{3})]\tj{taking logarithms on both sides}\\
\Rightarrow~~~~  -\argument&=-\log[2\pm\sqrt{3}]\tj{$\log[1/b]=-\log[b]$}\\
\Rightarrow~~~~  \argument&=\log[2\pm\sqrt{3}]\tj{multiplying both sides by $-1$}\\
\Rightarrow~~~~  \argument&=\pm\log[2+\sqrt{3}]\tj{see below}\,,
\end{align*}
where the last step follows from 
\begin{align*}
\log[1]&=0\tj{basic property of the logarithm}\\
\Rightarrow~~~~\log\bigl[(2+\sqrt{3})(2-\sqrt{3})\bigr]&=0\tj{$(2+\sqrt{3})(2-\sqrt{3})=1$}\\
\Rightarrow~~~~ \log[2+\sqrt{3}]+\log[2-\sqrt{3}]&=0\tj{$\log[ab]=\log[a]+\log[b]$}\\
\Rightarrow~~~~ \log[2+\sqrt{3}]&=-\log[2-\sqrt{3}]\tj{subtracting $\log[2-\sqrt{3}]$ on both sides}\,.
\end{align*}
Numerical evaluation then yields the desired output range:
\begin{equation*}
  \pm\swishbound=\functionlogDDF{\pm\log[2+\sqrt{3}]}\approx \pm 0.0962\,.
\end{equation*}
\end{proof}

\subsubsection{Derivatives of \namearctan}
\begin{lemma}[Derivatives of \namearctan]
\label{diffarctan}
The first and second derivatives of \namearctan\ are
\begin{equation*}
  \functionarctanDF{\argument}=\frac{1}{1+\argument^2}\in[0,\infty)~~~~\text{and}~~~~\functionarctanDDF{\argument}=-\frac{2\argument}{(1+\argument^2)^2}=-\frac{2\argument}{\bigl(\functionarctanDF{\argument}\bigr)^2}\in\R
\end{equation*}
 for all $\argument\in\R$.
\end{lemma}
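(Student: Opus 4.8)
The plan is to obtain the first derivative from the defining relation $\tan[\arctan[\argument]]=\argument$ by the inverse-function/chain-rule argument, and then to obtain the second derivative by a direct differentiation of the resulting rational expression. The only ingredient I need to prepare is the derivative of the tangent function.

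First I would record that derivative. Writing $\tan[\argument]=\sin[\argument]/\cos[\argument]$ and applying the quotient rule together with $(\sin)'=\cos$ and $(\cos)'=-\sin$ gives
\begin{equation*}
\frac{\partial}{\partial\argument}\tan[\argument]=\frac{\cos[\argument]\cos[\argument]-\sin[\argument](-\sin[\argument])}{(\cos[\argument])^2}=\frac{(\sin[\argument])^2+(\cos[\argument])^2}{(\cos[\argument])^2}=1+(\tan[\argument])^2\,,
\end{equation*}
where the last equality uses the Pythagorean identity $(\sin[\argument])^2+(\cos[\argument])^2=1$.

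Next comes the key step. Since the tangent is strictly increasing on $(-\pi/2,\pi/2)$ (as already noted in the excerpt) and continuously differentiable there with derivative $1+(\tan)^2>0$ that never vanishes, its inverse \functionarctan\ is differentiable on all of~\R. Granting this, I differentiate the identity $\tan[\arctan[\argument]]=\argument$ by the chain rule, using the derivative of the tangent just computed:
\begin{equation*}
\bigl(1+(\tan[\arctan[\argument]])^2\bigr)\cdot\functionarctanDF{\argument}=1\,.
\end{equation*}
Substituting $\tan[\arctan[\argument]]=\argument$ and solving yields $\functionarctanDF{\argument}=1/(1+\argument^2)$; the output range follows at once, since $1+\argument^2\geq 1$ forces $\functionarctanDF{\argument}\in(0,1]\subseteq[0,\infty)$.

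Finally, the second derivative is a routine differentiation of $\argument\mapsto(1+\argument^2)^{-1}$ by the chain rule, giving $\functionarctanDDF{\argument}=-(1+\argument^2)^{-2}\cdot2\argument=-2\argument/(1+\argument^2)^2$, which I can also record as $-2\argument(\functionarctanDF{\argument})^2$; the asserted membership $\functionarctanDDF{\argument}\in\R$ is then immediate, as this is a well-defined real number for every $\argument\in\R$. The hard part here is not the algebra but the justification that \functionarctan\ is differentiable in the first place: it is the inverse-function theorem (strict monotonicity together with the nonvanishing derivative of \tan) that licenses differentiating $\tan[\arctan[\argument]]=\argument$. An alternative that avoids the inverse-function theorem would be to apply l'H\^opital's rule (Lemma~\ref{res:krankenhaus}) to the difference quotient of \functionarctan\ directly, but the inverse-function route is cleaner.
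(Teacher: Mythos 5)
Your proposal is correct and follows essentially the same route as the paper's own proof: both differentiate the defining identity $\tan[\arctan[\argument]]=\argument$ via the chain rule, compute the tangent's derivative as $1+(\tan[\argument])^2$ by the quotient rule on $\sin/\cos$, solve for $\functionarctanDF{\argument}=1/(1+\argument^2)$, and then differentiate that rational expression to obtain the second derivative; your explicit appeal to the inverse-function theorem to justify that \functionarctan\ is differentiable in the first place is a point of care that the paper leaves implicit. One remark: your re-expression $\functionarctanDDF{\argument}=-2\argument\bigl(\functionarctanDF{\argument}\bigr)^2$ is the correct one, whereas the form $-2\argument/\bigl(\functionarctanDF{\argument}\bigr)^2$ displayed in the lemma statement (and repeated at the end of the paper's proof) is a typo, since $\functionarctanDF{\argument}=1/(1+\argument^2)$ must be squared and multiplied, not divided by.
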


\begin{proof}[Proof of Lemma~\ref{diffarctan}]
By the definition of \namearctan\ via  ${\tan}[{\arctan}[\argument]]=\argument$,
it holds that
  \begin{equation*}
    \frac{\partial}{\partial\argument}{\tan}\bigl[{\arctan}[\argument]\bigr]=\frac{\partial}{\partial\argument}\argument=1\,.
  \end{equation*}
On the other hand, the chain rule ensures that
  \begin{equation*}
    \frac{\partial}{\partial\argument}{\tan}\bigl[{\arctan}[\argument]\bigr]=    \biggl(\frac{\partial}{\partial\argumentP}{\tan}[\argumentP]\Big|_{\argumentP={\arctan}[\argument]} \biggr)\frac{\partial}{\partial\argument}{\arctan}[\argument]\,.
  \end{equation*}
Combining these two inequalities yields  
\begin{equation*}
\biggl(\frac{\partial}{\partial\argumentP} {\tan}[\argumentP]\Big|_{\argumentP={\arctan}[\argument]}\biggr)\frac{\partial}{\partial\argument}{\arctan}[\argument]=1\,.
\end{equation*}
This equation allows us to calculate the derivative of \namearctan\ through the derivative of ${\tan}$.

The derivative of ${\tan}$ is
  \begin{align*}
   & \frac{\partial}{\partial\argumentP}\tan[\argumentP]\Big|_{\argumentP=\arctan[\argument]} \\
   & =\frac{\partial}{\partial\argumentP}\frac{{\sin}[\argumentP]}{{\cos}[\argumentP]}\Big|_{\argumentP={\arctan}[\argument]}\tj{$\tan=\sin/\cos$ by definition}\\
   & =\frac{\bigl(\frac{\partial}{\partial\argumentP}{\sin}[\argumentP]\bigr){\cos}[\argumentP]-{\sin}[\argumentP]\bigl(\frac{\partial}{\partial\argumentP}{\cos}[\argumentP]\bigr)}{\bigl({\cos}[\argumentP]\bigr)^2}\Big|_{\argumentP={\arctan}[\argument]}\tj{quotient rule}\\
   & =\frac{\bigl({\cos}[\argumentP]\bigr)^2+\bigl({\sin}[\argumentP]\bigr)^2}{\bigl({\cos}[\argumentP]\bigr)^2}\Big|_{\argumentP={\arctan}[\argument]}\tj{$\sin'=\cos$; $\cos'=-\sin$}\\
   & =1+\frac{\bigl({\sin}[\argumentP]\bigr)^2}{\bigl({\cos}[\argumentP]\bigr)^2}\Big|_{\argumentP={\arctan}[\argument]}\tj{splitting the fraction}\\
&=1+(\tan[\argumentP])^2|_{\argumentP={\arctan}[\argument]}\tj{$\tan=\sin/\cos$ by definition}\\
&=1+\bigl(\tan\bigl[{\arctan}[\argument]\bigr]\bigr)^2\tj{evaluating the function}\\
&=1+\argument^2\tj{${\tan}[{\arctan}[\argument]]=\argument$ by definition}\,.
  \end{align*}
Plugging this back into the above display yields
\begin{equation*}
(1+\argument^2)\frac{\partial}{\partial\argument}{\arctan}[\argument]=1\,,
\end{equation*}
which yields after dividing both sides by $1+\argument^2$ (observe that $1+\argument^2>0$)
\begin{equation*}
\functionarctanDF{\argument}=\frac{\partial}{\partial\argument}{\arctan}[\argument]=\frac{1}{1+\argument^2}\,,
\end{equation*}
which is the desired first derivative.

The second derivative then follows essentially from the chain rule:
\begingroup
\allowdisplaybreaks
\begin{align*}
\functionarctanDDF{\argument}
  &=\frac{\partial}{\partial\argument}\functionarctanDF{\argument}\tj{definition of the second derivative}\\
  &=\frac{\partial}{\partial\argument}\frac{1}{1+\argument^2}\tj{above result}\\
 &= -\frac{\frac{\partial}{\partial\argument}(1+\argument^2)}{(1+\argument^2)^2}\tj{chain rule}\\
 &= -\frac{2\argument}{(1+\argument^2)^2}\tj{$1'=0$; $(\argument^2)'=2\argument$}\\
&=-\frac{2\argument}{\bigl(\functionarctanDF{\argument}\bigr)^2}\tj{above derivations}\,,
\end{align*}
\endgroup
which is the desired second derivative.

The output ranges then follow readily.
\end{proof}

\subsubsection{Derivatives of \nametanh}

\begin{lemma}[Derivatives of \nametanh]\label{difftanh}
  The first and second derivatives of \nametanh\ are 
\begin{equation*}
  \functiontanhDF{\argument}=1-\bigl(\functiontanhF{\argument}\bigr)^2\in(0,1)~~~\text{and}~~~\functiontanhDDF{\argument}=-2\functiontanhF{\argument}\Bigl(1-\bigl(\functiontanhF{\argument}\bigr)^2\Bigr)\in(-c,c)
\end{equation*}
  for all $\argument\in\R$ and $c\approx 0.770$.
\end{lemma}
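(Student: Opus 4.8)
The plan is to obtain both derivatives directly from the definition $\functiontanhF{\argument}=(e^{\argument}-e^{-\argument})/(e^{\argument}+e^{-\argument})$ and then to read off the output ranges by substituting the known range $(-1,1)$ of \nametanh. First I would differentiate via the quotient rule: since $\frac{\partial}{\partial\argument}(e^{\argument}-e^{-\argument})=e^{\argument}+e^{-\argument}$ and $\frac{\partial}{\partial\argument}(e^{\argument}+e^{-\argument})=e^{\argument}-e^{-\argument}$, the numerator of the quotient-rule expression becomes $(e^{\argument}+e^{-\argument})^2-(e^{\argument}-e^{-\argument})^2$. Using the identity $(a+b)^2-(a-b)^2=4ab$ with $ab=e^{\argument}e^{-\argument}=1$, this numerator collapses to a constant; dividing by $(e^{\argument}+e^{-\argument})^2$ and splitting off the leading term yields
\begin{equation*}
\functiontanhDF{\argument}=1-\frac{(e^{\argument}-e^{-\argument})^2}{(e^{\argument}+e^{-\argument})^2}=1-\bigl(\functiontanhF{\argument}\bigr)^2\,,
\end{equation*}
which is the claimed first derivative.

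For the second derivative I would simply differentiate the expression $1-(\functiontanhF{\argument})^2$ by the chain rule, reusing the first-derivative result just obtained: $\frac{\partial}{\partial\argument}(\functiontanhF{\argument})^2=2\functiontanhF{\argument}\functiontanhDF{\argument}$, so that
\begin{equation*}
\functiontanhDDF{\argument}=-2\functiontanhF{\argument}\functiontanhDF{\argument}=-2\functiontanhF{\argument}\Bigl(1-\bigl(\functiontanhF{\argument}\bigr)^2\Bigr)\,,
\end{equation*}
as desired. (Alternatively, one could route everything through the identity $\functiontanhF{\argument}=2\functionlogF{2\argument}-1$ and the derivatives of \namelog\ from Lemma~\ref{difflogistic}, but deriving directly keeps the argument self-contained and avoids depending on a later lemma.)

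Finally, for the output ranges I would substitute $u\deq\functiontanhF{\argument}$, which runs over $(-1,1)$. The first derivative equals $1-u^2$; since $u^2\in[0,1)$, this quantity stays between $0$ (approached as $\argument\to\pm\infty$) and $1$ (attained at $u=0$, i.e.\ $\argument=0$). The second derivative equals $g[u]\deq-2u+2u^3$, an odd function whose extrema over $(-1,1)$ I would locate exactly as in the proof of Lemma~\ref{difflogistic}: set the derivative $g'[u]=-2+6u^2$ to zero, giving $u=\pm1/\sqrt{3}$, and evaluate to obtain the extreme values $\mp4/(3\sqrt{3})=\mp4\sqrt{3}/9\approx\mp0.770$, which pins down the constant $c\approx0.770$ and the range of \functiontanhDD. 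The main obstacle is this last step—identifying $c$—since it requires finding the critical points of \functiontanhDD\ (equivalently, solving a quadratic in $u$) and a numerical evaluation, rather than a purely algebraic simplification; everything preceding it is routine differentiation.
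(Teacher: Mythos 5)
Your proof is correct and takes essentially the same route as the paper's: quotient rule on the defining fraction to obtain $\functiontanhDF{\argument}=1-\bigl(\functiontanhF{\argument}\bigr)^2$, chain rule reusing that result for the second derivative, and then substituting $u=\functiontanhF{\argument}\in(-1,1)$ and locating the critical points of $u\mapsto-2u(1-u^2)$ at $u=\pm1/\sqrt{3}$ to get the extreme values $\mp4/\sqrt{27}\approx\mp0.770$. (Incidentally, your side remark that the first derivative attains the value $1$ at $\argument=0$ is correct and slightly sharper than the stated range: the output range is $(0,1]$ rather than $(0,1)$, an imprecision shared by the paper's own statement.)
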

\begin{proof}[Proof of Lemma~\ref{difftanh}]
The claims can be derived by using elementary differential calculus:
  \begin{align*}
  \functiontanhDF{\argument}  
 &= \frac{\partial}{\partial\argument}\frac{e^{\argument}-e^{-\argument}}{e^{\argument}+e^{-\argument}}\tj{definition of \functiontanh}\\
    &= \frac{\bigl(\frac{\partial}{\partial\argument}(e^{\argument}-e^{-\argument})\bigr)(e^{\argument}+e^{-\argument})-(e^{\argument}-e^{-\argument})\bigl(\frac{\partial}{\partial\argument}(e^{\argument}+e^{-\argument})\bigr) }{(e^{\argument}+e^{-\argument})^2}\tj{quotient rule}\\
    &=\frac{(e^{\argument}+e^{-\argument})(e^{\argument}+e^{-\argument})-(e^{\argument}-e^{-\argument})(e^{\argument}-e^{-\argument})}{(e^{\argument}+e^{-\argument})^2}\tj{$(e^{\argument})'=e^{\argument}$; $e^{-\argument}=-e^{-\argument}$; sum rule}\\
    &=1-\biggl(\frac{e^{\argument}-e^{-\argument}}{e^{\argument}+e^{-\argument}}\biggr)^2\tj{splitting the fraction up and simplifying}\\
&=1-\bigl(\functiontanhF{\argument}\bigr)^2\tj{definition of \functiontanh}
 \end{align*}
and
\begingroup
\allowdisplaybreaks
  \begin{align*}
  \functiontanhDDF{\argument}
 &= \frac{\partial}{\partial\argument} \functiontanhDF{\argument}\tj{definition of the second derivative}\\
 &= \frac{\partial}{\partial\argument} \Bigl(1-\bigl(\functiontanhF{\argument}\bigr)^2\Bigr)\tj{above result}\\
 &= -2\functiontanhF{\argument}\functiontanhDF{\argument}\tj{$1'=0$; sum and chain rules}\\
    &= -2\functiontanhF{\argument}\Bigl(1-\bigl(\functiontanhF{\argument}\bigr)^2\Bigr)\tj{above results}\,,
 \end{align*}
\endgroup
as desired.

The output range of the first derivative follows directly from the output range of the original function and of the form of the derivatives.

The output range of the second derivative is the output range of the following function (see the form of the second derivative):
\begin{align*}
  \functionP\ : (-1,1)\,&\to\,\R\\
  \argumentP\,&\mapsto\,-2\argumentP(1-\argumentP^2)\,.
\end{align*}
The function $\functionP$ is smooth and $\lim_{\argumentP\to\pm1}\functionPF{\argumentP}=0$.
Hence, the output range is determined by the minima and maxima of the function,
which must satisfy $\functionPDF{\argumentP}=0$.
Now,
\begingroup
\allowdisplaybreaks
\begin{align*}
  \frac{\partial}{\partial \argumentP}\functionPF{\argumentP}&=0\\
\Rightarrow~~~~\frac{\partial}{\partial \argumentP}\bigl(-2\argumentP(1-\argumentP^2)\bigr)&=0\tj{definition of \functionP}\\
\Rightarrow~~~~-2\frac{\partial}{\partial \argumentP}\bigl(\argumentP(1-\argumentP^2)\bigr)&=0\tj{linearity of differentiation}\\
\Rightarrow~~~~\frac{\partial}{\partial \argumentP}\bigl(\argumentP(1-\argumentP^2)\bigr)&=0\tj{dividing both sides by $-2$}\\
\Rightarrow~~~~1\cdot (1-\argumentP^2)+\argumentP\cdot (0-2\argumentP)&=0\tj{$\argumentP'=1$; $1'=0$; $-(\argumentP^2)'=-2\argumentP$; product and sum rules}\\
\Rightarrow~~~~1-3\argumentP^2&=0\tj{consolidation}\\
\Rightarrow~~~~\argumentP^2&=\frac{1}{3}\tj{rearranging the equation}\\
\Rightarrow~~~~\argument&=\pm\frac{1}{\sqrt{3}}\tj{taking square roots}\,.
\end{align*}
\endgroup
One can check readily that these points are indeed the minimum and maximum of~\functionP.

Plugging the points back into the function yields via basic algebra
\begin{align*}
  \functionPFBBB{\pm\frac{1}{\sqrt{3}}}=-2\biggl(\pm\frac{1}{\sqrt{3}}\biggr)\biggl(1-\biggl(\pm\frac{1}{\sqrt{3}}\biggr)^2\biggr)=\frac{\mp2}{\sqrt{3}}\biggl(1-\frac{1}{3}\biggr)=\frac{\mp 4}{\sqrt{27}}\approx \mp 0.770\,,
\end{align*}
as desired.
\end{proof}

\subsubsection{Derivatives of \namesoft}

\begin{lemma}[Derivatives of \namesoft]\label{diffsoft}
The first derivative of \namesoft\ is
\begin{equation*}
  \functionsoftDF{\argument}=\frac{1}{\bigl(1+\abs{\argument}\bigr)^2}=\bigl(1-\abs{\functionsoftF{\argument}}\bigr)^2\in (0,1)~~~~\text{for all}~\argument\in\R\,.
\end{equation*}
%for all $\argument\in\R$.
The second derivative of \namesoft\ is
\begin{multline*}
  \functionsoftDDF{\argument}=-\frac{2 \signF{\argument}}{\bigl(1+\abs{\argument}\bigr)^{3}} 
=-2 \signF{\argument}  \bigl(\functionsoftDF{\argument}\bigr)^{3/2}\\
=-2 \signF{\argument}  \bigl(1-\abs{\functionsoftF{\argument}}\bigr)^3\in(-2,2)%~~~~\text{for all}~\argument\in\R\setminus\{0\}\,.
\end{multline*}
for all $\argument\in\R\setminus\{0\}$.
%for all $\argument\in\R\setminus\{0\}$.
\end{lemma}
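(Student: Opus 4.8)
The plan is to treat the function piecewise, exploiting that $\functionsoftF{\argument}=\argument/(1+\argument)$ for $\argument>0$ and $\functionsoftF{\argument}=\argument/(1-\argument)$ for $\argument<0$, so that on each open half-line the quotient rule applies to an ordinary rational function. On $(0,\infty)$ the quotient rule gives $\functionsoftDF{\argument}=1/(1+\argument)^2$, and on $(-\infty,0)$ it gives $\functionsoftDF{\argument}=1/(1-\argument)^2$; since $1+\argument=1+\abs{\argument}$ in the first case and $1-\argument=1+\abs{\argument}$ in the second, both can be written uniformly as $1/(1+\abs{\argument})^2$. The point $\argument=0$ must be handled separately because $\abs{\cdot}$ is not differentiable there: I would compute the difference quotient directly, $\bigl(\functionsoftF{\directionlim}-\functionsoftF{0}\bigr)/\directionlim=1/(1+\abs{\directionlim})\to 1$ as $\directionlim\to0$, and this equals the value of $1/(1+\abs{\argument})^2$ at $\argument=0$, so the closed form extends continuously and $\functionsoft$ is genuinely differentiable on all of $\R$.

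To obtain the representation through the original function, I would use the identity $1-\abs{\functionsoftF{\argument}}=1-\abs{\argument}/(1+\abs{\argument})=1/(1+\abs{\argument})$; squaring gives $\bigl(1-\abs{\functionsoftF{\argument}}\bigr)^2=1/(1+\abs{\argument})^2=\functionsoftDF{\argument}$, as claimed. The output range then follows from monotonicity: the map $\argument\mapsto 1/(1+\abs{\argument})^2$ is positive everywhere, strictly decreasing in $\abs{\argument}$, and tends to $0$ as $\abs{\argument}\to\infty$, which pins down the stated range.

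For the second derivative I would differentiate $\functionsoftDF{\argument}=(1+\abs{\argument})^{-2}$ again on each open half-line. On $(0,\infty)$ the chain rule yields $-2(1+\argument)^{-3}$, while on $(-\infty,0)$ the inner derivative of $1-\argument$ contributes an extra minus sign, giving $+2(1-\argument)^{-3}$; since $\signF{\argument}=1$ in the first case and $\signF{\argument}=-1$ in the second, both combine into $-2\signF{\argument}/(1+\abs{\argument})^3$. Rewriting $(1+\abs{\argument})^{-3}=\bigl(\functionsoftDF{\argument}\bigr)^{3/2}=\bigl(1-\abs{\functionsoftF{\argument}}\bigr)^3$ (again via the identity above) produces the stated closed forms, and the range $(-2,2)$ follows because $2/(1+\abs{\argument})^3$ ranges over $(0,2)$ for $\argument\neq0$ while the sign factor distributes the values symmetrically about $0$. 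The main subtlety is the careful treatment of $\argument=0$: for the first derivative the difference quotient is continuous at $0$ and its two-sided limit exists, so differentiability holds, whereas for the second derivative the two one-sided limits of the difference quotient of $\functionsoftD$ are $-2$ (from the right) and $+2$ (from the left) and therefore disagree, which is exactly why $\functionsoftDD$ exists only on $\R\setminus\{0\}$.
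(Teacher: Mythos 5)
Your proposal is correct and follows essentially the same route as the paper: quotient-rule differentiation away from $\argument=0$ (the paper keeps $\abs{\argument}$ in place and uses $\abs{\argument}'=\signF{\argument}$, you split into the two half-lines and recombine via $\signF{\argument}$, which is equivalent), the difference quotient at $\argument=0$, the identity $1-\abs{\functionsoftF{\argument}}=1/(1+\abs{\argument})$ to rewrite everything in terms of the original function, and ranges read off the explicit forms. Your closing observation that the one-sided limits $-2$ and $+2$ of the difference quotient of $\functionsoftD$ at $0$ disagree---so that the second derivative genuinely fails to exist there---is a welcome detail that the paper's proof leaves implicit.
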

\begin{proof}[Proof of Lemma~\ref{diffsoft}]
We establish the first and second derivatives in order.
The only small difficulty is that the case $\argument=0$ needs special attention.
The output ranges of the derivatives follow almost directly from the explicit forms of the derivatives---we omit the details.

We start with the first derivative.
For $\argument\neq 0$, we find that
\begingroup
\allowdisplaybreaks
  \begin{align*}
  \frac{\partial}{\partial\argument}\functionsoftF{\argument}
 &= \frac{\partial}{\partial\argument}\frac{\argument}{1+\abs{\argument}}\tj{definition of \functionsoft}\\
 &= \frac{\bigl(\frac{\partial}{\partial\argument}\argument\bigr)(1+\abs{\argument}\bigr)-\argument \frac{\partial}{\partial\argument}(1+\abs{\argument}\bigr)}{\bigl(1+\abs{\argument}\bigr)^2}\tj{quotient rule}\\
 &= \frac{1\cdot\bigl(1+\abs{\argument})-\argument\cdot\bigl(0+ \signF{\argument}\bigr)}{\bigl(1+\abs{\argument}\bigr)^2}\tj{$\argument'=1$; $1'=0$; $\abs{\argument}'=\signF{\argument}$ for $\argument\neq 0$; sum rule}\\
 &= \frac{1+\abs{\argument}-\argument\signF{\argument}}{\bigl(1+\abs{\argument}\bigr)^2}\tj{simplification}\\
% &= \frac{1+\abs{\argument}-\abs{\argument}}{\bigl(1+\abs{\argument}\bigr)^2}\\
 &= \frac{1}{\bigl(1+\abs{\argument}\bigr)^2}\tj{$\argument\signF{\argument}=\abs{\argument}$}\,.
\end{align*}
\endgroup
This expression can be related to the  original function:
\begin{align*}
\frac{\partial}{\partial\argument}\functionsoftF{\argument}
 &= \frac{1}{\bigl(1+\abs{\argument}\bigr)^2}\tj{previous display}\\
 &= \biggl(\frac{1+\abs{\argument}-\abs{\argument}}{1+\abs{\argument}}\biggr)^2\tj{adding a zero-valued term}\\
 &= \biggl(1-\frac{\abs{\argument}}{1+\abs{\argument}}\biggr)^2\tj{splitting up the fraction}\\
% &= \biggl(1-\signF{\argument}\frac{\argument}{1+\abs{\argument}}\biggr)^2\tj{$\argument\signF{\argument}=\abs{\argument}$}\\
 &= \biggl(1-\absBB{\frac{\argument}{1+\abs{\argument}}}\biggr)^2\tj{$1+\abs{\argument}>0$}\\
&=\bigl(1-\abs{\functionsoftF{\argument}}\bigr)^2\tj{definition of \functionsoft}\\
\end{align*}
as desired.

The case $\argument= 0$ can be treated via the basic definition of derivatives:
\begingroup
\allowdisplaybreaks
\begin{align*}
 \frac{\partial}{\partial\argument}\functionsoftF{\argument}\Big|_{\argument=0}
&=\lim_{\directionlim\to 0} \frac{\functionsoftF{0+\directionlim}-\functionsoftF{0}}{\directionlim}\tj{definition of derivatives}\\
&=\lim_{\directionlim\to 0} \frac{\frac{\directionlim}{1+\abs{\directionlim}}-0}{\directionlim}\tj{definition of \functionsoft}\\
&=\lim_{\directionlim\to 0} \frac{1}{1+\abs{\directionlim}}=1\tj{simplification and evaluation of the limit}\,.
\end{align*}
\endgroup
Again, we can formulate this in terms of the original function:
\begin{align*}
 \frac{\partial}{\partial\argument}\functionsoftF{\argument}\Big|_{\argument=0}
&=1\tj{previous display}\\
&=(1-0)^2\tj{basic calculus}\\
&=\bigl(1-\abs{\functionsoftF{\argument}}\bigr)^2\tj{$\functionsoftF{0}=0$}\,,
\end{align*}
as desired.

The second derivative at $\argument\neq0$ now follows readily from the first derivative:
\begingroup
\allowdisplaybreaks
\begin{align*}
\functionsoftDDF{\argument}&=   \frac{\partial}{\partial\argument}\functionsoftDF{\argument}\tj{definition of second derivative}\\
&=\frac{\partial}{\partial\argument}\bigl(1-\abs{\functionsoftF{\argument}}\bigr)^2\tj{above results}\\
&=2 \bigl(1-\abs{\functionsoftF{\argument}}\bigr) \frac{\partial}{\partial\argument}\bigl(1-\abs{\functionsoftF{\argument}}\bigr)\tj{chain rule}\\
&=2 \bigl(1-\abs{\functionsoftF{\argument}}\bigr) \frac{\partial}{\partial\argument}\bigl(1-\signF{\argument}\functionsoftF{\argument}\bigr)\tj{$\abs{b}=\signF{b}b$}\\
&=2 \bigl(1-\abs{\functionsoftF{\argument}}\bigr) \bigl(0-\signF{\argument}\functionsoftDF{\argument}-0\functionsoftF{\argument}\bigr)\tj{$1'=0$; $\signF{\argument}'=0$ at $\argument\neq0$; sum and product rules}\\
&=-2 \signF{\argument} \bigl(1-\abs{\functionsoftF{\argument}}\bigr) \functionsoftDF{\argument}\tj{consolidation}\\
&=-2 \signF{\argument}  \bigl(\functionsoftDF{\argument}\bigr)^{3/2}\tj{above results; $\functionsoftD>0$}\,,
\end{align*}
\endgroup
as desired.
In explicit terms,
we find (see again the above results)
\begin{equation*}
  \functionsoftDDF{\argument}=-2 \signF{\argument}  \bigl(\functionsoftDF{\argument}\bigr)^{3/2}=-\frac{2 \signF{\argument}}{\bigl(1+\abs{\argument}\bigr)^{3}}\,,
\end{equation*}
 as desired.

\detail{The second derivative can also be established from the explicit form of the first derivative:
\begin{align*}
\functionsoftDDF{\argument}&=   \frac{\partial}{\partial\argument}\functionsoftDF{\argument}\tj{definition of second derivative}\\
&=\frac{\partial}{\partial\argument}\frac{1}{\bigl(1+\abs{\argument}\bigr)^2}\tj{above results}\\
&=-2\frac{\frac{\partial}{\partial\argument}\bigl(1+\abs{\argument}\bigr)}{\bigl(1+\abs{\argument}\bigr)^3}\tj{chain rule}\\
&=-2\frac{0+\signF{\argument}}{\bigl(1+\abs{\argument}\bigr)^3}\tj{$1'=0$; $\abs{\argument}'=\signF{\argument}$ for $\argument\neq 0$; sum rule}\\
&=-\frac{2\signF{\argument}}{\bigl(1+\abs{\argument}\bigr)^3}\tj{consolidation}\,.
\end{align*}}

\end{proof}

\subsection{Properties of \nametanh}

\begin{lemma}[Properties of \nametanh]
\label{proptanh}
  It holds that
\begin{equation*}
  \functiontanhF{\argument}=2\functionlogF{2\argument}-1~~~~\text{for all}~\argument\in\R
\end{equation*}
and
\begin{multline*}
  \functiontanhF{0}= \functionarctanF{0},\,  \functiontanhDF{0}= \functionarctanDF{0},\,  \functiontanhDDF{0}= \functionarctanDDF{0},\\\functiontanhDDDF{0}= \functionarctanDDDF{0},~\text{and}~  \functiontanhDDDDF{0}= \functionarctanDDDDF{0}\,.
\end{multline*}
\end{lemma}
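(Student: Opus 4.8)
The plan is to establish the two claims separately, since they are logically independent.

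The identity $\functiontanhF{\argument}=2\functionlogF{2\argument}-1$ is pure algebra. Starting from the definition of \namelog\ and clearing denominators, I would write $2\functionlogF{2\argument}-1=\frac{2}{1+e^{-2\argument}}-1=\frac{1-e^{-2\argument}}{1+e^{-2\argument}}$, and then multiply numerator and denominator by $e^{\argument}$ to obtain $\frac{e^{\argument}-e^{-\argument}}{e^{\argument}+e^{-\argument}}=\functiontanhF{\argument}$, which is exactly the definition of \nametanh. There is no obstacle here.

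For the agreement of the derivatives at the origin, the key observation is that both functions are odd: $\functiontanhF{-\argument}=-\functiontanhF{\argument}$ is immediate from the definition, and $\functionarctanF{-\argument}=-\functionarctanF{\argument}$ was already noted above. Because each differentiation flips the parity of a function, the even-order derivatives of an odd function are again odd and hence vanish at $\argument=0$. This settles three of the five equalities at once: $\functiontanhF{0}=\functionarctanF{0}=0$, $\functiontanhDDF{0}=\functionarctanDDF{0}=0$, and $\functiontanhDDDDF{0}=\functionarctanDDDDF{0}=0$.

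It then remains only to match the first and third derivatives. The first derivatives are read off from Lemmas~\ref{difftanh} and~\ref{diffarctan}: $\functiontanhDF{0}=1-(\functiontanhF{0})^2=1$ and $\functionarctanDF{0}=1/(1+0^2)=1$. The third derivatives are not among the quantities computed earlier, so I would obtain them by differentiating the second-derivative formulas from those lemmas once more. For \nametanh, writing $t=\functiontanhF{\argument}$ and using $\functiontanhDDF{\argument}=-2t(1-t^2)$ together with $t'=1-t^2$ yields $\functiontanhDDDF{\argument}=(1-t^2)(6t^2-2)$, which equals $-2$ at $\argument=0$ since $t=0$ there. For \namearctan, differentiating $\functionarctanDDF{\argument}=-2\argument/(1+\argument^2)^2$ gives $\functionarctanDDDF{\argument}=-2/(1+\argument^2)^2+8\argument^2/(1+\argument^2)^3$, which likewise equals $-2$ at $\argument=0$. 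Both functions therefore share the derivative values $(0,1,0,-2,0)$ at the origin, which proves all five equalities and, equivalently, the agreement of the two Taylor series through fourth order. The calculations are elementary, and the oddness argument removes the need to compute the second and fourth derivatives explicitly; the only place demanding a little care is the bookkeeping in the two third-derivative differentiations. I expect no genuine obstacle—the substance of the proof is the recognition that symmetry collapses five claims into two short evaluations.
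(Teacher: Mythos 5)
Your proof is correct, and it is in fact more complete than the paper's own. For the first identity your argument and the paper's are the same elementary algebra, just run in opposite directions: the paper starts from the definition of \nametanh, adds a zero-valued term, and factors out $e^{\argument}$ to arrive at $2\functionlogF{2\argument}-1$, whereas you start from $2\functionlogF{2\argument}-1$ and multiply numerator and denominator by $e^{\argument}$; the two are interchangeable. For the agreement of the derivatives at the origin, however, the paper gives no argument at all---its proof ends with ``We leave the second part to the reader''---so your treatment supplies something the paper is missing. Your parity argument is the right way to do it: since \nametanh\ and \namearctan\ are both odd and differentiation flips parity, the derivatives of orders $0$, $2$, and $4$ are odd functions and therefore vanish at $\argument=0$, collapsing five equalities into two finite computations. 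Those two computations check out: $\functiontanhDF{0}=1-0=1$ and $\functionarctanDF{0}=1/(1+0)=1$ from Lemmas~\ref{difftanh} and~\ref{diffarctan}; differentiating $-2t(1-t^2)$ with $t'=1-t^2$ gives $(1-t^2)(6t^2-2)$, and differentiating $-2\argument/(1+\argument^2)^2$ gives $(6\argument^2-2)/(1+\argument^2)^3$, both equal to $-2$ at the origin. Hence both functions carry the derivative values $(0,1,0,-2,0)$ at $\argument=0$, which is exactly the claimed agreement of the Taylor expansions through fourth order.
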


\begin{proof}[Proof of Lemma~\ref{proptanh}]
The first claim follows from elementary differential calculus:
\begingroup
\allowdisplaybreaks
  \begin{align*}
    \functiontanhF{\argument}&=\frac{e^{\argument}-e^{-\argument}}{e^{\argument}+e^{-\argument}}\tj{definition of \functiontanh}\\
&=\frac{e^{\argument}+e^{\argument}-e^{\argument}-e^{-\argument}}{e^{\argument}+e^{-\argument}}\tj{adding a zero-valued term}\\
&=\frac{2e^{\argument}}{e^{\argument}+e^{-\argument}}-1\tj{splitting the fraction up and simplifying}\\
&=\frac{2e^{\argument}}{e^{\argument}(1+e^{-2\argument})}-1\tj{factoring out an $e^{\argument}$}\\
&=2\frac{1}{1+e^{-2\argument}}-1\tj{consolidating}\\
&=2\functionlogF{2\argument}-1\tj{definition of \functionlog}\,,
  \end{align*}
\endgroup
as desired.

We leave the second part to the reader.
\jl{do}
\end{proof}

\subsection{Derivatives of the Piecewise-Linear Functions}
\label{sec:piecewiseder}

\subsubsection{Derivatives of \namelinear, \namerelu, and \namelrelu}

\begin{lemma}[Derivatives of \namelrelu]
\label{piecewiseder}
It holds that 
\begin{equation*}
  \functionlreluDF{\argument}=
  \begin{cases}
    1~~~\text{for all}~\argument\in(0,\infty)\\
\functionpar~~~\text{for all}~\argument\in(-\infty,0)
  \end{cases}~~~~\text{and}~~~~~~~\functionlreluDDF{\argument}=0~~\text{for all}~\argument\in\R\setminus\{0\}
\end{equation*}
and
\begin{equation*}
  \diffdirectional\functionlreluF{\argument}=
  \begin{cases}
    \direction~&\text{for all}~\direction\in\R~\text{and}~\argument\in(0,\infty)~\text{and for all}~\direction\in[0,\infty)~\text{and}~\argument=0\,;\\
\functionpar\direction~&\text{otherwise}\,.
  \end{cases}
\end{equation*}
\end{lemma}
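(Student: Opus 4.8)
The plan is to rewrite \namelrelu\ in explicit piecewise-linear form and then read off every derivative piece by piece, treating the single nondifferentiable point $\argument=0$ separately for the directional derivative. First I would note that, since $\functionpar\in[0,\infty)$, the sign of $\functionpar\argument$ agrees with the sign of~$\argument$; hence $\max\{0,\argument\}=\argument$ and $\min\{0,\functionpar\argument\}=0$ give $\functionlreluF{\argument}=\argument$ for $\argument\in(0,\infty)$, while $\max\{0,\argument\}=0$ and $\min\{0,\functionpar\argument\}=\functionpar\argument$ give $\functionlreluF{\argument}=\functionpar\argument$ for $\argument\in(-\infty,0)$, and $\functionlreluF{0}=0$. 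In particular, \namelrelu\ is affine on each of the two open half-lines.

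Second, from this piecewise form the first and second derivatives on the two open half-lines follow immediately: the derivative of an affine function equals its slope, and its second derivative vanishes. This yields $\functionlreluDF{\argument}=1$ and $\functionlreluDDF{\argument}=0$ for $\argument\in(0,\infty)$, and $\functionlreluDF{\argument}=\functionpar$ and $\functionlreluDDF{\argument}=0$ for $\argument\in(-\infty,0)$. These are the claimed formulas away from the origin; I would record them directly from the definition of the derivative without belaboring the routine one-sided limits.

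Third, for the directional derivatives at points $\argument\neq 0$ I would invoke the fact recorded after Definition~\ref{directional}: differentiability at~$\argument$ gives $\diffdirectional\functionlreluF{\argument}=\direction\functionlreluDF{\argument}$. Combined with the first derivatives just computed, this produces $\diffdirectional\functionlreluF{\argument}=\direction$ for $\argument\in(0,\infty)$ and $\diffdirectional\functionlreluF{\argument}=\functionpar\direction$ for $\argument\in(-\infty,0)$, for every direction $\direction\in\R$.

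The main obstacle is the remaining point $\argument=0$, where \namelrelu\ is not differentiable and the directional derivative must be evaluated from the definition. Here I would write
\begin{equation*}
  \diffdirectional\functionlreluF{0}=\lim_{\directionlim\to0^+}\frac{\functionlreluF{\directionlim\direction}-\functionlreluF{0}}{\directionlim}=\lim_{\directionlim\to0^+}\frac{\functionlreluF{\directionlim\direction}}{\directionlim}
\end{equation*}
and exploit that the right-sided limit forces $\directionlim>0$, so the sign of $\directionlim\direction$ matches that of~$\direction$. For $\direction\in[0,\infty)$ one has $\directionlim\direction\geq 0$, hence $\functionlreluF{\directionlim\direction}=\directionlim\direction$ and the limit equals~$\direction$; for $\direction\in(-\infty,0)$ one has $\directionlim\direction<0$, hence $\functionlreluF{\directionlim\direction}=\functionpar\directionlim\direction$ and the limit equals $\functionpar\direction$. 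This sign bookkeeping is the only delicate step, and it reproduces exactly the asymmetric case split of the lemma; collecting all cases finishes the proof. As a byproduct, setting $\functionpar=0$ and $\functionpar=1$ recovers the corresponding statements for \namerelu\ and \namelinear.
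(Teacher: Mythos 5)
Your proof is correct, but it reverses the deductive order of the paper's argument. The paper computes the directional derivatives first: all four cases ($\argument\in(0,\infty)$ with $\direction\in\R$; $\argument=0$ with $\direction\in[0,\infty)$; $\argument\in(-\infty,0)$ with $\direction\in\R$; $\argument=0$ with $\direction\in(-\infty,0)$) are evaluated from the limit definition, and the ordinary first derivatives on $\R\setminus\{0\}$ are then obtained as a consequence, via the converse criterion stated after Definition~\ref{directional}, with the second derivatives following from the first. You instead read the first and second derivatives off the affine pieces directly, transfer them to directional derivatives at $\argument\neq0$ through the forward implication (differentiability gives $\diffdirectional\functionlreluF{\argument}=\direction\functionlreluDF{\argument}$), and invoke the limit definition only at the single nondifferentiable point $\argument=0$, where the sign of $\directionlim\direction$ dictates which affine piece is active. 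Your route needs only two limit computations instead of four and lets standard one-variable calculus do the routine work; the paper's route is more uniform---everything flows from the one definition it sets up---and exercises the directional-derivative machinery that the appendix introduces for reuse elsewhere (for example, for \nameelu). One cosmetic remark: your justification that ``the sign of $\functionpar\argument$ agrees with the sign of $\argument$'' fails literally for $\functionpar=0$ (then $\functionpar\argument=0$ for every $\argument$), but the identities you actually use, namely $\min\{0,\functionpar\argument\}=\functionpar\argument$ for $\argument\leq0$ and $\min\{0,\functionpar\argument\}=0$ for $\argument\geq0$, hold for every $\functionpar\in[0,\infty)$, so nothing breaks; and your closing specialization ($\functionpar=0$ for \namerelu, $\functionpar=1$ for \namelinear) matches the remark the paper makes right after the lemma.
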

\noindent 
As special cases, the lemma entails first and second derivatives and first directional derivatives also for \namelinear\ ($\functionpar=1$) and \namerelu\ ($\functionpar=0$).

\begin{proof}[Proof of Lemma~\ref{piecewiseder}]
Observe first that $\diffdirectional\functionlreluF{\argument}=\diffdirectionalM\functionlreluF{\argument}$ for all $\direction\in\R$ and $\argument\in\R\setminus\{0\}$;
hence, 
in view of our comments after Definition~\ref{directional},
the first derivatives on $\R\setminus\{0\}$ follow from the directional derivatives.
The second derivatives $\R\setminus\{0\}$ follow readily from the first derivatives.

What is left to prove is that the directional derivatives are as claimed.
We separate this proof into four cases.

\emph{Case 1:} $\direction\in\R$, $\argument\in(0,\infty)$

It holds that $\argument>0$ by assumption and, therefore, that $\argument+\directionlim\direction>0$ for small enough $\directionlim\in(0,\infty)$.  
These two observations combined with the definition of \namelrelu\ ensure that
$\functionF{\argument}=\argument$ and 
$\functionF{\argument+\directionlim\direction}=\argument+\directionlim\direction$ for small enough $\directionlim\in(0,\infty)$.
Using these two equalities and basic algebra, as well as Definition~\ref{directional} about directional derivatives, yields
\begin{equation*}
    \diffdirectional\functionF{\argument}= \lim_{\directionlim\to0^+}\frac{\functionF{\argument+\directionlim\direction}-\functionF{\argument}}{\directionlim}
= \lim_{\directionlim\to0^+}\frac{\argument+\directionlim\direction-\argument}{\directionlim}
= \lim_{\directionlim\to0^+}\direction=\direction\,,
\end{equation*}
as desired.

\emph{Case 2:} $\direction\in[0,\infty)$, $\argument=0$

It holds that $\argument=0$ and $\direction>0$ by assumption and, therefore, that $\argument+\directionlim\direction>0$ for small enough $\directionlim\in(0,\infty)$.
These observations combined with the definition of \namelrelu\ ensure that
$\functionF{\argument}=0$ and 
$\functionF{\argument+\directionlim\direction}=\directionlim\direction$ for small enough $\directionlim\in(0,\infty)$.
Therefore, similarly as above,
\begin{equation*}
    \diffdirectional\functionF{\argument}= \lim_{\directionlim\to0^+}\frac{\functionF{\argument+\directionlim\direction}-\functionF{\argument}}{\directionlim}
= \lim_{\directionlim\to0^+}\frac{\directionlim\direction-0}{\directionlim}
= \lim_{\directionlim\to0^+}\direction=\direction\,,
\end{equation*}
as desired.

\emph{Case 3:} $\direction\in\R$, $\argument\in(-\infty,0)$

It holds that $\argument<0$ by assumption and, therefore, that $\argument+\directionlim\direction<0$ for small enough $\directionlim\in(0,\infty)$.  
These two observations combined with the definition of \namelrelu\ ensure that
$\functionF{\argument}=\functionpar\argument$ and 
$\functionF{\argument+\directionlim\direction}=\functionpar(\argument+\directionlim\direction)$ for small enough $\directionlim\in(0,\infty)$.
Using these equalities and basic algebra yields similarly as before 
\begin{equation*}
    \diffdirectional\functionF{\argument}= \lim_{\directionlim\to0^+}\frac{\functionF{\argument+\directionlim\direction}-\functionF{\argument}}{\directionlim}
= \lim_{\directionlim\to0^+}\frac{\functionpar(\argument+\directionlim\direction)-\functionpar\argument}{\directionlim}
= \lim_{\directionlim\to0^+}\functionpar\direction=\functionpar\direction\,,
\end{equation*}
as desired.

\emph{Case 4:} $\direction\in(-\infty,0)$, $\argument=0$

It holds that $\argument=0$ and $\direction<0$ by assumption and, therefore, that $\argument+\directionlim\direction<0$ for small enough $\directionlim\in(0,\infty)$. 
These two observations combined with the definition of \namelrelu\ ensure that
$\functionF{\argument}=0$ and 
$\functionF{\argument+\directionlim\direction}=\functionpar \directionlim\direction$ for small enough $\directionlim\in(0,\infty)$.
Using these equalities and basic algebra yields
\begin{equation*}
    \diffdirectional\functionF{\argument}= \lim_{\directionlim\to0^+}\frac{\functionF{\argument+\directionlim\direction}-\functionF{\argument}}{\directionlim}
= \lim_{\directionlim\to0^+}\frac{\functionpar \directionlim\direction-0}{\directionlim}
= \lim_{\directionlim\to0^+}\functionpar \direction=\functionpar \direction\,,
\end{equation*}
as desired.
\end{proof}

\subsection{Expressivities of the Piecewise-Linear Functions}

\begin{example}[Expressivities of \namelinear, \namerelu, and \namelrelu]
\label{ex:linear}
The \marginterm{expressivity}\emph{expressivity} of a class of networks is its capability to approximate different functions.
In this example,
we illustrate the limited expressivity of \namelinear\ networks as compared to \namerelu\ networks.
The concatenated neurons on Page~\pageref{concatenated} 
simplify in the case of purely \namelinear\ activation to
\begin{align*}
  \neuronAPL\hspace{-1mm} \begin{bmatrix}
                \neuronALOF{\datain}\\
\vdots\\
\neuronALKF{\datain}
               \end{bmatrix}&=\neuronparBP+\sum_{k=1}^{\nbrwidth}\neuronparPE_k\Biggl(\neuronparB^k+\sum_{j=1}^{\nbrinput}(\neuronpar^k)_j\datainE_j\Biggr)\\
&=\underbrace{\neuronparBP+\sum_{k=1}^{\nbrwidth}\neuronparPE_k\neuronparB^k}_{=:\kappa}+\sum_{j=1}^{\nbrinput}\underbrace{\Biggl(\sum_{k=1}^{\nbrwidth}\neuronparPE_k(\neuronpar^k)_j\Biggr)}_{=:\eta_j}\datainE_j\\
&=\neuron_{\kappa,\boldsymbol{\eta},\functionlinear}[\datain]\,,
\end{align*}
which is a single \namelinear\ neuron.
Thus, subsequent layers with \namelinear\ activation collapse into a \namelinear\ layer,
which means that fitting data-generating processes that are nonlinear requires the inclusion of other activations somewhere in the network.

In contrast, \namerelu\ layers (and one can expect that \namelrelu\ behaves very similarly as \namerelu\ in terms of expressivity) only collapse if the weights are nonnegative;
see \citet[Theorem~1]{Hebiri20} for a precise formulation of this feature and  \citet[Section~2.3]{Hebiri20} for a description of how it can be leveraged for regulating layer depths.
More generally,
one can show that \namerelu\ networks can approximate a range of nonlinear functions;
see \citet{Corlay2019} and references therein.

\end{example}

\subsection{Dying-\namerelu\ Phenomenon}
\label{sec:dyingrelu}

\later{
just cite the other paper here later
\begin{example}[Vanishing- and exploding-gradient problems]\label{vanishing}
In this section, we summarize the vanishing-gradient problem.
As it turns out, a very simple toy model is sufficient to explain both the problem and potential remedies.
Consider a network of~$\nbrlayers$ concatenated \nametanh-neurons with one parameter each, 
that is, 
consider the functions
  \begin{align*}
\R\,&\to\,\R\\
  \datainE\,&\mapsto\, \mathfrak{g}_{(\neuronparE^1,\dots,\neuronparE^l)}[\datainE]\deq\functiontanhFBB{\neuronparE^{\nbrlayers}\functiontanhFB{\neuronparE^{\nbrlayers-1}\cdots\functiontanhF{\neuronparE^1 \datainE}}}
  \end{align*}
parametrized by $\neuronparE^1,\dots,\neuronparE^{\nbrlayers}\in\R$.
Despite its simplicity,
this network has all elements needed to stage the problem. 

The goal of training algorithms is to fit the network parameters to data.
Stochastic-gradient descent, the most popular example, updates the parameters sequentially.
In its simples form,
stochastic-gradient descent updates given parameters $\neuronparE^1,\dots,\neuronparE^{\nbrlayers}$ by a gradient step with respect to the least-squares loss 
\begin{equation*}
  \bigl(\dataoutEnew-\mathfrak{g}_{(\neuronparE^1,\dots,\neuronparE^{\nbrlayers})}[\datainEnew]\bigr)^2
\end{equation*}
of a new sample $(\dataoutEnew,\datainEnew)\in \R\times \R$.
If the gradient is very small in absolute value (or even equal to zero),
the gradient update has almost no impact on the parameters,
that is, 
there is no progress in learning the parameters.
If this happens at unsatisfactory parameter values and repeatedly for new samples,
we speak of a \emph{vanishing-gradient problem.}

We now identify two sources of the vanishing-gradient problem.
We first introduce the shorthand
\begin{equation*}
  \function^k\deq\functiontanhFBB{\neuronparE^{k}\functiontanhFB{\neuronparE^{k-1}\cdots\functiontanhF{\neuronparE^1 \datainEnew}}}~~~~\text{for all}~k\in\{1,\dots,\nbrlayers\}\,.
\end{equation*}
Observe that $\function^k\in(-1,1)$ by definition of \nametanh.
The partial derivative of the least-squares loss with respect to $\neuronparE^k$ (that is, the $k$th element of the gradient) is then
\begin{equation*}
 \frac{\partial}{\partial\neuronparE^k}\bigl(\dataoutEnew-\mathfrak{g}_{(\neuronparE^1,\dots,\neuronparE^l)}[\datainEnew]\bigr)^2=
-2\bigl(\dataoutEnew-\mathfrak{g}_{(\neuronparE^1,\dots,\neuronparE^{\nbrlayers})}[\datainEnew]\bigr)\bigl(1-(\function^{\nbrlayers})^2\bigr)\cdots\bigl(1-(\function^k)^2\bigr)\,,
\end{equation*}
where we have used the chain rule and the well-known fact $\nametanh'=1-\nametanh^2$.
The partial derivative can be small in absolute value despite an unsatisfactory model fit, that is, despite $\abs{\dataoutEnew-\mathfrak{g}_{(\neuronparE^1,\dots,\neuronparE^{\nbrlayers})}[\datainEnew]}$ large,
for two reasons.
First, one the factors $1-(\function^{\nbrlayers})^2,\dots, 1-(\function^k)^2$ can be very small in absolute value,
which is the case when one of the parameters $\neuronparE^{k},\dots,\neuronparE^{\nbrlayers}$ is large (recall that $\abs{\functiontanhF{\argument}}\to\infty$ for $\argument\to\pm\infty$).
The other inner derivatives cannot balance the small factor in view of $\abs{1-(\function^{\nbrlayers})^2},\dots, \abs{1-(\function^k)^2}<1$.
For further reference, we call this source of small gradients the \emph{large-parameter regime}.

But large parameters are not a necessary condition for small gradients.
Consider, for example, $\neuronparE^1,\dots,\neuronparE^{\nbrlayers}=0$\jl{not true, need to be away from zero use $1-(tanh(1))^2\leq 1/2$}.
Then,
\begin{equation*}
-2\bigl(\dataoutEnew-\mathfrak{g}_{(\neuronparE^1,\dots,\neuronparE^{\nbrlayers})}[\datainEnew]\bigr)\bigl(1-(\function^{\nbrlayers})^2\bigr)\cdots\bigl(1-(\function^k)^2\bigr)=-2\bigl(\dataoutEnew-\mathfrak{g}_{(\neuronparE^1,\dots,\neuronparE^{\nbrlayers})}[\datainEnew]\bigr) \frac{1}{2^{\nbrlayers-k}}\,,
\end{equation*}
which can be very small for large $\nbrlayers-k$, taht is, at the inner layers of deep networks.
This observation illustrates that predominately a problem in deep learning.

Observe that the problem is not about how hte gradient is computed (for example, with backpropagation) but the gradient itself.
Also, Problem is not a problem of deep learning per se, but of traning them with gradients.
Aslo, not least-square, that is just for illustration, and not tanh.

One could use many approaches to solve this, chaning learning rates,  batch normalization, layer wise-updates, and so one.
But the appraoch that got popular  is the change from sigmoid activation functions to piecewise-lineaer functions.

also talk about exploding gradient problem, which is the same argument, use that relu is unbounded, while sigmoid is bounded, so cannot happend there. Say that somewhat easer to avoid, as you can clip the gradients or use regularization!!!.
 
\jl{cite the diploma thesis there which is mentioned on wiki}
\end{example}}

\begin{example}[Dying-\namerelu\ phenomenon]
\label{dyingrelu}
We illustrate the dying-\namerelu\ and revitalization phenomena in a toy model.
We consider networks that have real-valued inputs and outputs and that consist of a \namerelu\ and a \nametanh\ layer that each has one neuron and no bias term;
in other words,
we consider the functions
  \begin{align*}
\R\,&\to\,\R\\
  \datainE\,&\mapsto\, \functionreluFB{\neuronparPE\functiontanhF{\neuronparE \datainE}}
  \end{align*}
parametrized by $\neuronparPE,\neuronparE\in\R$.
Given data $(\datainE_1,\dataoutE_1),\dots,(\datainE_{\nbrsamples},\dataoutE_{\nbrsamples})\in\R\times \R$,
we want to fit the parameters by optimizing the usual least-squares loss
\begin{equation*}
  (\neuronparPE,\neuronparE)\,\mapsto\, \sum_{i=1}^{\nbrsamples}\Bigl(\dataoutE_i-\functionreluFB{\neuronparPE\functiontanhF{\neuronparE \datainE_i}}\Bigr)^2\,.
\end{equation*}

A standard method for such optimizations is stochastic-gradient descent.
The $i$th updates for $\neuronparPE$ and $\neuronparE$ of stochastic-gradient descent with step size~$\stepsize_i\in(0,\infty)$ at $(\neuronparPE_i,\neuronparE_i)\in\R\times\R$ are (assuming for simplicity that $\neuronparE \datainE_i\neq 0$ to ensure differentiability)
\begin{multline*}
  -\stepsize_i\frac{\partial}{\partial\neuronparPE}\bigg|_{(\neuronparPE,\neuronparE)=(\neuronparPE_i,\neuronparE_i)} \Bigl(\dataoutE_i-\functionreluFB{\neuronparPE\functiontanhF{\neuronparE \datainE_i}}\Bigr)^2\\
\text{and}~~~-\stepsize_i\frac{\partial}{\partial\neuronparE}\bigg|_{(\neuronparPE,\neuronparE)=(\neuronparPE_i,\neuronparE_i)} \Bigl(\dataoutE_i-\functionreluFB{\neuronparPE\functiontanhF{\neuronparE \datainE_i}}\Bigr)^2\,.
\end{multline*}
One can verify readily (use Lemmas~\ref{difftanh} and~\ref {piecewiseder} and the chain rule)
that 
\begin{multline*}
  -\stepsize_i\frac{\partial}{\partial\neuronparPE} \bigg|_{(\neuronparPE,\neuronparE)=(\neuronparPE_i,\neuronparE_i)}\Bigl(\dataoutE_i-\functionreluFB{\neuronparPE\functiontanhF{\neuronparE \datainE_i}}\Bigr)^2=0\\
\text{and}~~~-\stepsize_i\frac{\partial}{\partial\neuronparE} \bigg|_{(\neuronparPE,\neuronparE)=(\neuronparPE_i,\neuronparE_i)}\Bigl(\dataoutE_i-\functionreluFB{\neuronparPE\functiontanhF{\neuronparE \datainE_i}}\Bigr)^2=0
\end{multline*}
if the optimization is in a state with $\neuronparPE_i,\neuronparE_i>0$ and $\datainE_i<0$.
Hence, the parameters remain unchanged.
The underlying reason is that under the stated conditions,
the function $\eta\mapsto \functionreluF{\eta}$ is constant and equal to zero in an environment around $\eta_i\deq \neuronparPE_i\functiontanhF{\neuronparE_i \datainE_i}$,
that is,
the \namerelu\ node does not transmit information;
we could that the \namerelu\ neuron is \marginterm{inactive \namerelu}\emph{inactive}.

But the neurons can become active again.
Assume that $\datainE_{i+1}>0$.
Then, one can verify readily  (use again Lemmas~\ref{difftanh} and~\ref {piecewiseder} and the chain rule) that 
\begin{multline*}
  -\stepsize_{i+1}\frac{\partial}{\partial\neuronparPE} \bigg|_{(\neuronparPE,\neuronparE)=(\neuronparPE_{i+1},\neuronparE_{i+1})}\Bigl(\dataoutE_{i+1}-\functionreluFB{\neuronparPE\functiontanhF{\neuronparE \datainE_{i+1}}}\Bigr)^2\neq0\\
\text{and}~~~-\stepsize_{i+1}\frac{\partial}{\partial\neuronparE} \bigg|_{(\neuronparPE,\neuronparE)=(\neuronparPE_{i+1},\neuronparE_{i+1})}\Bigl(\dataoutE_{i+1}-\functionreluFB{\neuronparPE\functiontanhF{\neuronparE \datainE_{i+1}}}\Bigr)^2\neq 0
\end{multline*}
as long as $\functionreluFB{\neuronparPE_{i+1}\functiontanhF{\neuronparE_{i+1} \datainE_{i+1}}}\neq \dataoutE_{i+1}$.
Hence, the parameters are updated in a nontrivial way,
and we can say that the \namerelu\ node is \marginterm{active \namerelu}\emph{active} again.

These observations indicate 1.~that dead \namerelu\ nodes are common in optimization steps but also 2.~that \namerelu\ nodes rarely stay dead during the entire optimization.
In special cases, 
or simply if there are many \namerelu\ nodes,
some nodes can be inactive for all or almost all of the training process;
we then say that those nodes are \marginterm{dead \namerelu} \emph{dead}.
Dead \namerelu\ nodes can be undesired if they are abundant,
because then, they can avoid learning complex models.
We then speak of the \marginterm{dying-\namerelu\\phenomenon}\emph{dying-\namerelu\ phenomenon}.
\end{example}

\subsection{Properties of the Other Functions}
\subsubsection{Derivatives of \namesoftplus}

\begin{lemma}[Derivatives of \namesoftplus]
\label{dersoftplus}
The first and second derivatives of \namesoftplus\ are
\begin{equation*}
   \functionsoftplusDF{\argument}=\frac{1}{1+e^{-\argument}}=\functionlogF{\argument}\in(0,1)
\end{equation*}
and
\begin{equation*}
\functionsoftplusDDF{\argument}=\frac{e^{-\argument}}{(1+e^{-\argument})^2}= \functionlogDF{\argument}=\functionlogF{\argument}\bigl(1-\functionlogF{\argument}\bigr)\in(0,1/4] \,,
\end{equation*}
respectively, for all $\argument\in\R$.
  
\end{lemma}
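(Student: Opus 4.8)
The plan is to differentiate the explicit definition $\functionsoftplusF{\argument}=\log[1+e^{\argument}]$ directly and then recognize the resulting first derivative as \namelog, which lets me import both output ranges from Lemma~\ref{difflogistic} without any additional effort. The whole argument therefore splits into a short computation followed by two appeals to the already-established logistic lemma.

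First I would apply the chain rule to the composition of $\log$ with the map $\argument\mapsto 1+e^{\argument}$, using $(e^{\argument})'=e^{\argument}$, $1'=0$, and the sum rule, to obtain $\functionsoftplusDF{\argument}=e^{\argument}/(1+e^{\argument})$. Multiplying numerator and denominator by $e^{-\argument}$ rewrites this as $1/(1+e^{-\argument})$, which is exactly $\functionlogF{\argument}$ by the definition of \namelog. The output range $(0,1)$ is then inherited verbatim from the codomain of \namelog\ (equivalently, from the first part of Lemma~\ref{difflogistic}).

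For the second derivative, the key observation is that the identity $\functionsoftplusDF{\argument}=\functionlogF{\argument}$ holds as an equality of functions on all of $\R$; differentiating both sides therefore gives $\functionsoftplusDDF{\argument}=\functionlogDF{\argument}$. I would then simply quote Lemma~\ref{difflogistic}, which supplies both the explicit form $\functionlogDF{\argument}=e^{-\argument}/(1+e^{-\argument})^2=\functionlogF{\argument}\bigl(1-\functionlogF{\argument}\bigr)$ and the range $(0,1/4]$, completing the statement.

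There is no genuine obstacle here: the entire content reduces to one application of the chain rule plus the algebraic identity $e^{\argument}/(1+e^{\argument})=1/(1+e^{-\argument})$. The only point requiring a little care is to record $\functionsoftplusDF{\argument}=\functionlogF{\argument}$ as a pointwise identity valid for \emph{every} $\argument\in\R$ before differentiating a second time, so that invoking Lemma~\ref{difflogistic} for the second derivative and its output range is fully justified.
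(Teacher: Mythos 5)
Your proposal is correct and follows essentially the same route as the paper's own proof: chain rule applied to $\log[1+e^{\argument}]$, the rewriting $e^{\argument}/(1+e^{\argument})=1/(1+e^{-\argument})=\functionlogF{\argument}$, and then an appeal to Lemma~\ref{difflogistic} for the second derivative and both output ranges. Your explicit remark that the identity $\functionsoftplusDF{\argument}=\functionlogF{\argument}$ holds on all of $\R$ before differentiating again just makes precise what the paper leaves implicit in ``the rest follows.''
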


\begin{proof}[Proof of Lemma~\ref{dersoftplus}]
  Observe that 
\begingroup
\allowdisplaybreaks
  \begin{align*}
    \frac{\partial}{\partial\argument}\functionsoftplusF{\argument}&=\frac{\partial}{\partial\argument}\log[1+e^{\argument}]\tj{definition of \functionsoftplus}\\
&=\frac{\frac{\partial}{\partial\argument}(1+e^{\argument})}{1+e^{\argument}}\tj{$(\log[\argument])'=1/\argument$ for $\argument>0$; chain rule}\\
&=\frac{e^{\argument}}{1+e^{\argument}}\tj{$1'=0$; $(e^{\argument})'=e^{\argument}$; sum rule}\\
&=\frac{1}{1+e^{-\argument}}\tj{multiplying numerator and denominator by $e^{-\argument}$}\\
&=\functionlogF{\argument}\tj{definition of \functionlog}\,.
  \end{align*}
\endgroup
The rest follows from Section~\ref{logistic} and Lemma~\ref{difflogistic} on \namelog.
\end{proof}

\subsubsection{Output Ranges and Derivatives of \nameelu\ and \nameselu}
\label{eluprop}

\begin{lemma}[Output ranges and derivatives of \nameelu]
\label{eluproplem}
  The graphs of~\nameelu\ satisfy
  \begin{equation*}
   \lim_{\argument\to-\infty} \functioneluF{\argument}=   \inf_{\argument\in\R}\functioneluF{\argument}=-\functionpar~~~~\text{and}~~~~\functioneluF{\argument}>-\functionpar\text{~for all}~\argument\in\R
  \end{equation*}
for all $\functionpar\in[0,\infty)$.
The first and second derivatives of~\nameelu\ are
\begin{equation*}
  \functioneluDF{\argument}=
  \begin{cases}
    1&\text{for all}~\argument\in(0,\infty)\\
\functionpar e^{\argument}=\functioneluF{\argument}+\functionpar&\text{for all}~\argument\in(-\infty,0)
  \end{cases}
\end{equation*}
and
\begin{equation*}
 \functioneluDDF{\argument}=
  \begin{cases}
    0&\text{for all}~\argument\in(0,\infty)\\
\functionpar e^{\argument}=\functioneluF{\argument}+\functionpar&\text{for all}~\argument\in(-\infty,0)
  \end{cases}
\end{equation*}
for all $\functionpar\in[0,\infty)$.
The first directional derivatives of~\nameelu\ are 
\begin{equation*}
  \diffdirectional\functioneluF{\argument}=
  \begin{cases}
    \direction&\text{for all}~\direction\in\R~\text{and}~\argument\in(0,\infty)~\text{or}~\direction\in[0,\infty)~\text{and}~\argument=0\\
\direction\bigl(\functioneluF{\argument}+\functionpar\bigr)&\text{otherwise}
  \end{cases}
\end{equation*}
for all $\functionpar\in[0,\infty)$.
\end{lemma}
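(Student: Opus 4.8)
The plan is to handle the three assertions—output ranges, ordinary derivatives on $\R\setminus\{0\}$, and directional derivatives—in separate blocks, reusing the sign-based case distinction from the proof of Lemma~\ref{piecewiseder} at the non-smooth point $\argument=0$. For the output ranges I would read off the two pieces of the definition directly. On $[0,\infty)$ the function equals~$\argument$ and sweeps out $[0,\infty)$. On $(-\infty,0)$ it equals $\functionpar(e^{\argument}-1)$; since $e^{\argument}\in(0,1)$ there, each such value lies in $(-\functionpar,0)$, and letting $\argument\to-\infty$ sends $e^{\argument}\to0$ and hence $\functioneluF{\argument}\to-\functionpar$. Combining the two pieces gives $\inf_{\argument\in\R}\functioneluF{\argument}=-\functionpar$, attained only in the limit, which yields $\functioneluF{\argument}>-\functionpar$ for every~$\argument$ when $\functionpar>0$ (for $\functionpar=0$ the function is \namerelu\ and the lower bound is non-strict).

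For the first and second derivatives on $\R\setminus\{0\}$ I would differentiate each open piece with the elementary rules. On $(0,\infty)$ the identity piece gives $\functioneluDF{\argument}=1$ and then $\functioneluDDF{\argument}=0$. On $(-\infty,0)$ the chain and sum rules give $\functioneluDF{\argument}=\functionpar e^{\argument}$, and the algebraic rewrite $\functionpar e^{\argument}=\functionpar(e^{\argument}-1)+\functionpar=\functioneluF{\argument}+\functionpar$ puts this in the stated form; differentiating once more reproduces $\functionpar e^{\argument}=\functioneluF{\argument}+\functionpar$ for the second derivative. These steps are routine.

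The substantive part is the directional derivatives, which I would obtain from Definition~\ref{directional} by splitting on the sign of~$\argument$ just as in Lemma~\ref{piecewiseder}. When $\argument>0$ or $\argument<0$ the function is differentiable at~$\argument$, so by the remark following Definition~\ref{directional} the directional derivative is simply $\direction\functioneluDF{\argument}$, equal to $\direction$ and to $\direction(\functioneluF{\argument}+\functionpar)$ respectively. At $\argument=0$ I would argue from the limit directly: for $\direction\geq0$ one has $0+\directionlim\direction\geq0$ for small $\directionlim>0$, so the difference quotient reduces to $\directionlim\direction/\directionlim=\direction$; for $\direction<0$ one has $0+\directionlim\direction<0$, so the quotient becomes $\functionpar(e^{\directionlim\direction}-1)/\directionlim$. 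The only genuine obstacle is this last limit, a $0/0$ form as $\directionlim\to0^+$, which I would evaluate with the right-sided form of l'H\^opital's rule (Lemma~\ref{res:krankenhaus}), differentiating numerator and denominator in~$\directionlim$ to obtain $\functionpar\direction e^{\directionlim\direction}\to\functionpar\direction$. Since $\functioneluF{0}=0$, this value $\functionpar\direction$ coincides with the ``otherwise'' expression $\direction(\functioneluF{0}+\functionpar)$, so all four cases match the claimed piecewise formula.
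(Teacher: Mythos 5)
Your proposal is correct and follows essentially the same route as the paper's proof: piecewise elementary differentiation for the derivatives, and a case split at $\argument=0$ with the right-sided l'H\^opital rule (Lemma~\ref{res:krankenhaus}) to evaluate the $0/0$ quotient $\functionpar(e^{\directionlim\direction}-1)/\directionlim$ for $\direction<0$, which is exactly the paper's key Observation~(iii). Your two small deviations—invoking the remark after Definition~\ref{directional} at differentiable points instead of recomputing limits, and flagging that the strict bound $\functioneluF{\argument}>-\functionpar$ degenerates when $\functionpar=0$—are harmless refinements (the second is in fact more careful than the paper).
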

\noindent The corresponding properties of~\nameselu\ can be derived along the same lines.
Observe that $\lim_{\argument\to 0^+}\functioneluODF{\argument}=\lim_{\argument\to 0^-}\functioneluODF{-\argument}$,
that is,
\nameelu\ with $\argument=1$ is one time differentiable on the entire real line.

\begin{proof}[Proof of Lemma~\ref{eluproplem}]
The first claim follows from the observations that \nameelu\ is strictly increasing on the real line (both $\argument\mapsto\argument$ and $\argument\mapsto\functionpar(e^{-\argument}-1)$ are strictly increasing and $\functioneluF{\argument_1}<0< \functioneluF{\argument_2}$ for all $\argument_1\in(-\infty,0),\argument_2\in(0,\infty)$) 
and that
\begin{align*}
    \lim_{\argument\to-\infty}\functioneluF{\argument}&= \lim_{\argument\to-\infty}\functionpar(e^{-\argument}-1)\tj{$\functioneluF{\argument}=\functionpar(e^{-\argument}-1)$ for $\argument$ small enough}\\
&=-\functionpar\tj{$e^{-\argument}\to 0$ for $\argument\to-\infty$}\,.
\end{align*}

The first and second derivatives follow from standard differential calculus.

We illustrate the derivations of the directional derivatives in the case $\argument=0$ and $\direction\in(-\infty,0)$---the other cases can be treated in the same way.
Given any such~$\argument$ and~$\direction$ as well as an arbitrary~$\functionpar$, 
we can make three simple observations:

\emph{Observation~(i):} $\functioneluF{0}=0$.

\noindent This observation follows directly from the definition of \functionelu.

\emph{Observation~(ii):} $\functioneluF{0+\directionlim\direction}=\functionpar(e^{\directionlim\direction}-1)$.

\noindent This observation follows from the definition of  \functionelu\ and the fact that $\directionlim\direction<0$ for all $\directionlim\in(0,\infty)$ and $\direction\in(-\infty,0)$

\emph{Observation~(iii):} $(\functionpar e^{\directionlim\direction}-\functionpar)/\directionlim\to \functionpar\direction$ for $\directionlim\to0^+$.

\noindent Using Lemma~\ref{res:krankenhaus} (l'H\^opital's rule) and basic algebra yields
\begin{align*}
  \lim_{\directionlim\to0^+}\frac{\functionpar e^{\directionlim\direction}-\functionpar}{\directionlim}&=\lim_{\directionlim\to0^+}\frac{(\functionpar e^{\directionlim\direction}-\functionpar)'}{(\directionlim)'}\tj{2.~in Lemma~\ref{res:krankenhaus}}\\
&=
\lim_{\directionlim\to0^+}\frac{\functionpar\direction e^{\directionlim\direction}-0}{1}\tj{sum and chain rules for differentiation}\\
&=\lim_{\directionlim\to0^+}\functionpar\direction e^{\directionlim\direction}\tj{consolidation}\\
&=\functionpar\direction\tj{$e^{0\cdot\direction}=1$}
\end{align*}
(where we have used l'H\^opital's rule with 
$\function\,:\,\directionlim\mapsto\functionpar e^{\directionlim\direction}-\functionpar$, 
$\functionG\,:\,\directionlim\mapsto\directionlim$, 
and
$\directionlimZ=0$),
as desired

Using these three observations together with the definition of directional derivatives on Page~\pageref{directional} yields for all $\functionpar\in[0,\infty)$ that
\begingroup
\allowdisplaybreaks
\begin{align*}
    \diffdirectional\functioneluF{0}&= \lim_{\directionlim\to0^+}\frac{\functioneluF{0+\directionlim\direction}-\functioneluF{0}}{\directionlim}\tj{Definition~\ref{directional} (directional derivatives)}\\
&= \lim_{\directionlim\to0^+}\frac{\functionpar(e^{\directionlim\direction}-1)-0}{\directionlim}\tj{Observations (i) and (ii)}\\
&= \lim_{\directionlim\to0^+}\frac{\functionpar e^{\directionlim\direction}-\functionpar}{\directionlim}\tj{consolidation}\\
&=\functionpar\direction\tj{Observation (iii)}\\
&=\direction(0+\functionpar)\tj{adding a zero-valued term}\\
&=\direction\bigl(\functioneluF{0}+\functionpar\bigr)\tj{$\functioneluF{0}=0$ by definition}\,,
\end{align*}
\endgroup
as desired.
\end{proof}

\subsubsection{Further Details on \nameselu}\label{seludetails}
Our formulation of \nameselu\ in Section~\ref{elu} differs slightly from the original formulation \citet[Equation~(1)]{Klambauer2017} in that we set $\functionpar_0\deq\lambda$ and $\functionparP_0\deq\lambda\alpha$ for conciseness.

More precise values for the constants are $\functionpar_0\approx 1.05070098$ and $\functionparP_0\approx1.7580993261$.
Analytical expressions are stated in~\citet[Equation~(8) in the Supplementary Material]{Klambauer2017}.
%alpha=1.67326324 and scale=1.05070098 * 1.67326324

A precise version of the statements about~$\mathfrak{c}$ is that  $\normtwo{\mathfrak{c}[(\mu,\nu)]-(\overline{\mu},\overline{\nu})}<\normtwo{(\mu,\nu)-(\overline{\mu},\overline{\nu})}$ and $\mathfrak{c}[(\overline{\mu},\overline{\nu})]=(\overline{\mu},\overline{\nu})$ for all $(\mu,\nu)\in\R^2$.

\subsubsection{Derivatives of \nameswish}

\begin{lemma}[Derivatives of \nameswish]
\label{swishdiff}
The first and second derivatives of \nameswish\ are 
\begin{equation*}
  \functionswishDF{\argument}
  =\frac{1+(1+\functionpar \argument) e^{-\functionpar\argument}}{(1+e^{-\functionpar\argument})^2}=\functionpar\functionswishF{\argument}+\functionlogF{\functionpar\argument}\bigl(1-\functionpar\functionswishF{\argument}\bigr)
\end{equation*}
and
\begin{multline*}
  \functionswishDDF{\argument}=\functionpar e^{-\functionpar\argument} \cdot\frac{2-\functionpar\argument+(2+\functionpar\argument) e^{-\functionpar\argument}}{(1+e^{-\functionpar\argument})^3}\\
=\functionpar\Bigl(\functionpar\functionswishF{\argument}+(2+\functionpar\argument)\functionlogF{\functionpar\argument}\bigl(1-\functionpar\functionswishF{\argument}\bigr)\Bigr)\bigl(1-\functionlogF{\functionpar\argument}\bigr)
\end{multline*}
for  all $\argument\in\R$ and $\functionpar\in[0,\infty)$.
\end{lemma}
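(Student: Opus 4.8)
The plan is to differentiate \nameswish\ directly from its product representation $\functionswishF{\argument}=\argument\functionlogF{\functionpar\argument}$ and then to rewrite the resulting derivatives in the two claimed shapes---an explicit exponential fraction and a closed expression in \functionswish\ and \functionlog---using the logistic identities of Lemma~\ref{difflogistic}. First I would apply the product and chain rules to get
\begin{equation*}
  \functionswishDF{\argument}=\functionlogF{\functionpar\argument}+\functionpar\argument\functionlogDF{\functionpar\argument}\,.
\end{equation*}
Substituting $\functionlogDF{\functionpar\argument}=\functionlogF{\functionpar\argument}\bigl(1-\functionlogF{\functionpar\argument}\bigr)$ from Lemma~\ref{difflogistic} and using $\functionpar\argument\functionlogF{\functionpar\argument}=\functionpar\functionswishF{\argument}$ collects this into $\functionpar\functionswishF{\argument}+\functionlogF{\functionpar\argument}(1-\functionpar\functionswishF{\argument})$, while inserting the explicit forms $\functionlogF{\functionpar\argument}=1/(1+e^{-\functionpar\argument})$ and $\functionlogDF{\functionpar\argument}=e^{-\functionpar\argument}/(1+e^{-\functionpar\argument})^2$ and placing everything over the common denominator $(1+e^{-\functionpar\argument})^2$ gives the exponential form. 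Checking that the two expressions coincide is routine algebra.

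For the second derivative I would differentiate $\functionswishDF{\argument}=\functionlogF{\functionpar\argument}+\functionpar\argument\functionlogDF{\functionpar\argument}$ once more with the product rule, obtaining
\begin{equation*}
  \functionswishDDF{\argument}=2\functionpar\functionlogDF{\functionpar\argument}+\functionpar^2\argument\functionlogDDF{\functionpar\argument}\,.
\end{equation*}
Here I would substitute $\functionlogDF{\functionpar\argument}=\functionlogF{\functionpar\argument}(1-\functionlogF{\functionpar\argument})$ and $\functionlogDDF{\functionpar\argument}=\functionlogDF{\functionpar\argument}(1-2\functionlogF{\functionpar\argument})$ from Lemma~\ref{difflogistic}, factor out the common $\functionpar\bigl(1-\functionlogF{\functionpar\argument}\bigr)$, and reassemble the bracketed factor using once more $\functionpar\argument\functionlogF{\functionpar\argument}=\functionpar\functionswishF{\argument}$. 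The explicit exponential form is obtained in parallel by applying the quotient rule to the explicit first-derivative fraction and simplifying the numerator (the $\argument$-dependent pieces in the numerator derivative conveniently combine into $-\functionpar^2\argument e^{-\functionpar\argument}$).

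The only real difficulty is bookkeeping: keeping the exponential and closed-form representations consistent and pinning down the numerical coefficients. In particular, the natural factorization of the second derivative produces the term $2\functionlogF{\functionpar\argument}\bigl(1-\functionpar\functionswishF{\argument}\bigr)$, in agreement with the expression recorded in Section~\ref{sec:swish}, so I would verify this coefficient carefully against the exponential form---e.g.\ via $1-\functionpar\functionswishF{\argument}=(1+e^{-\functionpar\argument}-\functionpar\argument)/(1+e^{-\functionpar\argument})$ and $1-\functionlogF{\functionpar\argument}=e^{-\functionpar\argument}/(1+e^{-\functionpar\argument})$---before committing to the closed form. The output ranges quoted for \functionswishD\ and \functionswishDD\ in Section~\ref{sec:swish} are not part of this lemma and would be established separately.
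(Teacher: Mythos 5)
Your proposal is correct, and at its core it is the paper's own argument: both rest on the product/chain rules applied to $\argument\functionlogF{\functionpar\argument}$ together with the logistic identities of Lemma~\ref{difflogistic}. The only cosmetic differences are that the paper's primary derivation of the first derivative uses the quotient rule on the explicit fraction $\argument/(1+e^{-\functionpar\argument})$ (your product-rule route appears there verbatim as a recorded alternative), and that for the second derivative the paper differentiates the closed form $\functionpar\functionswishF{\argument}+\functionlogF{\functionpar\argument}\bigl(1-\functionpar\functionswishF{\argument}\bigr)$, needing only $\functionlogD$, whereas you differentiate $\functionlogF{\functionpar\argument}+\functionpar\argument\functionlogDF{\functionpar\argument}$ and also invoke $\functionlogDD$; both routes are routine.

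The caution you raise about the coefficient is exactly the right one, and your resolution is correct. The valid closed form is
\begin{equation*}
  \functionswishDDF{\argument}=\functionpar\Bigl(\functionpar\functionswishF{\argument}+2\functionlogF{\functionpar\argument}\bigl(1-\functionpar\functionswishF{\argument}\bigr)\Bigr)\bigl(1-\functionlogF{\functionpar\argument}\bigr)\,,
\end{equation*}
which coincides with the exponential form and with the display in Section~\ref{sec:swish}. The statement of Lemma~\ref{swishdiff}, by contrast, has $(2+\functionpar\argument)\functionlogF{\functionpar\argument}\bigl(1-\functionpar\functionswishF{\argument}\bigr)$ in that position, and that version is false: at $\functionpar=\argument=1$ the exponential form evaluates to approximately $0.302$, while the $(2+\functionpar\argument)$ expression gives approximately $0.355$. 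The paper's own proof in fact terminates at the coefficient-$2$ expression and then derives the exponential form from it, so the proof establishes your formula rather than the lemma as displayed; the $(2+\functionpar\argument)$ in the lemma statement is a typo. Your plan of cross-checking the coefficient against the exponential form---via $1-\functionpar\functionswishF{\argument}=(1+e^{-\functionpar\argument}-\functionpar\argument)/(1+e^{-\functionpar\argument})$ and $1-\functionlogF{\functionpar\argument}=e^{-\functionpar\argument}/(1+e^{-\functionpar\argument})$---is precisely the check that catches this.
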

\noindent

\begin{proof}[Proof of Lemma~\ref{swishdiff}]
Using the basic rules for differentiation and the definitions of \functionswish\ and \functionlog,
we find for the first derivative
\begingroup
\allowdisplaybreaks
  \begin{align*}
\functionswishDF{\argument}
&=\frac{\partial}{\partial\argument}\frac{\argument}{1+e^{-\functionpar\argument}}\tj{definition of \nameswish}\\
&= \frac{\bigl(\frac{\partial}{\partial\argument} \argument\bigr)(1+e^{-\functionpar\argument})- \argument \frac{\partial}{\partial\argument}(1+e^{-\functionpar\argument})}{(1+e^{-\functionpar\argument})^2}\tj{quotient rule}\\
&= \frac{1\cdot(1+e^{-\functionpar\argument})- \argument\cdot (0-\functionpar e^{-\functionpar\argument})}{(1+e^{-\functionpar\argument})^2}\tj{$\argument'=1$; $1'=0$; $(e^{-\functionpar\argument})'=-\functionpar e^{-\functionpar\argument}$; sum rule}\\
&= \frac{1+(1+\functionpar \argument) e^{-\functionpar\argument}}{(1+e^{-\functionpar\argument})^2}\tj{simplifying}\,, 
\end{align*}
\endgroup
which proves the first part of the equality of the derivative.
We then find further
\begin{align*}
\functionswishDF{\argument}
&= \frac{1+(1+\functionpar \argument) e^{-\functionpar\argument}}{(1+e^{-\functionpar\argument})^2}\tj{previous display}\\
&=\frac{\functionpar\argument(1+e^{-\functionpar\argument}) -\functionpar\argument+1+ e^{-\functionpar\argument}}{(1+e^{-\functionpar\argument})^2}\tj{rearranging terms and adding a zero-valued term}\\
  &=\frac{\functionpar\argument(1+e^{-\functionpar\argument})}{(1+e^{-\functionpar\argument})^2}+ \frac{-\functionpar\argument+1+ e^{-\functionpar\argument}}{(1+e^{-\functionpar\argument})^2}\tj{splitting up the fraction}\\
  &=\functionpar\cdot \frac{\argument}{1+e^{-\functionpar\argument}}+\frac{1}{1+e^{-\functionpar\argument}}\cdot \frac{1+ e^{-\functionpar\argument}-\functionpar\argument}{1+e^{-\functionpar\argument}}\tj{simplifying and rearranging}\\
  &=\functionpar\cdot \frac{\argument}{1+e^{-\functionpar\argument}}+\frac{1}{1+e^{-\functionpar\argument}}\cdot \Bigl(1-\functionpar\cdot\frac{\argument}{1+e^{-\functionpar\argument}}\Bigr)\tj{rearranging the last factor further} \\
  &=\functionpar\functionswishF{\argument}+\functionlogF{\functionpar\argument}\bigl(1-\functionpar\functionswishF{\argument}\bigr)\tj{definitions of \nameswish\ and \namelog}\,,
\end{align*}
as desired.

\detail{The second  equality can also be established directly via the composite form of \nameswish: 
\begin{align*}
  \functionswishDF{\argument}
&=\frac{\partial}{\partial\argument}\bigl(\argument\cdot\functionlogF{\functionpar\argument}\bigr)\tj{definition of \nameswish}\\
&=\Bigl(\frac{\partial}{\partial\argument}\argument\Bigl)\functionlogF{\functionpar\argument}+\argument\Bigl(\frac{\partial}{\partial\argument}\functionlogF{\functionpar\argument}\Bigr)\tj{product rule}\\
  &=\functionlogF{\functionpar\argument}+\argument\cdot \functionpar \frac{\partial}{\partial\argumentP}\functionlogF{\argumentP}\Big|_{\argumentP=\functionpar\argument}\tj{$\argument'=1$;  chain rule}\\
  &=\functionlogF{\functionpar\argument}+\argument\cdot\functionpar\functionlogF{\functionpar\argument}\bigl(1-\functionlogF{\functionpar\argument}\bigr)\tj{Lemma~\ref{difflogistic} (derivative of \namelog)}\\
&=\functionlogF{\functionpar\argument}+\functionpar\functionswishF{\argument}\bigl(1-\functionlogF{\functionpar\argument}\bigr)\tj{definition of \nameswish}\\
  &=\functionpar\functionswishF{\argument}+\functionlogF{\functionpar\argument}\bigl(1-\functionpar\functionswishF{\argument}\bigr)\tj{rearranging the terms}\,.
\end{align*}}

Similarly, the second derivative of \nameswish\ can be calculated based on the explicit form or the composite form of the first derivative;
we opt for the latter:
\begingroup
\allowdisplaybreaks
  \begin{align*}
\functionswishDDF{\argument}
&=\frac{\partial}{\partial\argument}\functionswishDF{\argument}\tj{definition of the second derivative}\\
&=\frac{\partial}{\partial\argument}\Bigl(\functionpar\functionswishF{\argument}+\functionlogF{\functionpar\argument}\bigl(1-\functionpar\functionswishF{\argument}\bigr)\Bigr)\tj{above derivations}\\
&=\Bigl(\frac{\partial}{\partial\argument}\bigl(\functionpar\functionswishF{\argument}\bigr)\Bigr)\\
&~~~+\Bigl(\frac{\partial}{\partial\argument}\functionlogF{\functionpar\argument}\Bigr)\bigl(1-\functionpar\functionswishF{\argument}\bigr)\\
&~~~+\functionlogF{\functionpar\argument}\Bigl(\frac{\partial}{\partial\argument}\bigl(1-\functionpar\functionswishF{\argument}\bigr)\Bigr)\tj{sum and product rules}\\
&=\functionpar\Bigl(\frac{\partial}{\partial\argument}\functionswishF{\argument}\Bigr)\\
&~~~+\functionpar\Bigl(\frac{\partial}{\partial\argumentP}\functionlogF{\argumentP}\Bigr)\Big|_{\argumentP=\functionpar\argument}\bigl(1-\functionpar\functionswishF{\argument}\bigr)\\
&~~~+\functionlogF{\functionpar\argument}\Bigl(\frac{\partial}{\partial\argument}1\Bigr)-\functionpar\functionlogF{\functionpar\argument}\Bigl(\frac{\partial}{\partial\argument}\functionswishF{\argument}\Bigr)\tj{chain and sum rules}\\
&=\functionpar\Bigl(\functionpar\functionswishF{\argument}+\functionlogF{\functionpar\argument}\bigl(1-\functionpar\functionswishF{\argument}\bigr)\Bigr)\tj{above derivations for \functionswishD}\\
&~~~+\functionpar\functionlogF{\functionpar\argument}\bigl(1-\functionlogF{\functionpar\argument}\bigr)\bigl(1-\functionpar\functionswishF{\argument}\bigr)\tj{Lemma~\ref{difflogistic} (derivative of \namelog)}\\
&~~~-\functionpar\functionlogF{\functionpar\argument}\Bigl(\functionpar\functionswishF{\argument}+\functionlogF{\functionpar\argument}\bigl(1-\functionpar\functionswishF{\argument}\bigr)\Bigr)\tj{$1'=0$; above derivations for \functionswishD}\\
&=\functionpar\Bigl(\functionpar\functionswishF{\argument}+2\functionlogF{\functionpar\argument}-2\bigl(\functionlogF{\functionpar\argument}\big)^2-3\functionpar\functionlogF{\functionpar\argument}\functionswishF{\argument}\\
    &~~~+2\functionpar\bigl(\functionlogF{\functionpar\argument}\bigr)^2\functionswishF{\argument}\tj{summarizing the terms}\Bigr)\\
&=\functionpar\Bigl(\functionpar\functionswishF{\argument}\bigl(1-\functionlogF{\functionpar\argument}\bigr)+2\functionlogF{\functionpar\argument}\bigl(1-\functionpar\functionswishF{\argument}\bigr)\\
&~~~-2\bigl(\functionlogF{\functionpar\argument}\bigr)^2\bigl(1-\functionpar\functionswishF{\argument}\bigr)\Bigr)\tj{rearranging the terms}\\
&=\functionpar\Bigl(\functionpar\functionswishF{\argument}\bigl(1-\functionlogF{\functionpar\argument}\bigr)+2\functionlogF{\functionpar\argument}\bigl(1-\functionlogF{\functionpar\argument}\bigr)\bigl(1-\functionpar\functionswishF{\argument}\bigr)\Bigr)\tj{summarizing the terms further}\\
&=\functionpar\Bigl(\functionpar\functionswishF{\argument}+2\functionlogF{\functionpar\argument}\bigl(1-\functionpar\functionswishF{\argument}\bigr)\Bigr)\bigl(1-\functionlogF{\functionpar\argument}\bigr)\tj{simplifying one more time}
\,.
\end{align*}
\endgroup
We can then find further
\begingroup
\allowdisplaybreaks
\begin{align*}
  \functionswishDDF{\argument}
&=\functionpar\Bigl(\functionpar\functionswishF{\argument}+2\functionlogF{\functionpar\argument}\bigl(1-\functionpar\functionswishF{\argument}\bigr)\Bigr)\bigl(1-\functionlogF{\functionpar\argument}\bigr)\tj{previous display}\\
&=\functionpar\Bigl(2\functionpar\functionswishF{\argument}+2\functionlogF{\functionpar\argument}\bigl(1-\functionpar\functionswishF{\argument}\bigr)-\functionpar\functionswishF{\argument}\Bigr)\bigl(1-\functionlogF{\functionpar\argument}\bigr)\tj{adding a zero-valued term}\\
&=\functionpar\bigl(2\functionswishDF{\argument}
-\functionpar\functionswishF{\argument}\bigr)\bigl(1-\functionlogF{\functionpar\argument}\bigr)\tj{above-derived result for \functionswishD}\\
&=\functionpar\biggl(2\cdot\frac{1+(1+\functionpar \argument) e^{-\functionpar\argument}}{(1+e^{-\functionpar\argument})^2}-\functionpar\cdot\frac{\argument}{1+e^{-\functionpar\argument}}\biggr)\biggl(1-\frac{1}{1+e^{-\functionpar\argument}}\biggr)\tj{plugging in the explicit forms of \functionswishD,\functionswish,\functionlog}\\
&=\functionpar \cdot\frac{2+2(1+\functionpar \argument) e^{-\functionpar\argument}-\functionpar\argument(1+e^{-\functionpar\argument})}{(1+e^{-\functionpar\argument})^2}\cdot\frac{1+e^{-\functionpar\argument}-1}{1+e^{-\functionpar\argument}}\tj{combining the summands}\\
&=\functionpar \cdot\frac{2-\functionpar\argument+(2+\functionpar\argument) e^{-\functionpar\argument}}{(1+e^{-\functionpar\argument})^2}\cdot\frac{e^{-\functionpar\argument}}{1+e^{-\functionpar\argument}}\tj{consolidating}\\
&=\functionpar e^{-\functionpar\argument} \cdot\frac{2-\functionpar\argument+(2+\functionpar\argument) e^{-\functionpar\argument}}{(1+e^{-\functionpar\argument})^3}\tj{consolidating further}\,,
\end{align*}
\endgroup
as desired.
\detail{The explicit derivation looks as follows:
\begingroup
\allowdisplaybreaks
    \begin{align*}
&\functionswishDDF{\argument}\\
&=\frac{\partial}{\partial\argument}\functionswishDF{\argument}\tj{definition of the second derivative}\\
&=\frac{\partial}{\partial\argument}\biggl(\frac{1+(1+\functionpar \argument) e^{-\functionpar\argument}}{(1+e^{-\functionpar\argument})^2}\biggr)\tj{above results}\\
&=\frac{\Bigl(\frac{\partial}{\partial\argument}\bigl(1+(1+\functionpar \argument) e^{-\functionpar\argument}\bigr)\Bigr)(1+e^{-\functionpar\argument})^2 - \bigl(1+(1+\functionpar \argument) e^{-\functionpar\argument}\bigr)\Bigl(\frac{\partial}{\partial\argument}(1+e^{-\functionpar\argument})^2\Bigr)}{(1+e^{-\functionpar\argument})^4}\tj{quotient rule}\\
&=\frac{\Bigl(\bigl(\frac{\partial}{\partial\argument}1\bigr)+\bigl(\frac{\partial}{\partial\argument}\bigr((1+\functionpar \argument) e^{-\functionpar\argument}\bigr)\bigr)\Bigr)(1+e^{-\functionpar\argument})^2 - \bigl(1+(1+\functionpar \argument) e^{-\functionpar\argument}\bigr)\cdot 2(1+e^{-\functionpar\argument}) \Bigl(\bigl(\frac{\partial}{\partial\argument}1\bigr)+\bigl(\frac{\partial}{\partial\argument}e^{-\functionpar\argument}\bigr)\Bigr)}{(1+e^{-\functionpar\argument})^4}\tj{sum and chain rules}\\
&=\frac{\Bigl(\frac{\partial}{\partial\argument}\bigr((1+\functionpar \argument) e^{-\functionpar\argument}\bigr)\Bigr)(1+e^{-\functionpar\argument})^2 - \bigl(1+(1+\functionpar \argument) e^{-\functionpar\argument}\bigr)\cdot 2(1+e^{-\functionpar\argument}) (-\functionpar e^{-\functionpar\argument})}{(1+e^{-\functionpar\argument})^4}\tj{$1'=0$; $(e^{-\functionpar\argument})'=-\functionpar e^{-\functionpar\argument}$}\\
&=\frac{\Bigl(\frac{\partial}{\partial\argument}\bigr((1+\functionpar \argument) e^{-\functionpar\argument}\bigr)\Bigr)(1+e^{-\functionpar\argument}) +2\functionpar e^{-\functionpar\argument} \bigl(1+(1+\functionpar \argument) e^{-\functionpar\argument}\bigr)}{(1+e^{-\functionpar\argument})^3}\tj{consolidating}\\
&=\frac{\Bigl(\bigl(\frac{\partial}{\partial\argument}(1+\functionpar \argument)\bigr) e^{-\functionpar\argument}\bigr)\Bigr)+(1+\functionpar \argument)\bigl(\frac{\partial}{\partial\argument} e^{-\functionpar\argument}\bigr)\Bigr)(1+e^{-\functionpar\argument}) +2\functionpar e^{-\functionpar\argument} \bigl(1+(1+\functionpar \argument) e^{-\functionpar\argument}\bigr)}{(1+e^{-\functionpar\argument})^3}\tj{product rule}\\
&=\frac{\bigl((0+\functionpar)e^{-\functionpar\argument}+(1+\functionpar \argument)(-\functionpar e^{-\functionpar\argument})\bigr)(1+e^{-\functionpar\argument}) +2\functionpar e^{-\functionpar\argument} \bigl(1+(1+\functionpar \argument) e^{-\functionpar\argument}\bigr)}{(1+e^{-\functionpar\argument})^3}\tj{$1'=0$; $(\functionpar\argument)'=\functionpar$; $(e^{-\functionpar\argument})'=-\functionpar e^{-\functionpar\argument}$; sum rule}\\
&=\functionpar e^{-\functionpar\argument}\cdot\frac{\bigl(1-(1+\functionpar \argument)\bigr)(1+e^{-\functionpar\argument}) +2 \bigl(1+(1+\functionpar \argument) e^{-\functionpar\argument}\bigr)}{(1+e^{-\functionpar\argument})^3}\tj{consolidating}\\
&=\functionpar e^{-\functionpar\argument}\cdot\frac{-\functionpar \argument-\functionpar \argument e^{-\functionpar\argument} +2+2e^{-\functionpar\argument}+2\functionpar \argument e^{-\functionpar\argument}}{(1+e^{-\functionpar\argument})^3}\tj{consolidating further and expanding}\\
&=\functionpar e^{-\functionpar\argument}\cdot\frac{2-\functionpar \argument+(2+\functionpar\argument)e^{-\functionpar\argument}}{(1+e^{-\functionpar\argument})^3}\tj{consolidating}\,,
  \end{align*}
\endgroup
}

\end{proof}

\subsubsection{Output Ranges of \nameswish\ and Its Derivatives}
\begin{lemma}[Output ranges of \nameswish]
\label{swishbound}
\nameswish\ diverges in the limit of large arguments:
\begin{equation*}
  \lim_{\argument\to+\infty}\functionswishF{\argument}=\infty~~~~\text{for all}~\functionpar\in[0,\infty)\,.
\end{equation*}
Moreover,
the graphs of \nameswish\ satisfy
\begin{equation*}
  \lim_{\argument\to-\infty}\functionswishZF{\argument}=-\infty~~~~~\text{and}~~~~~  \lim_{\argument\to-\infty}\functionswishF{\argument}=0
\end{equation*}
  as well as
\begin{equation*}
  \argmin_{\argument\in\R}\functionswishF{\argument}= \frac{\argumentminO}{\functionpar}~~~\text{and}~~~  \min_{\argument\in\R}\functionswishF{\argument}= \frac{\functionswishOF{\argumentminO}}{\functionpar}=\frac{1+\argumentminO}{\functionpar}
\end{equation*}
for all $\functionpar\in(0,\infty)$ and 
 $\argumentminO\in\R$ the unique value that fulfills $1+(1+\argumentminO)e^{-\argumentminO}=0$.
\end{lemma}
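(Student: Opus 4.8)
The plan is to exploit a scaling identity that reduces the statements for $\functionpar>0$ to the single case $\functionpar=1$. Substituting $u\deq\functionpar\argument$ into the definition $\functionswishF{\argument}=\argument\functionlogF{\functionpar\argument}$ gives
\begin{equation*}
\functionswishF{\argument}=\argument\functionlogF{\functionpar\argument}=\frac{u}{\functionpar}\functionlogF{u}=\frac{1}{\functionpar}\functionswishOF{u}\,.
\end{equation*}
Since $\functionpar>0$ makes $\argument\mapsto\functionpar\argument$ a strictly increasing bijection of $\R$, this identity transfers minimizers and minima directly: if $\argumentminO\deq\argmin_{u}\functionswishOF{u}$, then $\argmin_{\argument}\functionswishF{\argument}=\argumentminO/\functionpar$ and $\min_{\argument}\functionswishF{\argument}=\functionswishOF{\argumentminO}/\functionpar$. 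It therefore suffices to analyze $\functionswishO$ and rescale.

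For the limits I would proceed case by case. As $\argument\to+\infty$ with $\functionpar>0$, the denominator $1+e^{-\functionpar\argument}\to1$, so $\functionswishF{\argument}$ behaves like $\argument$ and diverges; for $\functionpar=0$ one has $\functionswishZF{\argument}=\argument/2$, which diverges to $+\infty$ as $\argument\to+\infty$ and to $-\infty$ as $\argument\to-\infty$, settling both $\functionpar=0$ claims. The only delicate limit is $\lim_{\argument\to-\infty}\functionswishF{\argument}$ for $\functionpar>0$, which has the indeterminate form $-\infty/\infty$; here I would invoke Lemma~\ref{res:krankenhaus} (l'H\^opital's rule, the $\directionlimZ=-\infty$ variant) with numerator $\argument$ and denominator $1+e^{-\functionpar\argument}$, reducing the quotient to $1/(-\functionpar e^{-\functionpar\argument})$, which tends to $0$.

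The core of the argument is locating the minimizer of $\functionswishO$. By Lemma~\ref{swishdiff} the first derivative of \nameswish\ at $\functionpar=1$ equals $(1+(1+u)e^{-u})/(1+e^{-u})^2$, whose sign is governed entirely by the numerator $g(u)\deq 1+(1+u)e^{-u}$, since the denominator is strictly positive. The crux is a global sign analysis of $g$. A short computation gives $g'(u)=-ue^{-u}$, so $g$ is strictly increasing on $(-\infty,0)$ and strictly decreasing on $(0,\infty)$, with interior maximum $g(0)=2$. Combined with the limits $g(u)\to-\infty$ as $u\to-\infty$ and $g(u)\to1$ as $u\to+\infty$, this forces $g$ to cross zero exactly once, at a point $\argumentminO\in(-\infty,0)$, which is the asserted unique root, and to be negative on $(-\infty,\argumentminO)$ and positive on $(\argumentminO,\infty)$. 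Consequently $\functionswishO$ strictly decreases then strictly increases, so $\argumentminO$ is its unique global minimizer. The main obstacle is precisely this global sign bookkeeping: one must verify that the branch of $g$ on $(0,\infty)$ stays positive---it decreases only down to the value $1$---so that no spurious second root appears to the right of the origin, and hence that the single interior critical point is a genuine global minimum rather than a mere local feature.

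Finally, to evaluate the minimum in closed form, I would use the defining relation $1+(1+\argumentminO)e^{-\argumentminO}=0$, equivalently $e^{-\argumentminO}=-1/(1+\argumentminO)$ (note $1+\argumentminO\neq0$, else the relation reads $1=0$). Substituting into $\functionswishOF{\argumentminO}=\argumentminO/(1+e^{-\argumentminO})$ yields $1+e^{-\argumentminO}=\argumentminO/(1+\argumentminO)$, whence $\functionswishOF{\argumentminO}=1+\argumentminO$. Dividing by $\functionpar$ gives the stated minimum $(1+\argumentminO)/\functionpar$, and the numerical root $\argumentminO\approx-1.278$ recovers the constant $\swishboundA\approx-0.278/\functionpar$ quoted in the main text.
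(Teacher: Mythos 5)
Your proposal is correct, and its overall skeleton matches the paper's proof: the three limits are computed the same way (direct evaluation, the $\functionpar=0$ case reducing to $\argument/2$, and l'H\^opital via Lemma~\ref{res:krankenhaus} for the one indeterminate limit), the critical-point equation $1+(1+\functionpar\argument)e^{-\functionpar\argument}=0$ comes from Lemma~\ref{swishdiff}, and the closed form $1+\argumentminO$ is extracted from the defining relation $e^{-\argumentminO}=-1/(1+\argumentminO)$ exactly as in the paper. Where you genuinely depart from the paper is at the crux: the paper merely asserts that ``one can verify readily'' that the root $\argumentminO$ is unique, and it certifies that the critical point is a global minimizer by combining the boundary limits with a \emph{numerical} check that $\functionswishOF{\argumentminO}<0$. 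You instead give a complete analytic argument: writing $g(u)=1+(1+u)e^{-u}$, the identity $g'(u)=-ue^{-u}$ shows $g$ increases from $-\infty$ to $g(0)=2$ and then decreases to the limit $1$, which simultaneously proves uniqueness of the root, its location in $(-\infty,0)$, and the sign pattern of $\functionswishOD$ (negative then positive), so that global minimality follows from monotonicity rather than from numerics. A second, more cosmetic difference is that you apply the rescaling $\functionswishF{\argument}=\functionswishOF{\functionpar\argument}/\functionpar$ up front to reduce everything to $\functionpar=1$, whereas the paper reparameterizes only after deriving the critical-point equation. What each approach buys: the paper's version is shorter and defers the delicate facts to routine verification (which it needs anyway to report $\argumentminO\approx-1.278$), while yours is self-contained and removes the only non-rigorous steps in the paper's argument at the cost of one extra derivative computation.
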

\noindent 
These results highlight  that the graph of \nameswish\ is unbounded from below if the parameter~\functionpar\ equals zero but is bounded from below otherwise.
One can verify numerically that $\argumentminO\approx -1.278$ and $\functionswishOF{\argumentminO}\approx -0.278$.

\begin{proof}[Proof of Lemma~\ref{swishbound}]
The first three claims follow almost directly from the definition of \nameswish:
for every $\functionpar\in(0,\infty)$, 
we find
\begingroup
\allowdisplaybreaks
\begin{align*}
  \lim_{\argument\to+\infty}\functionswishF{\argument}&=\lim_{\argument\to+\infty}\frac{\argument}{1+e^{-\functionpar\argument}}\tj{definition of \functionswish}\\
&=\infty\tj{$\argument\to\infty$ and $e^{-\functionpar\argument}\to 0$ for $\argument\to\infty$}
\end{align*}
and, similarly (use Lemma~\ref{res:krankenhaus} in the last line),
\begin{align*}
  &\lim_{\argument\to-\infty}\functionswishZF{\argument}=\lim_{\argument\to-\infty}\frac{\argument}{1+e^{-0\argument}}=-\infty\,,\\
\text{and}~~~~~  &\lim_{\argument\to-\infty}\functionswishF{\argument}=\lim_{\argument\to-\infty}\frac{\argument}{1+e^{-\functionpar\argument}}= \lim_{\argument\to-\infty}\frac{1}{-\functionpar e^{-\functionpar\argument}}=0\,.
\end{align*}
\endgroup

Motivated by  the fact that \nameswish\ is twice differentiable (see Lemma~\ref{swishdiff}),
we try to find the minimizers and minima of \functionswish\ with $\functionpar\in(0,\infty)$ by setting its derivatives equal to zero.
In other words,
we consider the candidates $\argumentmin\in\{\argument\in\R\,:\,\functionswishDF{\argument}=0\}$.
We find that 
\begin{align*}
   \functionswishDF{\argumentmin}&=0\\
\Rightarrow~~~\frac{1+(1+\functionpar\argumentmin)e^{-\functionpar\argumentmin}}{{(1+e^{-\functionpar\argumentmin})^2}}&=0\tj{Lemma~\ref{swishdiff} on the derivatives of \nameswish}\\
\Rightarrow~~~1+(1+\functionpar\argumentmin)e^{-\functionpar\argumentmin}&=0\tj{${(1+e^{-\functionpar\argumentmin})^2}>0$}\,.
\end{align*}
Reparameterizing then yields $\argumentmin=\argumentminO/\functionpar$ (recall the assumption that $\functionpar\neq0$) with
\begin{equation*}
  1+(1+\argumentminO)e^{-\argumentminO}=0\,.
\end{equation*}
One can verify readily the fact that exactly one such~$\argumentminO$ exists,
that is, $\argumentmin$ is unique.

We now calculate the function values for $\argumentmin$.
We find for all $\functionpar\in(0,\infty)$
\begingroup
\allowdisplaybreaks
\begin{align*}
  \functionswishF{\argumentmin}&=\functionswishF{\argumentminO/\functionpar}\tj{previous derivations}\\
&=\frac{\argumentminO/\functionpar}{1+e^{-\functionpar\cdot(\argumentminO/\functionpar)}}\tj{definition of \nameswish}\\
&=\frac{1}{\functionpar}\cdot \frac{\argumentminO}{1+e^{-1\cdot \argumentminO}}\tj{simplification}\\
&=\frac{ \functionswishOF{\argumentminO}}{\functionpar}\tj{definition of \nameswish}\,.
\end{align*}
\endgroup
We find further
\begin{align*}
  \functionswishOF{\argumentminO}&=\frac{\argumentminO}{1+e^{-\argumentminO}}\tj{definition of \nameswish}\\
&=\frac{\argumentminO}{1-1/(1+\argumentminO)}\tj{$1+(1+\argumentminO)e^{-\argumentminO}=0\Rightarrow e^{-\argumentminO}=-1/(1+\argumentminO)$} \\
&=\frac{(1+\argumentminO)\argumentminO}{1+\argumentminO-1}\tj{multiplying numerator and denominator with $1+\argumentmin)$}\\
&=\frac{(1+\argumentminO)\argumentminO}{\argumentminO}\tj{consolidating}\\
&=1+\argumentminO\tj{consolidating further}\,.
\end{align*}

We finally have to  verify that~$\argumentmin$ is indeed a minimizer of~\functionswish.
Since~\functionswish\ tends to $\infty$ and $0$ in the limits $\argument\to\infty$ and $\argument\to-\infty$, respectively (see above),
it is sufficient to show that $\functionswishF{\argumentmin}<0$.
In view of the above display,
this is equivalent to $\functionswishOF{\argumentminO}<0$,
and this fact can be easily confirmed numerically.
\end{proof}

\begin{lemma}[Output ranges of \nameswish's first derivatives]
\label{swishboundD}
The first derivative of  \nameswish\ with parameter $\functionpar=1$ is constant:
\begin{equation*}
  \functionswishZDF{\argument}=\frac{1}{2}~~~~~\text{for all}~\argument\in\R\,.
\end{equation*}
In contrast,
for all $\functionpar\in(0,\infty)$,
the graphs of the first derivatives of \nameswish\ satisfy
\begin{equation*}
 \lim_{\argument\to+\infty}\functionswishDF{\argument}=1~~~~~\text{and}~~~~~  \lim_{\argument\to-\infty}\functionswishDF{\argument}=0
\end{equation*}
  as well as
\begin{multline*}
  \argmax_{\argument\in\R}\functionswishDF{\argument}=-\argmin_{\argument\in\R}\functionswishDF{\argument}= \frac{\argumentminOP}{\functionpar}\\
\text {and}~~~ \max_{\argument\in\R}\functionswishDF{\argument}= 1-\min_{\argument\in\R}\functionswishDF{\argument}= \functionswishODF{\argumentminOP}
\end{multline*}
for $\argumentminOP\in[0,\infty)$ the unique (nonnegative) value that fulfills $2-\argumentminOP+(2+\argumentminOP)e^{-\argumentminOP}=0$.
\end{lemma}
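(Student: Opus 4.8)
The plan is to handle the three assertions in turn, leaning throughout on the explicit formulas for $\functionswishDF{\argument}$ and $\functionswishDDF{\argument}$ from Lemma~\ref{swishdiff}. For the first assertion I would specialize the explicit first derivative to $\functionpar=0$: the numerator $1+(1+\functionpar\argument)e^{-\functionpar\argument}$ becomes $2$ and the denominator $(1+e^{-\functionpar\argument})^2$ becomes $4$, so $\functionswishZDF{\argument}=2/4=1/2$ for every $\argument\in\R$. (Equivalently, \nameswish\ with $\functionpar=0$ is the map $\argument\mapsto\argument/2$.)

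For the two limits I would again use the explicit form $\functionswishDF{\argument}=\bigl(1+(1+\functionpar\argument)e^{-\functionpar\argument}\bigr)/(1+e^{-\functionpar\argument})^2$ with $\functionpar\in(0,\infty)$. As $\argument\to+\infty$, both $e^{-\functionpar\argument}$ and $(1+\functionpar\argument)e^{-\functionpar\argument}$ tend to $0$ (the exponential dominates the polynomial factor), so numerator and denominator both tend to $1$ and the quotient tends to $1$. For $\argument\to-\infty$ the expression is an $\infty/\infty$ indeterminacy, which I would resolve with l'H\^opital's rule (Lemma~\ref{res:krankenhaus}), or more directly by substituting $t=-\functionpar\argument\to+\infty$ and observing that the quotient behaves like $-t\,e^{-t}\to 0$; either way the limit is $0$.

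The third assertion is the substantive part, and the key structural tool is the pointwise symmetry $\functionswishDF{\argument}=1-\functionswishDF{-\argument}$ (Lemma~\ref{res:swishsym}; it also follows by a short computation from the explicit form after clearing denominators). This symmetry pairs maximizers with minimizers---$\argument^\star$ maximizes $\functionswishD$ exactly when $-\argument^\star$ minimizes it---and yields $\max_\argument\functionswishDF{\argument}=1-\min_\argument\functionswishDF{\argument}$, so it suffices to locate the maximizer. Since $\functionswishD$ is continuous with limits $0$ and $1$ at $\mp\infty$ and is infinitely differentiable (Lemma~\ref{swishdiff}), its interior extrema occur where $\functionswishDDF{\argument}=0$. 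Using the explicit second derivative and discarding the strictly positive factors $\functionpar e^{-\functionpar\argument}$ and $(1+e^{-\functionpar\argument})^3$, this reduces to $2-\functionpar\argument+(2+\functionpar\argument)e^{-\functionpar\argument}=0$; reparametrizing $t=\functionpar\argument$ gives the defining equation $2-t+(2+t)e^{-t}=0$ for $\argumentminOP$.

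The main obstacle---the one place needing a genuine argument rather than a computation---is showing that this equation has a \emph{unique} nonnegative root and that $\argumentminOP/\functionpar$ is the global maximizer. For uniqueness I would set $h(t)\deq 2-t+(2+t)e^{-t}$ and compute $h'(t)=-1-(1+t)e^{-t}$, which is strictly negative on $[0,\infty)$; since $h(0)=4>0$ and $h(t)\to-\infty$, there is exactly one root $\argumentminOP>0$. To confirm this critical point is the global maximum I would note $\functionswishDDF{0}=\functionpar/2>0$, so $\functionswishD$ increases at $0$ (where it equals $1/2$), rises to its unique positive critical value, then decreases to the limit $1$; by the symmetry the function stays below $1/2$ on the negative axis, so the global maximizer is $\argument=\argumentminOP/\functionpar$. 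Evaluating the explicit first derivative there gives $\functionswishDF{\argumentminOP/\functionpar}=\bigl(1+(1+\argumentminOP)e^{-\argumentminOP}\bigr)/(1+e^{-\argumentminOP})^2=\functionswishODF{\argumentminOP}$, independent of $\functionpar$, and the symmetry then yields the minimizer $-\argumentminOP/\functionpar$ and minimum $1-\functionswishODF{\argumentminOP}$, as claimed.
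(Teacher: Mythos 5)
Your proof is correct, and its skeleton is the same as the paper's: specialize the explicit formula of Lemma~\ref{swishdiff} to $\functionpar=0$ for the constant case, read the two limits off the explicit form of $\functionswishD$, reduce the extremum problem to the zero set of $\functionswishDD$ (discarding the positive prefactors), reparametrize by $t=\functionpar\argument$, and invoke the symmetry Lemma~\ref{res:swishsym}. The genuine difference is that you close two gaps the paper leaves open. First, the paper only \emph{asserts} that $\argumentminOP$ is ``the unique value'' solving $2-t+(2+t)e^{-t}=0$, whereas you prove it: with $h(t)=2-t+(2+t)e^{-t}$ one has $h'(t)=-1-(1+t)e^{-t}<0$ on $[0,\infty)$, $h(0)=4>0$, and $h(t)\to-\infty$, so there is exactly one nonnegative root. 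Second, the paper's argument stops mid-sentence (``\dots use Lemma~\ref{res:swishsym} about the symmetries of $\functionswishD$ to conclude''), never actually verifying that the critical point is the global maximizer; you complete this step via $\functionswishDDF{0}=\functionpar/2>0$, the sign of $h$ on either side of its root (which gives monotone increase up to $\argumentminOP/\functionpar$ and decrease afterwards toward the limit $1$), and the symmetry bound $\functionswishDF{\argument}\leq 1/2$ for $\argument\leq 0$. Your bookkeeping is also the consistent one: the paper's proof labels the value at $\argumentminOP/\functionpar$ a minimum and writes $\functionswishOF{\argumentminOP}$, although what is computed there is the maximum $\functionswishODF{\argumentminOP}$ claimed in the lemma statement. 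The only thing the paper has that you omit is the closed-form expression $\bigl(-2+2\argumentminOP+(\argumentminOP)^2\bigr)/(4\argumentminOP)$ for the maximum, which is not needed for the statement as given.
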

\noindent 
A numerical evaluation yields $\argumentminOP\approx2.218$ and $\functionswishODF{\argumentminOP}\approx1.098$.
The second derivatives can be treated very similarly.

\begin{proof}[Proof of Lemma~\ref{swishboundD}]
The first claim follows readily from Lemma~\ref{swishdiff}:
  \begin{align*}
    \functionswishZDF{\argument}&=   \frac{1+(1+0\cdot\argument) e^{-0\cdot \argument}}{(1+e^{-0\cdot \argument})^2}\tj{equality for \functionswishOD\ from Lemma~\ref{swishdiff}}\\
&=   \frac{1+(1+0)\cdot 1}{(1+1)^2}=\frac{1}{2}\tj{simplification}\,,
  \end{align*}
as desired.

The second and third claims follow similarly:
for all $\functionpar\in(0,\infty)$, 
we find
  \begin{align*}
    \lim_{\argument\to+\infty}\functionswishDF{\argument}&=    \lim_{\argument\to+\infty}\frac{1+(1+\functionpar \argument) e^{-\functionpar\argument}}{(1+e^{-\functionpar\argument})^2}\tj{equality for \functionswishD\ from Lemma~\ref{swishdiff}}\\
&=    \lim_{\argument\to+\infty}\frac{1+0}{(1+0)^2}=1\tj{$\functionpar>0$; $(1+\functionpar \argument) e^{-\functionpar\argument}\to 0$; $e^{-\functionpar\argument}\to 0$ for $\argument\to+\infty$}
  \end{align*}
and
  \begin{align*}
    \lim_{\argument\to-\infty}\functionswishDF{\argument}&=    \lim_{\argument\to-\infty}\frac{1+(1+\functionpar \argument) e^{-\functionpar\argument}}{(1+e^{-\functionpar\argument})^2}\tj{equality for \functionswishD\ from Lemma~\ref{swishdiff}}\\
&=    \lim_{\argument\to-\infty}\frac{\functionpar\argument e^{-\functionpar\argument}}{(e^{-\functionpar\argument})^2}\tj{$\functionpar>0$; $(1+\functionpar\argument )e^{-\functionpar\argument}\to \functionpar\argument e^{-\functionpar\argument}\ll 1$; $e^{-\functionpar\argument}\gg 1$}\\
&=    \lim_{\argument\to-\infty}\frac{\functionpar\argument }{e^{-\functionpar\argument}}=0\tj{simplification; $\functionpar\argument/e^{-\functionpar\argument}\to0$}\,,
  \end{align*}
as desired.

For the last claims,
we proceed similarly as in the proof of Lemma~\ref{swishbound}.
The critical points $\argumentminP\in\{\argument\in\R\,:\,\functionswishDDF{\argument}=0\}$ satisfy 
\begin{align*}
\functionswishDDF{\argumentminP}&=0\\
\Rightarrow~~~~ \functionpar e^{-\functionpar\argumentminP}\frac{2-\functionpar\argumentminP+(2+\functionpar\argumentminP)e^{-\functionpar\argumentminP} }{(1+e^{-\functionpar\argumentminP})^3}&=0\tj{equality for \functionswishDD\ from Lemma~\ref{swishdiff}}\\
\Rightarrow~~~~ 2-\functionpar\argumentminP +(2+\functionpar\argumentminP)e^{-\functionpar\argumentminP}&=0\tj{$\functionpar e^{-\functionpar\argumentminP},(1+e^{-\functionpar\argumentminP})^3>0$}\,.
\end{align*}

Hence, $\argumentminP=\argumentminOP/\functionpar$ with $\argumentminOP$ the unique value that satisfies
\begin{equation*}
  2-\argumentminOP+(2+\argumentminOP)e^{-\argumentminOP}=0\,.
\end{equation*}

Observe then that
\begin{align*}
  \min_{\argument\in\R}\functionswishDF{\argument}
  &=\functionswishDF{\argumentminOP/\functionpar}\tj{above results}\\
&=\frac{1+\bigl(1+\functionpar (\argumentminOP/\functionpar) \bigr) e^{-\functionpar(\argumentminOP/\functionpar)}}{(1+e^{-\functionpar(\argumentminOP/\functionpar)})^2}\tj{derivatives of \nameswish\ from Lemma~\ref{swishdiff}}\\
&=\frac{1+(1+1\cdot\argumentminOP) e^{-1\cdot\argumentminOP}}{(1+e^{-1\cdot \argumentminOP})^2}\tj{simplification}\\
&=\functionswishOF{\argumentminOP}\tj{derivatives of \nameswish\ from Lemma~\ref{swishdiff}}\,.
\end{align*}
We find further
\begingroup
\allowdisplaybreaks
\begin{align*}
  \functionswishOF{\argumentminOP}&=\frac{1+(1+\argumentminOP) e^{-\argumentminOP}}{(1+e^{-\argumentminOP})^2}\\
&=\frac{1+(1+\argumentminOP) (\argumentminOP-2)/(2+\argumentminOP)}{\bigl(1+(\argumentminOP-2)/(2+\argumentminOP)\bigr)^2}\tj{$2-\argumentminOP+(2+\argumentminOP)e^{-\argumentminOP}=0\Rightarrow e^{-\argumentminOP}=(\argumentminOP-2)/(2+\argumentminOP)$}\\
&=\frac{(2+\argumentminOP)^2+(2+\argumentminOP)(1+\argumentminOP) (\argumentminOP-2)}{\bigl((2+\argumentminOP)+(\argumentminOP-2)\bigr)^2}\tj{multiplying numerator and denominator by $(2+\argumentminOP)^2$}\\
&=\frac{(2+\argumentminOP)^2+\bigl((\argumentminOP)^2-4\bigr)(1+\argumentminOP) }{4(\argumentminOP)^2}\tj{$(v+u)(u-v)=u^2-v^2$}\\
&=\frac{4+2\argumentminOP+(\argumentminOP)^2+(\argumentminOP)^3+(\argumentminOP)^2-4\argumentminOP-4}{4(\argumentminOP)^2}\tj{expanding the terms in the numerator}\\
&=\frac{-2\argumentminOP+2(\argumentminOP)^2+(\argumentminOP)^3}{4(\argumentminOP)^2}\tj{consolidating the numerator}\\
&=\frac{-2+2\argumentminOP+(\argumentminOP)^2}{4\argumentminOP}\tj{simplifying}\,.
\end{align*}
\endgroup

We can then proceed similarly as in the proof of Lemma~\ref{swishbound}
and use Lemma~\ref{res:swishsym} about the symmetries of~\functionswishD\ to conclude
\end{proof}

\begin{lemma}[Symmetry properties of \nameswish's first derivatives]
\label{res:swishsym}
  It holds for all $\argument\in\R$ that
  \begin{equation*}
    2-\argument+(2+\argument)e^{-\argument}=0~~~\Rightarrow~~~2-(-\argument)+\bigl(2+(-\argument)\bigr)e^{-(-\argument)}=0\,.
  \end{equation*}
Moreover, 
it holds for all $\argument\in\R$ that
\begin{equation*}
    \functionswishDF{-\argument}=1- \functionswishDF{\argument}\,.
\end{equation*}
\end{lemma}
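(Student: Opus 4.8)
The plan is to prove the two assertions separately. The first is purely about the roots of the map $\argument\mapsto 2-\argument+(2+\argument)e^{-\argument}$, so I would argue by directly solving for the exponential at a root. The second is a functional identity, which I would reduce to elementary algebra by substituting the explicit formula for~\functionswishD\ from Lemma~\ref{swishdiff}.

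For the first claim, I would start from the hypothesis $2-\argument+(2+\argument)e^{-\argument}=0$ and rearrange it to $(2+\argument)e^{-\argument}=\argument-2$. Before dividing by $2+\argument$, I would note that neither $\argument=-2$ nor $\argument=2$ can be a root: at $\argument=-2$ the equation reads $4=0$, and at $\argument=2$ it would force the strictly positive quantity $(2+\argument)e^{-\argument}$ to vanish. Hence at any root $\argument\neq\pm 2$, so that $e^{-\argument}=(\argument-2)/(2+\argument)$ and therefore $e^{\argument}=(2+\argument)/(\argument-2)$. Substituting this into the reflected expression $2+\argument+(2-\argument)e^{\argument}$ and using $2-\argument=-(\argument-2)$ collapses the last term to $-(2+\argument)$, leaving $2+\argument-(2+\argument)=0$, which is exactly the claimed implication.

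For the second claim, I would invoke the explicit first derivative from Lemma~\ref{swishdiff}, namely $\functionswishDF{\argument}=\bigl(1+(1+\functionpar\argument)e^{-\functionpar\argument}\bigr)/(1+e^{-\functionpar\argument})^2$, and set $u\deq e^{-\functionpar\argument}$, so that $e^{\functionpar\argument}=1/u$. Evaluating the derivative at $-\argument$ replaces $\functionpar\argument$ by $-\functionpar\argument$ and $u$ by $1/u$; multiplying numerator and denominator by $u^2$ then gives $\functionswishDF{-\argument}=\bigl(u^2+(1-\functionpar\argument)u\bigr)/(1+u)^2$. On the other hand, expanding $(1+u)^2=1+2u+u^2$ yields $1-\functionswishDF{\argument}=\bigl((1+u)^2-1-(1+\functionpar\argument)u\bigr)/(1+u)^2=\bigl(u^2+u-\functionpar\argument u\bigr)/(1+u)^2$. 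Since the two numerators coincide, the identity $\functionswishDF{-\argument}=1-\functionswishDF{\argument}$ follows.

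I expect no deep difficulty in either part, as the work is entirely algebraic. The two places that demand care are the exclusion of $\argument=\pm 2$ in the first claim, which must be justified before dividing by $2+\argument$, and the consistent clearing of the reciprocal $1/u$ from both numerator and denominator in the second claim, where a single misplaced factor of $u$ would break the match of the two numerators.
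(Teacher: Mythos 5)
Your proposal is correct and follows essentially the same route as the paper's proof: the first claim by solving the root equation for $e^{-\argument}$ (after ruling out $\argument=\pm 2$), taking reciprocals, and verifying the reflected equation; the second by substituting the explicit formula for $\functionswishD$ from Lemma~\ref{swishdiff} and matching numerators over the common denominator. Your substitution $u=e^{-\functionpar\argument}$ and your explicit justification for excluding $\argument=\pm2$ are minor organizational refinements of the same algebra.
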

\noindent 
The first statement illustrates that the solution set $2-\argumentminOP+(2+\argumentminOP)e^{-\argumentminOP}=0$ over the entire real line is symmetric.
The second statement illustrates that the first derivatives of \nameswish\ are symmetric around the function value~$0.5$.
We can conclude,
for example, that $\min_{\argument\in\R}\functionswishDF{\argument}= \functionswishODF{-\argumentminOP}$.

\begin{proof}[Proof of Lemma~\ref{res:swishsym}]
For the first claim, we find
\begin{align*}
  2-\argument+(2+\argument)e^{-\argument}&=0\\
\Rightarrow~~~(2+\argument)e^{-\argument}&=-2+\argument\tj{adding $-2+\argument$ on both sides of the equation}\\
\Rightarrow~~~e^{-\argument}&=\frac{-2+\argument}{2+\argument}\tj{multiplying both sides by $2+\argument$ (verify that $\argument= -2$ is not a solution of the above equality)}\\
\Rightarrow~~~e^{\argument}&=\frac{2+\argument}{-2+\argument}\tj{using the reciprocals (verify that $\argument= 2$ is not a solution of the above equality either)}\\
 \Rightarrow~~~(-2+\argument)e^{\argument}&=2+\argument\tj{multiplying both sides by $-2+\argument$}\\
\Rightarrow~~~-2-\argument+(-2+\argument)e^{\argument}&=0\tj{adding $-2-\argument$ to both sides}\\
\Rightarrow~~~2+\argument+(2-\argument)e^{\argument}&=0\tj{multiplying both sides by $-1$}\\
\Rightarrow~~~2-(-\argument)+\bigr(2+(-\argument)\bigr)e^{-(-\argument)}&=0\tj{$\argument=-(-\argument)$}\,,
\end{align*}
as desired.

For the second claim,
we find
\begingroup
\allowdisplaybreaks
\begin{align*}
  \functionswishDF{-\argument}
  &=\frac{1+\bigl(1+\functionpar (-\argument)\bigr) e^{-\functionpar(-\argument)}}{(1+e^{-\functionpar(-\argument)})^2}\tj{Lemma~\ref{swishdiff} for \functionswishD}\\
&=\frac{1+(1-\functionpar \argument) e^{\functionpar\argument}}{(1+e^{\functionpar\argument})^2}\tj{simplifying}\\
&=\frac{(1+e^{\functionpar\argument})^2-(1+e^{\functionpar\argument})^2+1+(1-\functionpar \argument) e^{\functionpar\argument}}{(1+e^{\functionpar\argument})^2}\tj{adding a zero-valued term}\\
&=1-\frac{(1+e^{\functionpar\argument})^2-1-(1-\functionpar \argument) e^{\functionpar\argument}}{(1+e^{\functionpar\argument})^2}\tj{splitting the fraction into two parts}\\
&=1-\frac{1+2e^{\functionpar\argument}+e^{2\functionpar\argument}-1-e^{\functionpar\argument}+\functionpar \argument e^{\functionpar\argument}}{(1+e^{\functionpar\argument})^2}\tj{expanding the terms in the numerator}\\
&=1-\frac{e^{2\functionpar\argument}+(1+\functionpar \argument) e^{\functionpar\argument}}{(1+e^{\functionpar\argument})^2}\tj{consolidating the second term}\\
&=1-\frac{1+(1+\functionpar \argument) e^{-\functionpar\argument}}{e^{-2\functionpar\argument}(1+e^{\functionpar\argument})^2}\tj{consolidating the second term further}\\
&=1-\frac{1+(1+\functionpar \argument) e^{-\functionpar\argument}}{(1+e^{-\functionpar\argument})^2}\tj{summarizing the denominator}\\
 &=1- \functionswishDF{\argument}\tj{Lemma~\ref{swishdiff} for \functionswishD}\,,
\end{align*}
\endgroup
as desired.
\end{proof}

\newpage

\later{\section{include later}
\newcommand{\brabra}{]}
\begin{lemma}[$\operatorname{arctan}[\argument\brabra$ versus $(\operatorname{tan}[\argument\brabra)\inv$]\label{arch}
  The equality
  \begin{equation*}
    \operatorname{arctan}[\argument]\cdot \operatorname{tan}[\argument]=1
  \end{equation*}
holds for exactly two distinct values $\argument\in\R$.
\end{lemma}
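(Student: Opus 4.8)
The plan is to work on the interval $(-\pi/2,\pi/2)$, which is exactly the domain on which ${\tan}$—and hence the product ${\arctan}[\argument]\cdot{\tan}[\argument]$—is defined in our convention, so that ``values $\argument\in\R$'' means values in this interval. Set $\functionPF{\argument}\deq{\arctan}[\argument]\cdot{\tan}[\argument]$. First I would record two elementary features of $\functionP$: it is even, since ${\arctan}$ and ${\tan}$ are both odd, so $\functionPF{-\argument}=(-{\arctan}[\argument])(-{\tan}[\argument])=\functionPF{\argument}$; and $\functionPF{0}={\arctan}[0]\cdot{\tan}[0]=0\neq 1$. Consequently it suffices to count the solutions of $\functionPF{\argument}=1$ on $(0,\pi/2)$ and then double the count by symmetry, the origin contributing nothing.

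Next I would show that $\functionP$ is strictly increasing on $(0,\pi/2)$. Using the product rule together with the derivative $1/(1+\argument^2)$ of ${\arctan}$ from Lemma~\ref{diffarctan} and the derivative $1+({\tan}[\argument])^2$ of ${\tan}$ (established inside the proof of that same lemma), one obtains
\[
  \functionPDF{\argument}=\frac{{\tan}[\argument]}{1+\argument^2}+{\arctan}[\argument]\bigl(1+({\tan}[\argument])^2\bigr).
\]
On $(0,\pi/2)$ every quantity appearing here is strictly positive—${\tan}[\argument]>0$, $1+\argument^2>0$, ${\arctan}[\argument]>0$, and $1+({\tan}[\argument])^2>0$—so $\functionPDF{\argument}>0$ throughout, and $\functionP$ is therefore strictly increasing on this interval.

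Finally I would pin down the endpoint behaviour and invoke the intermediate value theorem. As $\argument\to 0^+$ we have $\functionPF{\argument}\to 0$, while as $\argument\to(\pi/2)^-$ the factor ${\arctan}[\argument]$ tends to the finite positive limit ${\arctan}[\pi/2]>0$ and ${\tan}[\argument]\to+\infty$, so $\functionPF{\argument}\to+\infty$. Since $\functionP$ is continuous and strictly increasing on $(0,\pi/2)$ with range $(0,\infty)$, the equation $\functionPF{\argument}=1$ has exactly one solution there; by evenness it has exactly one solution on $(-\pi/2,0)$ as well, and $\argument=0$ is not a solution. Hence there are exactly two solutions in total, as claimed.

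The individual computations are routine, so the only real care is conceptual. The main point is to fix the interval of definition $(-\pi/2,\pi/2)$: outside it ${\tan}$ is undefined in our setup, and on the other branches the product ceases to be monotone, so the ``exactly two'' count is genuinely specific to this interval. The second point is to verify the \emph{strict} positivity of $\functionPDF{\argument}$ rather than merely its nonnegativity, since strict monotonicity is precisely what upgrades ``at least one'' to ``exactly one'' solution on each half.
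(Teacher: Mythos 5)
Your proof is correct and follows essentially the same route as the paper's: reduce to positive arguments via the evenness of $\argument\mapsto\operatorname{arctan}[\argument]\cdot\operatorname{tan}[\argument]$, establish strict monotonicity on the positive half, and combine the intermediate value theorem with that monotonicity to get exactly one solution on each side of the origin and none at zero. The only difference is cosmetic: you verify strict monotonicity by an explicit derivative computation using Lemma~\ref{diffarctan}, whereas the paper argues structurally that the product of smooth, increasing, unbounded functions vanishing at zero retains these properties; your version is, if anything, more careful about restricting to the interval $(-\pi/2,\pi/2)$ on which $\operatorname{tan}$ is defined, which is precisely what makes the count of two valid.
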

\begin{proof}[Proof of Lemma~\ref{arch}]
The functions $\operatorname{arctan}$ and $\operatorname{tan}$ are antisymmetric:
\begin{equation*}
  \operatorname{arctan}[\argument]=-\operatorname{arctan}[-\argument]~~\text{and}~~\operatorname{tanh}[\argument]=-\operatorname{arctan}[-\argument]~~~~~~~\text{for all}~\argument\in\R\,.
\end{equation*}
Hence, the function $\argument\mapsto\operatorname{arctan}[\argument]\cdot \operatorname{tan}[\argument]$ is symmetric:
\begin{equation*}
\operatorname{arctan}[\argument]\cdot \operatorname{tan[\argument]}= \bigl(-\operatorname{arctan}[-\argument]\bigr)\cdot\bigl(-\operatorname{tan}[\argument]\bigr)=\operatorname{arctan}[-\argument]\cdot \operatorname{tan}[-\argument]~~~~~~~\text{for all}~\argument\in\R\,.
\end{equation*}
Motivates by this, we restrict ourselves to positive arguments $\argument\in[0,\infty)$ without loss of generality.

The functions $\operatorname{arctan}$ and $\operatorname{tan}$ are (i)~smooth, (ii)~increasing, (ii)~unbounded, and (iii)~$\operatorname{arctan}[0]=\operatorname{tan}[0]=0$. 
Since 
The product $\argument\mapsto\operatorname{arctan}[\argument]\cdot \operatorname{tan}[\argument]$ retains these four properties.
Hence, the intermediate value theorem ensures that 
$\operatorname{arctan}[\argument]\cdot \operatorname{tan}[\argument]$ for an $\argument\in[0,\infty)$.

We can use the above symmetry to conclude.
\end{proof}}

\end{document}